%%This is a very basic article template.
%%There is just one section and two subsections.
\documentclass{article}

% use Times
\usepackage{times}
\usepackage{amssymb}
\usepackage{amsmath}
\usepackage{amsthm}
\usepackage{bm}
\usepackage{natbib}
\usepackage{tikz}
\usepackage{tkz-euclide}
\usetkzobj{all}
\usepackage{chngcntr}
\usepackage{verbatim}

\usepackage[accepted]{icml2016}

\newcommand{\methodname}{{\textsc{dynaSAGA}}}
\newcommand{\x}{{\bm{x}}} % data point 
\newcommand{\ts}{{\bm{t}}} % vector of iterations on each sample size
\newcommand{\y}{{\bm{y}}} % data point as well, used in proof of Theorem 3
\newcommand{\X}{\mathcal{X}} % sample space 
\newcommand{\w}{{\bm{w}}} % solution or weight vector
\newcommand{\risk}{{\cal R}} % risk funtional 
\newcommand{\E}{{\mathbf{E}}} % expectation 
\newcommand{\bound}{{\cal H}} % statistical accuracy bound 
\renewcommand{\S}{{\cal S}} % sample set 
\newcommand{\T}{{\cal T}} % sample set (sub sample)
\newcommand{\Pdata}{{\mathcal P}} % data distribution
\renewcommand{\Re}{{\mathbb R}} % reals 
\newcommand{\fclass}{{\mathcal F}} % function class
\newcommand{\bigO}{O} % big O
\newcommand{\U}{{\mathbf U}} % upper bound
 % U(n,n)
\newcommand{\initerror}{\xi}

\renewcommand{\a}{\bm{a}}

\DeclareMathOperator*{\argmin}{\arg\min}

\newtheorem{theorem}{Theorem}
\newtheorem{lemma}[theorem]{Lemma}
\newtheorem{proposition}[theorem]{Proposition}
\newtheorem{corollary}[theorem]{Corollary}

\graphicspath{ {./images/} }

\author{}

\icmltitlerunning{Starting Small -- Learning with Adaptive Sample Sizes}

\begin{document}

\twocolumn[
\icmltitle{Starting Small -- Learning with Adaptive Sample Sizes}
\icmlauthor{Hadi Daneshmand}{hadi.daneshmand@inf.ethz.ch}
\icmlauthor{Aurelien Lucchi}{aurelien.lucchi@inf.ethz.ch}
\icmlauthor{Thomas Hofmann}{thomas.hofmann@inf.ethz.ch}
\icmladdress{Department of Computer Science, ETH Zurich, Switzerland}

% You may provide any keywords that you 
% find helpful for describing your paper; these are used to populate 
% the "keywords" metadata in the PDF but will not be shown in the document
\icmlkeywords{Stochastic Optimization, }

\vskip 0.3in
]

% !TEX root = adapt-ss.tex
\begin{abstract}
For many machine learning problems, data is abundant and it may be prohibitive to make multiple passes through the full training set. In this context, we investigate strategies for dynamically increasing the effective sample size, when using iterative methods such as stochastic gradient descent. Our interest is motivated by the rise of variance-reduced methods, which achieve linear convergence rates that scale favorably for smaller sample sizes. Exploiting this feature, we show -- theoretically and empirically -- how to obtain significant speed-ups with a novel algorithm that reaches statistical accuracy on an $n$-sample in $2n$, instead of $n \log n$ steps. 
\end{abstract}

% !TEX root = adapt-ss.tex

\section{Introduction}

In empirical risk minimization (ERM) \cite{vapnik1998statistical} the training set $\S$ is used to define a sample risk $\risk_\S$, which is then minimized with regard to a pre-defined function class. One effectively equates learning algorithms with optimization algorithms. However, for all practical purposes an approximate solution of $\risk_\S$ will be sufficient, as long as the optimization error is small relative to the statistical accuracy at sample size $n := |\S|$. This is important for massive data sets, where optimization to numerical precision is infeasible. Instead of performing early stopping on black-box optimization, one ought to understand the trade-offs between statistical and computational accuracy, cf.~\cite{chandrasekaran2013computational}. In this paper, we investigate a much neglected facet of this topic, namely how to dynamically control the effective sample size in optimization. 

Many  large-scale optimization algorithms are iterative: they use sampled or aggregated data to perform a sequence of update steps. This includes the popular family of gradient descent methods. Often, the computational complexity increases with the size of the training sample, e.g.~in steepest-descent, where the cost of a gradient computation scales with $n$. Does one really need a highly accurate gradient though, in particular in the early phase of optimization? Why not use subsets $\T_t \subseteq \S$ which are increased in size with the iteration count $t$, matching-up statistical accuracy with optimization accuracy in a dynamic manner? This is the general program we pursue in this paper. In order to make this idea concrete and to reach competitive results, we focus on a recent variant of stochastic gradient descent (SGD), which is known as SAGA \cite{defazio2014saga}. As we will show, this algorithm has a particularly interesting property in how its convergence rate depends on $n$.

\subsection{Empirical Risk Minimization} 

Formally, we assume that training examples $\x \in \S \subseteq \X$ have been drawn i.i.d.~from some underlying, but unknown probability distribution $\Pdata$. We fix a function class ${\cal F}$ parametrized by weight vectors $\w \in \Re^d$ and define the expected risk as $\risk(\w) :=\E f_{\x}(\w)$, where $f$ is an $\x$-indexed family of loss functions, often convex. We denote the minimum and the minimizer of $\risk(\w)$ over $\fclass$ by $\risk^*$ and $\w^*$, respectively. Given that $\Pdata$ is unknown, ERM suggests to rely on the empirical (or sample) risk with regard to $\S$
\begin{align}\label{eq:ERM}
& \risk_\S(\w) := \frac {1}{n} \sum_{\x \in \S} f_{\x}(\w), \;\; \w^*_\S := \argmin_{\w \in \fclass} \risk_{\S}(\w) \,.
\end{align}
Note that one may absorb a regularizer in the definition of the loss $f_{\x}$. 

\subsection{Generalization bounds}
The relation between $\w^*$ and $\w_\S^*$  has been widely studied in the literature on learning theory. It is usually analysed with the help of uniform convergence bounds that take the generic form  \cite{boucheron2005theory}
\begin{align}
\E_\S \left[ \sup_{\w \in {\cal F}} \left| \risk(\w) - \risk_\S(\w) \right| \right]  \leq
\bound(n)\,,
\label{eq:bound}
\end{align} 
where the expectation is over a random $n$-sample $\S$. Here $\bound$ is a bound
that depends on $n$, usually through a ratio $n/d$, where $d$ is the
capacity of $\fclass$ (e.g.~VC dimension). This \textit{fast} convergence rate
has been shown to hold for a class of strictly convex loss functions such as quadratic, and
logistic loss~\cite{bartlett2006convexity,bartlett2005local}. In the realizable
case, we may be able to observe a favorable $\bound(n) \propto d/n$, whereas in the pessimistic case, we may only be able to establish weaker bounds such as $\bound(n) \propto \sqrt{d/n}$ (e.g.~for linear function classes); see also \cite{bousquet2008tradeoffs}. We ignore additional $\log$ factors that can be eliminated using the "chaining" technique \cite{bousquet2002concentration, bousquet2008tradeoffs}.
%In our work, we will assume that an appropriate bound can be established in order to guide the sampling strategy {\it a priori}. In practice, one may also empirically determine $\bound$ based on learning curves. 

\subsection{Statistical efficiency}

Assume now that we have some approximate optimization algorithm, which given $\S$ produces solutions $\w_\S$ that  are on average $\epsilon(n)$ optimal, i.e.~$\E_\S \left[ \risk_\S(\w_\S) - \risk_\S^*\right] \le \epsilon(n)$. One can then provide the following quality guarantee in expectation over sample sets  $\S$ \cite{bousquet2008tradeoffs} 
\begin{align}
\label{eq:exprisk-bound}
 \E_\S \risk(\w_\S) - \risk^* 
 & \le \bound(n) + \epsilon(n)\,,
\end{align}
which is an additive decomposition of the expected solution suboptimality into an estimation (or statistical) error $\bound(n)$ and an optimization (or computational) error $\epsilon(n)$. For a given computational budget, one typically finds that $\epsilon(n)$ is increasing with $n$, whereas $\bound(n)$ is always decreasing. This hints at a trade-off, which may suggest to chose a sample size $m < n$. 
%This means, it can be advantageous (in terms of upper bounds) to work with an artificially reduced sample $\T \subseteq \S$, $m := |\T|$. The statistical accuracy will be worse, but this may possibly be compensated by lower approximation errors. For concreteness, Figure \ref{fig:tradeoff} shows an example plot for a relevant case that we will analyze later. 
Intuitively speaking, concentrating the computational budget on fewer data may be better than spreading computations too thinly. 

%\begin{figure}
%\highlight{Add a figure as described here.}
%\caption{\label{fig:tradeoff} Tradeoff between estimation and optimization error for the special case of $\bound(m)  \propto 1/m$ and $\epsilon(m) \propto (1-\frac 1 m)^{n/m}$ over the ranke $m=[1:n]$.}
%\end{figure}

\subsection{Stochastic Gradient Optimization} 

For large scale problems, stochastic gradient descent is a method of choice in order to optimize problems of the form given in Eq.~\eqref{eq:ERM}. Yet, while SGD update directions equal the true (negative) gradient direction in expectation, high variance typically leads to sub-linear convergence. This is where variance-reducing methods for ERM such as SAG \cite{roux2012stochastic}, SVRG \cite{johnson2013accelerating}, and SAGA \cite{defazio2014saga} come into play. We focus on the latter here, where  one can establish the following result on the convergence rate (see appendix).
\begin{lemma}
\label{lemma:saga}
Let all $f_{\x}$ be convex with  $L$-Lipschitz continuous gradients and assume that $\risk_\S$ is $\mu$-strongly convex. Then the suboptimality of the SAGA iterate $\w^t$ after $t$ steps is over a randomly sampled $\S$ bounded by
\begin{align*}
& \E_{\cal A} \left[ \risk_{\S}(\w^{t}) - \risk_{\S}^* \right] \leq \rho_n^{t}  C_{\S} , \; \; 
\rho_n = 1 - \min\left(\frac{1}{n}, \frac{\mu}{L}\right),
\end{align*}
where the expectation is over the algorithmic randomness.
\end{lemma}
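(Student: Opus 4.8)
The plan is to run the standard Lyapunov-function analysis of SAGA, specialised so that the dependence on $n$ becomes explicit. Fix the sample $\S$ with $|\S|=n$ and write $f_i := f_{\x_i}$ for $\x_i\in\S$. Recall that SAGA maintains auxiliary points $\phi_1^t,\dots,\phi_n^t$ (initialised $\phi_i^0=\w^0$) together with their stored gradients $\nabla f_i(\phi_i^t)$, at step $t$ draws $j$ uniformly from $\{1,\dots,n\}$, performs
\[
\w^{t+1}=\w^t-\eta\Big(\nabla f_j(\w^t)-\nabla f_j(\phi_j^t)+\tfrac1n\textstyle\sum_{i=1}^n\nabla f_i(\phi_i^t)\Big),
\]
and sets $\phi_j^{t+1}=\w^t$, $\phi_i^{t+1}=\phi_i^t$ for $i\neq j$. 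Since $\E_{\cal A}$ averages only over this algorithmic randomness and $C_\S$ is allowed to depend on $\S$, the draw of $\S$ never enters the argument, so it suffices to prove the bound for each fixed $\S$. Let $\w^*_\S$ minimise $\risk_\S$, so that $\tfrac1n\sum_i\nabla f_i(\w^*_\S)=0$, and introduce the potential
\[
\mathcal L^t = c\,\|\w^t-\w^*_\S\|^2 + M^t,\qquad M^t := \frac1n\sum_{i=1}^n\Big(f_i(\phi_i^t)-f_i(\w^*_\S)-\big\langle\nabla f_i(\w^*_\S),\,\phi_i^t-\w^*_\S\big\rangle\Big)\ \ge\ 0,
\]
where the inequality is convexity of the $f_i$ and $c>0$ is a constant fixed later.

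The two workhorse inequalities are (i) the co-coercivity bound for convex $L$-smooth functions, $\|\nabla f_i(\w)-\nabla f_i(\w^*_\S)\|^2\le 2L\big(f_i(\w)-f_i(\w^*_\S)-\langle\nabla f_i(\w^*_\S),\w-\w^*_\S\rangle\big)$, which I would apply at $\w=\w^t$ (yielding, after averaging over $j$, a multiple of $\risk_\S(\w^t)-\risk_\S^*$) and at $\w=\phi_j^t$ (yielding a multiple of $M^t$); and (ii) $\mu$-strong convexity of $\risk_\S$, used to bound a gradient inner product from below. Conditioning on the history through step $t$ and taking $\E_j$, I would expand $\|\w^{t+1}-\w^*_\S\|^2$: the cross term collapses by unbiasedness of the SAGA direction to $-2\eta\langle\nabla\risk_\S(\w^t),\w^t-\w^*_\S\rangle$, which (ii) bounds above by $-2\eta\big(\risk_\S(\w^t)-\risk_\S^*+\tfrac{\mu}{2}\|\w^t-\w^*_\S\|^2\big)$; and the squared term, after writing that direction as the difference of $\nabla f_j(\w^t)-\nabla f_j(\w^*_\S)$ and a conditionally mean-zero quantity and applying (i), is at most $4L\eta^2\big(\risk_\S(\w^t)-\risk_\S^*\big)+4L\eta^2 M^t$. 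Separately, a one-line computation gives $\E_j[M^{t+1}]=(1-\tfrac1n)M^t+\tfrac1n\big(\risk_\S(\w^t)-\risk_\S^*\big)$.

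Adding $c$ times the first estimate to the second and collecting terms, the (nonnegative) gap $\risk_\S(\w^t)-\risk_\S^*$ appears with net coefficient $-2c\eta+\tfrac1n+4cL\eta^2$, which is made nonpositive by taking $\eta$ small and $c$ large; one then reads off $\E_j[\mathcal L^{t+1}]\le(1-\kappa)\mathcal L^t$ where $\kappa$ is the smaller of the decay $\Theta(\eta\mu)$ of the $\|\cdot\|^2$ term and the decay $\tfrac1n-\Theta(cL\eta^2)$ of $M^t$. Taking $\eta\asymp 1/(L+\mu n)$ and $c\asymp 1/\eta$ (the explicit constants being those of the appendix) gives $\kappa\asymp\min(1/n,\mu/L)$, hence $\rho_n=1-\kappa=1-\min(1/n,\mu/L)$ with the stated constants. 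Iterating the recursion over $t$, and finally invoking smoothness once more, $\risk_\S(\w^t)-\risk_\S^*\le\tfrac{L}{2}\|\w^t-\w^*_\S\|^2\le\tfrac{L}{2c}\mathcal L^t$, yields $\E_{\cal A}[\risk_\S(\w^t)-\risk_\S^*]\le\rho_n^t\,C_\S$ with $C_\S=\tfrac{L}{2c}\mathcal L^0=\tfrac{L}{2c}\big(c\|\w^0-\w^*_\S\|^2+\risk_\S(\w^0)-\risk_\S^*\big)$, using $\phi_i^0=\w^0$.

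I expect the bookkeeping in this one-step contraction to be the main obstacle: the variance term genuinely couples $\|\w^t-\w^*_\S\|^2$ with the reservoir term $M^t$, so $c$ and $\eta$ must be tuned simultaneously so that the recursion closes in \emph{both} regimes — the ``$1/n$'' regime, where the $(1-1/n)$ refresh of the stored gradients is the bottleneck (SAG-like behaviour), and the ``$\mu/L$'' regime, where strong convexity dominates. That dichotomy is precisely what the $\min(1/n,\mu/L)$ in $\rho_n$ records; everything else reduces to the convexity/smoothness inequalities above plus summing a geometric series.
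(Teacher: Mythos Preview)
Your sketch is essentially a self-contained reconstruction of the SAGA Lyapunov analysis of \cite{defazio2014saga}, and as such is correct (the one-step contraction, the $M^t$ recursion, and the final smoothness step are all standard and valid, up to the usual bookkeeping on constants that you acknowledge). But it is considerably more work than what the paper actually does. The paper's proof is two lines: it quotes the iterate-distance bound
\[
\E_{\cal A}\big[\|\w^t-\w^*_\S\|^2\big]\le \rho_n^{\,t}\Big[\|\w^0-\w^*_\S\|^2+\tfrac{n}{\mu n+L}\big(\risk_\S(\w^0)-\langle\nabla\risk_\S(\w^*_\S),\w^0-\w^*_\S\rangle-\risk_\S^*\big)\Big]
\]
directly from \cite{defazio2014saga} as a black box, and then applies $L$-smoothness of $\risk_\S$ once to pass from $\|\w^t-\w^*_\S\|^2$ to $\risk_\S(\w^t)-\risk_\S^*$, absorbing the factor of $L$ into $C_\S$. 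Your approach buys self-containment and makes transparent \emph{why} the rate splits into the $1/n$ and $\mu/L$ regimes (the memory refresh vs.\ the strong-convexity contraction), which is exactly the phenomenon the rest of the paper exploits; the paper's approach buys brevity by outsourcing that entire calculation. Either is fine here, since the lemma is stated as a direct consequence of prior work rather than as an original contribution.
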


This highlights two different regimes: For small $n$, the condition number $\kappa := \frac{L}{\mu}$ dictates how fast the optimization algorithm converges. On the other hand, for large $n$, the convergence rate of SAGA becomes $\rho_n = 1 - \frac{1}{n}$.

\subsection{Contributions}

Our main question is: can we obtain faster convergence to a statistically accurate solution by running SAGA on an initially smaller sample, whose size is then gradually increased? Motivated by a simple, yet succinct analysis, we present a novel algorithm, called \methodname{} that implements this idea and achieves $\epsilon(n) \le \bound(n)$ after only $2n$ iterations.

% !TEX root = adapt-ss.tex

\section{Related Work}
\label{sec:related_work}

\comment{
% al: I don't think we need to talk about this
\textit{Curriculum learning} paradigm criticizes the classical learning methods
where learning process starts from the whole training
set~\cite{bengio2009curriculum}.
Indeed, a learning machine, same as a human, requires to learn simple concepts
firstly and then it could learn more complicated concepts better. From optimization
perspective, curriculum learning is a generalization of \textit{continuation
methods} \cite{allgower1993continuation} for non-convex optimization and
improves generalization in non-convex setting \cite{bengio2009curriculum}. Here
our focus is computational aspect of learning for large scale data sets. We
indeed propose a curriculum to improve computational complexity of a learning
method.
}

Stochastic approximation is  a powerful tool for minimizing objective Eq.~\eqref{eq:ERM} for convex loss functions. The pioneering work of~\cite{robbins1951stochastic} is essentially a streaming SGD method where each observation is used only once. Another major milestones has been the idea of iterate averaging \cite{polyak1992acceleration}. A thorough theoretical analysis of asymptotic convergence of SGD can be found in \cite{kushner2003stochastic}, whereas some non-asymptotic results have been presented in \cite{moulines2011non}.

A line of recent work known as variance-reduced SGD, e.g.
\cite{roux2012stochastic, shalev2013stochastic, johnson2013accelerating,
defazio2014saga,defazio2014finito,konevcny2013semi,zhang2013linear}, has
exploited the finite sum structure of the empirical risk to establish linear convergence for strongly convex objectives and also a
a better convergence rate for purely convex objectives~\cite{mahdavi2013mixed}.
There is also evidence of slightly improved statistical efficiency \cite{babanezhad2015stop}. \cite{frostig15} provides a non-asymptotic analysis of a streaming SVRG algorithm (SSVRG), for which a convergence rate approaching that of the ERM is established.
% Surprisingly, this algorithm reaches statistical efficiency in a single pass over the data. 

There have also been related data-adaptive sampling approaches, e.g. in the context of unsupervised learning~\cite{lucic2015tradeoffs} or for non-uniform sampling of data points~\cite{schmidt2013minimizing,HeT15} with the goal of sampling important data points more often. This direction is largely orthogonal to
our dynamic sizing of the sample, which is purely based on random subsampling.
Our sampling strategy is instead based on revisiting samples which has also been explored in~\cite{wang2016reducing} to empirically improve the convergence of certain variance-reduced methods.

% !TEX root = adapt-ss.tex

\section{Methodology}

\subsection{Setting and Assumptions} 

We work under the assumptions made in Lemma \ref{lemma:saga} and focus on the large data regime, where $n \geq \kappa$ and the geometric rate of convergence of SAGA depends on $n$ through $\rho_n  = 1 - 1/n$.  This is an interesting regime as the guaranteed progress per update is larger for smaller samples. 
% , i.e.~$\mu$-strong convexity of $\risk_\S$ and $L$-continuity of the gradients of the loss $f_{\x}$ for all $\x \in {\cal X}$. 
% Note that this means that the specific value of $\mu$ may implicitly depend on the chosen sample set $\S$.  We 

This form of $\rho_n$ implies for the case of performing $t=n$  iterations, i.e.~performing one pass\footnote{The SAGA analysis holds for i.i.d.~sampling, so strictly speaking this is not a pass, but corresponds to $n$ update steps.}:
\begin{align}
\label{eq:saga-accuracy}
\E_{\cal A} \left[ \risk_{\S}(\w^{n}) - \risk_{\S}^* \right] \leq \left(1-\frac 1n \right)^{n} C_{\S} \leq \frac{C_{\S}}{e} \,.
\end{align}
So we are guaranteed to improve the solution suboptimality on average by a factor $1/e$ per pass. This in turn implies that in order to get to a guaranteed accuracy $O(n^{-\alpha})$, we need $O(\alpha n \log n)$ update steps. 

\begin{figure}
    \begin{center}
        \begin{tikzpicture}[thick]
        \draw[->] (-3.5,-2) -- (-3.5,2);
	  \draw[->] (-3.5,-2) -- (4.2,-2) node[below] {$m$};
	  \draw[scale=1,domain=-3.5:4,smooth,variable=\x,green, thick, smooth] plot
  ({\x},{1.5/(\x+4)-2});
   \draw[scale=1,domain=-3.5:4,smooth,variable=\x,blue, thick, smooth]
   plot ({\x},{(\x+4)/(4)-2}) ; 
   \draw[scale=1,domain=-3.5:4,smooth,variable=\x,black, thick, smooth]
   plot ({\x},{(\x+4)/(4)+1.5/(\x+4)-2}) ;
   \node (A) at (3,-0.5) {$\epsilon(m)$};
   \node (B) at (3,-1.5) {$\bound(m)$};
   \node (C) at (-1.55,-2.2) {$m^*$};
   \node (D) at (-1.55,-0.7) {};
   \node (E) at (-1.55,-0.3) {$\epsilon(m^*) + \bound(m^*)$};
   \draw[dashed] (C) -- (D);
   %       \draw (1,-3) ellipse (4cm and 0.9cm);
   %       \draw (0,-3) ellipse (3cm and 0.8cm);
   %       \draw (-1,-3) ellipse (2cm and 0.6cm);
   %       \draw (-2,-3) ellipse (1cm and 0.4cm);
    \end{tikzpicture}
    \end{center}
    \caption{Tradeoff between sample statistical accuracy term $\bound(m)$ and
    optimization suboptimality $\epsilon(m)$ using sample size $m<n$. Note that $\epsilon(m)$ is drawn by taking the first order approximation of the upper bound $C e^{-\frac n m}$. Here, $m^* = \bigO(n/\log n)$ yields the best balance between these two terms.}
  \label{fig:statsitcal_evolving}
\end{figure}
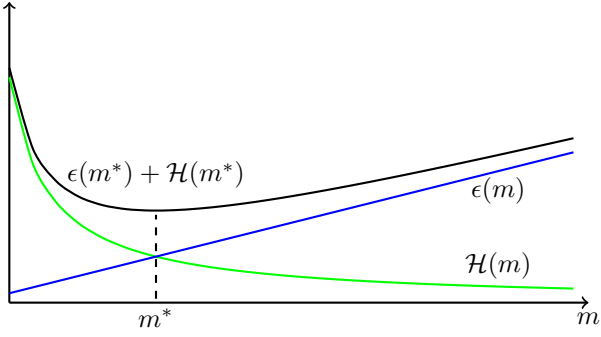

\subsection{Sample Size Optimization} 
\label{sec:sample_size_opt}

For illustrative purposes, let us use the above result to select a sample size for SAGA, which yields the best guarantees. 
\begin{proposition}
\label{proposition:optsample}
Assume $\bound(m) = D/m$ and $n$ is given. Define $C$ to be an upper-bound on $C_\S, \forall \S$ (from Lemma \ref{lemma:saga}), then  for $m \geq \kappa$, $V(m) := \frac{D}{m} + C e^{-\frac n m}$ provides a bound on the expected suboptimality of SAGA. It is minimized for the choice 
\begin{align*}
m^* =  \max\left\{ \kappa, \frac{n}{\log n + \log \frac CD}\right\} \,.
\end{align*}
\begin{proof} 
The first claim follows directly from the assumptions and Lemma \ref{lemma:saga}. Moreover the tightest bound is obtained by differentiating $V$ with regard to $1/m$ and solving for $m$ (see Lemma~\ref{lemma:diffV} in appendix). 
\end{proof} 
\end{proposition}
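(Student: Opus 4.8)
The plan is to handle the two assertions separately: the first is a direct bookkeeping combination of Lemma~\ref{lemma:saga} with the error decomposition \eqref{eq:exprisk-bound}, and the second is a one-dimensional minimization that becomes transparent after passing to the variable $u = 1/m$, with a little care needed at the boundary $m = \kappa$ of the admissible range.

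For the first claim, fix a sample size $m \ge \kappa$, draw a uniform subsample $\S_m \subseteq \S$ of size $m$ (which is again an i.i.d.\ $m$-sample, so \eqref{eq:exprisk-bound} applies with $\S_m$ in place of $\S$), and run SAGA on $\S_m$ for $t = n$ steps. Since $m \ge \kappa = L/\mu$ we have $1/m \le \mu/L$, so the rate in Lemma~\ref{lemma:saga} is $\rho_m = 1 - 1/m$, and therefore
\begin{align*}
\E_{\cal A}\!\left[\risk_{\S_m}(\w^{n}) - \risk_{\S_m}^*\right] \le \left(1-\tfrac1m\right)^{n} C_{\S_m} \le C\, e^{-n/m} =: \epsilon(m),
\end{align*}
using $(1-1/m)^n \le e^{-n/m}$ and $C_{\S_m} \le C$. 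Thus the SAGA output is on average $\epsilon(m)$-optimal on $\S_m$ in the sense of Section~1.4, and substituting this $\epsilon(m)$ together with the assumed statistical bound $\bound(m) = D/m$ into \eqref{eq:exprisk-bound} gives $\E_{\S_m,{\cal A}}\!\left[\risk(\w^{n})\right] - \risk^* \le D/m + C e^{-n/m} = V(m)$, which is the first assertion.

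For the minimization, set $u := 1/m$ and $\tilde V(u) := Du + C e^{-nu}$, so the admissible range $m \ge \kappa$ becomes $u \in (0, 1/\kappa]$. Since $\tilde V''(u) = C n^2 e^{-nu} > 0$, the function $\tilde V$ is strictly convex; its unconstrained stationary point solves $D = C n\, e^{-nu}$, i.e.\ $u^\star = \tfrac1n\log\tfrac{Cn}{D} = \tfrac1n\big(\log n + \log(C/D)\big)$, corresponding to $m^\star_{\mathrm{unc}} = n/(\log n + \log(C/D))$; this is the content of Lemma~\ref{lemma:diffV}. Because $\tilde V$ is convex with minimum at $u^\star$, it is nonincreasing on $(0,u^\star]$, so on the admissible set $(0,1/\kappa]$ the minimizer is $u^\star$ when $u^\star \le 1/\kappa$ (equivalently $\kappa \le m^\star_{\mathrm{unc}}$) and is the endpoint $1/\kappa$ otherwise. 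Translating back to $m$ yields $m^* = \max\{\kappa,\ n/(\log n + \log(C/D))\}$.

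The computation is elementary, so there is no real obstacle; the one point that deserves attention is the interaction between the $\min(1/n,\mu/L)$ in Lemma~\ref{lemma:saga} and the admissible set, which is precisely why the statement restricts to $m \ge \kappa$ -- only there is the rate cleanly $1-1/m$ and $V$ the relevant bound. A minor caveat worth flagging is that $m$ is integer-valued while the argument treats it as real; since $V$ is used only as an upper bound and is continuous in $m$, the relaxation (and the implicit rounding of $m^*$) is harmless.
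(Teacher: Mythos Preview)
Your proof is correct and follows essentially the same approach as the paper: the first claim is obtained by combining Lemma~\ref{lemma:saga} (in the regime $m\ge\kappa$, where $\rho_m=1-1/m$) with the decomposition \eqref{eq:exprisk-bound}, and the minimization is carried out by the change of variable $u=1/m$ and differentiation, exactly as in Lemma~\ref{lemma:diffV}. Your treatment is in fact slightly more careful than the paper's: you verify convexity via $\tilde V''(u)>0$ and explicitly handle the constraint $m\ge\kappa$ to justify the $\max\{\kappa,\cdot\}$, whereas the paper simply appeals to Lemma~\ref{lemma:diffV} (which minimizes over $0<m\le n$) and leaves the role of the lower constraint implicit.
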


%Differentiate with regard to $1/m$ to get the condition
%\begin{align*}
%&  \left(1-\frac{1}{m} \right)   = \left( \frac D{nC} \right)^{\frac 1 {n-1}}\\
%& m^{-1} = 1 - \exp \left[ - \frac{\log n + \log \left( \frac{C}{D} \right)} {(n-1)} \right] 
%\end{align*}
%\highlight{Hadi:} 
%Let's do in this way: 
%\begin{equation*}
% V(m) \leq \frac{D}{m} + \exp(-n/m) C_m \leq
% \underbrace{\frac{D}{m}}_{\bound(m)} +
% \underbrace{\frac{m C_m}{n}}_{\epsilon(m)}
%\end{equation*}
%Now, take the derivative with respect to $m$ that concludes: 
%\begin{equation*}
%	m^* = \sqrt{\frac{n D}{C_m}}
%\end{equation*}
%In this way we don't directly on the bound, instead we decide based on its
%upper-bound. 

The result implies that we will perform roughly $\log n + \log \frac CD$ epochs on the optimally sized sample. Also the value of the bound is (for simplicity, assuming $C=D$)
\begin{align}
V(m^*) = \frac {\log n }{n} +  \frac{1}{n}  \leq V(n) = \frac{1}{n} + \frac 1 e \,,
\end{align} 
showing that the single pass approximation error on the full sample is too large (constant), relative to the statistical accuracy.

\subsection{Dynamic Sample Growth} 

As we have seen, optimizing over a smaller sample can be beneficial (if we believe the significance of the bounds). But why chose a single sample size once and for all? A smaller sample set seems advantageous early on, but as an optimization algorithm approaches the empirical minimizer, it is hit by the statistical accuracy limit. This suggests that we should dynamically increment the size of the sample set. We illustrate this idea in Figure \ref{fig:path}. In order to analyze such a dynamic sampling scheme, we need to relate the suboptimality on a sub-sample $\T$ to a suboptimality bound on $\S$. We establish a basic result in the following theorem. 
\begin{theorem} 
\label{theorem:basic}
Let $\w$ be an $(\epsilon,\T)$-optimal solution, i.e.~$\risk_\T(\w) - \risk_\T^* \le \epsilon$, where $\T \subseteq \S$, $m:=|\T|$, $n := |\S|$. Then the suboptimality of $\w$ for $\risk_\S$ is bounded w.h.p.~in the choice of $\T$ as:
\begin{align}
\E_{\S} \left[ \risk_\S(\w) - \risk_\S^*\right] \le \epsilon + \frac{n-m}{n} \bound(m) \,.
\end{align}
\begin{proof}
% where the expectation is over random choice of an $n$-sample $\S$.
 Consider the following equality
\begin{align*}
\risk_\S(\w) - \risk_\S^* = \risk_\S(\w) 
\stackrel{(1)}{\mp} \risk_\T(\w) 
\stackrel{(2)}{\mp} \risk^*_\T
\stackrel{(3)}{-} \risk^*_\S
\end{align*}
We bound the three involved differences (in expectation) as follows:
(2): $\risk_\T(\w)  - \risk^*_\T \le \epsilon$ by assumption.  (3): $\E_\S \left[ \risk_\T(\w_\T^*) - \risk_\S(\w_\S^*) \right] \le 0$ as $\T \subseteq \S$.  For (1) we apply the bound (see Lemma \ref{lemma:step-in-theorem} in the appendix)
\begin{align*}
\E_{\S|\T} \left[ \risk_\S(\w) - \risk_\T(\w) \right] \le \frac{n- m}{n} | \risk(\w) - \risk_\T(\w) |\,.
\end{align*}
Moreover 
\begin{align*}
\E_{\T} \left[ \risk(\w) \! - \! \risk_\T(\w) \right] & \le
\sup_{\w'} | \risk(\w') \! - \! \risk_\T(\w') | \le  \bound(m)
\end{align*}
by Eq.~\eqref{eq:bound}, which concludes the proof.
\end{proof} 
\end{theorem}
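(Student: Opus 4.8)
The plan is to use a telescoping decomposition into three differences and bound each in expectation:
\begin{align*}
\risk_\S(\w) - \risk_\S^* = \bigl( \risk_\S(\w) - \risk_\T(\w) \bigr) + \bigl( \risk_\T(\w) - \risk_\T^* \bigr) + \bigl( \risk_\T^* - \risk_\S^* \bigr) .
\end{align*}
The middle difference is at most $\epsilon$ by the $(\epsilon,\T)$-optimality hypothesis, and needs nothing further. The last difference has nonpositive expectation because the expected empirical optimum is monotone in the sample size: $\risk_\S$ is the average of its $n$ leave-one-out empirical risks, so $\risk_\S^*$ is at least the average of those sub-optima, and iterating this bound from $n$ down to $m$ (all the sub-samples being i.i.d.) gives $\E[\risk_\T^*] \le \E[\risk_\S^*]$; equivalently, adopt the coupling in which $\T$ is a uniform random size-$m$ subset of $\S$ and apply Jensen to the concave map $\min$.

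The substantive step is the first difference, and the key idea is to expose the factor $\frac{n-m}{n}$ through the convex combination $\risk_\S = \frac{m}{n}\risk_\T + \frac{n-m}{n}\risk_{\S\setminus\T}$, which gives
\begin{align*}
\risk_\S(\w) - \risk_\T(\w) = \frac{n-m}{n}\bigl( \risk_{\S\setminus\T}(\w) - \risk_\T(\w) \bigr) .
\end{align*}
I would work in the sampling model where $\S$ is $\T$ augmented by $n-m$ fresh i.i.d.\ draws from $\Pdata$; since $\w$ is a function of $\T$ alone and hence independent of the added points, $\E_{\S|\T}[\risk_{\S\setminus\T}(\w)] = \risk(\w)$, so
\begin{align*}
\E_{\S|\T}\bigl[ \risk_\S(\w) - \risk_\T(\w) \bigr] = \frac{n-m}{n}\bigl( \risk(\w) - \risk_\T(\w) \bigr) \le \frac{n-m}{n}\,\bigl| \risk(\w) - \risk_\T(\w) \bigr| .
\end{align*}
Then I would dominate the right-hand side pointwise by $\frac{n-m}{n}\sup_{\w'}\bigl|\risk(\w') - \risk_\T(\w')\bigr|$, take the expectation over $\T$, and invoke the uniform-convergence assumption \eqref{eq:bound} to bound it by $\frac{n-m}{n}\bound(m)$. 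Summing the three bounds yields $\E_\S[\risk_\S(\w) - \risk_\S^*] \le \epsilon + \frac{n-m}{n}\bound(m)$. The one computation worth isolating as an auxiliary lemma is the displayed inequality $\E_{\S|\T}[\risk_\S(\w)-\risk_\T(\w)] \le \frac{n-m}{n}|\risk(\w)-\risk_\T(\w)|$.

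The main obstacle is nothing deep but rather careful conditioning: the identity $\E[\risk_{\S\setminus\T}(\w)] = \risk(\w)$ relies on $\w$ being independent of the held-out points, so one must commit to a sampling model in which $\w$ is measurable with respect to $\T$ while $\S\setminus\T$ is a fresh independent sample, and then use the tower rule to glue the inner conditional bound to the outer expectation over $\T$ (the same care underlies the monotonicity argument for the third difference). A secondary, cosmetic point is that \eqref{eq:bound} is an \emph{expected}-supremum bound, so the natural statement is the in-expectation one written in the theorem; literally upgrading it to hold ``w.h.p.\ in $\T$'' would require a concentration counterpart of \eqref{eq:bound} (e.g.\ via bounded differences), which is routine but not a formal consequence of \eqref{eq:bound} alone.
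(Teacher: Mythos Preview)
Your proposal is correct and follows essentially the same route as the paper: the identical three-term telescoping, the same bound on the middle term by hypothesis, the same nonpositivity for the third term, and the same key identity $\risk_\S(\w)-\risk_\T(\w)=\tfrac{n-m}{n}(\risk_{\S\setminus\T}(\w)-\risk_\T(\w))$ followed by $\E_{\S|\T}[\risk_{\S\setminus\T}(\w)]=\risk(\w)$ and the uniform-convergence bound; the auxiliary inequality you single out is exactly the paper's Lemma~\ref{lemma:step-in-theorem}. Your remarks on the need for $\w$ to be $\T$-measurable and on the gap between the in-expectation bound~\eqref{eq:bound} and the ``w.h.p.'' phrasing are valid and, if anything, more careful than the paper's own presentation.
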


In plain English, this result suggests the following: If we have optimized $\w$ to $(\epsilon,\T)$ accuracy on a sub-sample $\T$ and we want to continue optimizing on a larger sample $\S \supseteq \T$, then we can bound the suboptimality on $\risk_\S$ by the same $\epsilon$ plus an additional "switching cost" of $(n-m)/n \cdot \bound(m)$.

\begin{figure}
    \begin{center}
    \end{center}
         \begin{tikzpicture}[thick]
        \foreach \y in {0,...,1}
         {
          \draw[thin] (-2.5,4-\y/2) -- (-2,4-\y/2);
         }
         \foreach \y in {0,...,1}
         {
          \draw[thin] (-2.5,3.5-\y/4) -- (-2,3.5-\y/4);
         }
         \draw (-3,3) -- (-1.5,3) ;
         \node at (-2.5,2.7) {$\bound(n/4)$};
         \foreach \y in {0,...,3}
         {
          \draw[thin] (-1,4-\y/3) -- (-0.5,4-\y/3);
         }
         \foreach \y in {1,...,6}
         {
          \draw[thin] (-1,3-\y/6) -- (-0.5,3-\y/6);
         }
         \draw (-1.5,2) -- (0,2) ;
         \node at (-1,1.7) {$\bound(n/3)$};
         \foreach \y in {0,...,4}
         {
          \draw[thin] (0.5,4-\y/4) -- (1,4-\y/4);
         }
         \foreach \y in {1,...,8}
         {
          \draw[thin] (0.5,3-\y/8) -- (1,3-\y/8);
         }
          \foreach \y in {1,...,12}
         {
          \draw[thin] (0.5,2-\y/12) -- (1,2-\y/12);
         }
         \draw (0,1) -- (1.5,1) ;
         \node at (0.5,0.7) {$\bound(n/2)$};
          \foreach \y in {0,...,5}
         {
          \draw[thin] (2,4-\y/5) -- (2.5,4-\y/5);
         }
           \foreach \y in {1,...,10}
         {
          \draw[thin] (2,3-\y/10) -- (2.5,3-\y/10);
         }
          \foreach \y in {1,...,15}
         {
          \draw[thin] (2,2-\y/15) -- (2.5,2-\y/15);
         }
           \foreach \y in {1,...,20}
         {
          \draw[thin] (2,1-\y/20) -- (2.5,1-\y/20);
         }
         \draw (1.5,0) -- (3,0) ;
         \node at (2,-.2) {$\bound(n)$};
         \draw[->,blue, ultra thick] (-2.25,4) -- (-2.25,3.5) -- (-0.75,3.5)
         -- (-0.75,2.66) -- (0.75,2.66) --(0.75,1.5) -- (2.25,1.5)
         -- (2.25,0);
         \draw[->] (-3.5,-0.5) -- (-3.5,4);
         \node at (-4,2) {\rotatebox{90}{$\risk(\w)$}};
         \draw[->] (-3.5,-0.5) -- (3.5,-0.5);
         \node at (0,-0.8) {sample size};
    \end{tikzpicture}
    \caption{Illustration of an optimal progress path via sample size adjustment. The vertical black lines show the progress made at each step, thus illustrating the faster convergence for smaller sample size.}
  \label{fig:path}
\end{figure}
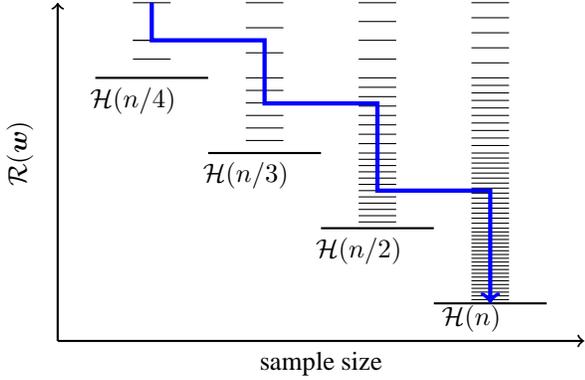

% !TEX root = adapt-ss.tex

\section{Algorithms \& Analysis} 

\subsection{Computational Limited Learning}

The work of~\cite{bottou2010large} emphasized that for massive data sets the limiting factor of any learning algorithm will be its computational complexity $T$, rather than the number of samples $n$. For SGD this computational limit typically translates into the number of stochastic gradients evaluated by the algorithm, i.e.~$T$ becomes the number of update steps. One obvious strategy with abundant data is to sample a new data point in every iteration. There are asymptotic results establishing bounds for various SGD variants in \cite{bousquet2008tradeoffs}. However, SAGA and related algorithms rely on memorizing past stochastic gradients, cf.~\cite{hofmann2015variance}, which makes it beneficial to revisit data points, and which is at the root of results such as Lemma~\ref{lemma:saga}. This leads to a qualitatively different behavior and our findings indicate that indeed, the trade-offs for large scale learning need to be re-visited, cf.~Table~\ref{table:compare_with_baselines}.

\begin{table}[t]
\caption{Comparison of obtained bounds for different SAGA variants when performing $T \ge \kappa$ update steps.}
\label{table:compare_with_baselines}
\vskip 0.15in
\begin{center}
\begin{small}
\begin{tabular}{lcc} 
\hline
\abovespace
\textsc{Method} & \textsc{Optimization error} & \textsc{Samples}
\belowspace
\\ \hline
\abovespace
\textsc{SAGA} \text{(one pass)}&  const. & $T$\\ 
\textsc{SAGA} \text{(optimal size)}& $\bigO(\log T \cdot \bound(T))$ & $T/\log T$\\
\methodname& $\bigO(\bound(T))$  & $T/2$ 
% \\  \textsc{SGD} \text{\cite{bousquet2008tradeoffs}}& $\kappa^2 v \bound(T)$  \belowspace 
\\ 
\hline
\end{tabular}
\end{small}
\end{center}
\vskip -0.1in
\end{table}

\subsection{SAGA with Dynamic Sample Sizes} We  suggest  to modify SAGA to work with a dynamic sample size schedule. Let us define a \textit{schedule} as a monotonic function $M: {\mathbb Z}_+ \to {\mathbb Z}_+$, where $t$ is the iteration number and $M(t)$ the effective sample size used at $t$. We assume that a sequence of data points ${\bf X}= (\x_1,\dots,\x_n)$ drawn from $P$ is given such that $M$ induces a nested sequence of samples $\T_t := \{ \x_i: 1 \leq i \leq M(t) \}$.

{\methodname} generalizes SAGA \cite{defazio2014saga} in that it samples data points non-uniformly at each iteration. Specifically, for a given schedule $M$ and iteration $t$, it samples uniformly from $\T_t$, but ignores $\X - \T_t$. The pseudocode for {\methodname} is shown in Algorithm \ref{alg:dynasaga}.
%%%%%%%%%%%%%%%%%%%%%%%%% ALGORITHM %%%%%%%%%%%%%%%%%%%%%%%%%%%%%%%%%%%%%%%%%%%%
 \begin{algorithm}[tb]
   \caption{\methodname}
   \label{alg:dynasaga}
\begin{algorithmic}[1]
   \STATE {\bfseries Input:} \\
   $\quad$ training examples $\X = (\x_1,\x_2,\ldots,\x_n)$, $\x_i \sim P$\\
   $\quad$ total number of iterations $T$ (e.g.~$T= 2n$)\\
   $\quad$ starting point $\w_0 \in \Re^d$ (e.g~$\w_0 = {\bf 0}$) \\
   $\quad$ learning rate $\eta > 0$ (e.g.~$\eta = \frac{1}{4L}$) \\
   $\quad$ sample schedule $M: [1:T] \to [1:n]$ 
%   \STATE {\bfseries Initialization:} \\
   \STATE $\w \leftarrow \w_0$
   \FOR{$i=1,\dots,n$}
    \STATE $\alpha_i  \leftarrow \nabla f_{\x_i}(\w_0)$    \COMMENT{can also be done on the fly}
   \ENDFOR
   \FOR{$t=1,\dots,T$}
   \STATE sample $\x_i \sim \text{Uniform}(\x_1,\dots,\x_{M(t)})$
   \STATE $g \leftarrow \nabla f_{\x_i}(\w_{t-1})$
   \STATE $A \leftarrow  \sum_{j=1}^{M(t)} \alpha_j / M(t)$ \COMMENT{can be done incrementally}
   \STATE $\w_t \leftarrow \w_{t-1} - \eta \left( g - \alpha_i+ A \right)$
   \STATE $\alpha_i \leftarrow g$
   \ENDFOR
\end{algorithmic}
\caption{{\methodname}}
\end{algorithm}
	
\subsection{Upper Bound Recurrence} 
Assume we are given a stochastic optimization method that
guarantees a geometrical decay at each iteration, i.e. 
\begin{equation} \label{eq:conv_assumed}
	\E_{\cal A} \left[ \risk_\S(\w^t) - \risk_\S^* \right] \leq \rho_n \left[
	\risk_\S(\w^{t-1}) - \risk_\S^*\right]
\end{equation}
where $|\S| = n$ and expectation is over randomness of optimization
process.~\footnote{Note that this assumption is slightly stronger than Lemma~\ref{lemma:saga} but it leads to a much simpler proof technique.}
For acceleration, we pursue the strategy
of using the basic inequalities obtained so far and to stitch them together in the form of a recurrence. At any iteration $t$ we allow ourselves the choice to augment the current sample of size $m$ by some increment $\triangle m \ge 0$. We define an upper bound function $\U$ as follows
\begin{align} 
\label{eq:recursion}
	\U(t,n) = \min
\left\{
	\begin{array}{ll}
		 \rho_{n} \U(t-1,n)\\
		\underset{m < n}{\min} \bigg[\U(t,m) + \frac{n - m}{n}
		\bound(m) \bigg],
	\end{array}
\right.
\end{align}
such that $\U(0,m) = \initerror$, where the initial error $\initerror$ is defined as: 
\begin{align}
\initerror := \frac{4 L}{\mu} \left[\risk(\w^0) - \risk(\w^*)\right].
\end{align}
We refer the reader to Lemma~\ref{lemma:initial_error} in the Appendix for further details on how to derive the expression for $\initerror$.

The construction of Eq.~\eqref{eq:recursion} is motivated by the following result: 
\begin{proposition}
\label{prop:ubound} 
W.h.p.~over the random $n$-sample $\X$, the iterate sequence $\w^t$ generated by {\methodname}  fulfils 
\begin{align*}
\E_{\X} \left[ \risk_{\T_n}(\w^t) - \risk_{\T_n}^* \right]  \le \U(t,n)\,. 
\end{align*}
\begin{proof}
By induction over $t$. The result for $t=0$ follows directly from
Lemma~\ref{lemma:initial_error}. The first case in Eq.~\eqref{eq:recursion} for
the induction step (fixed sample size) follows from Eq.~\eqref{eq:conv_assumed}.
The second case holds by virtue of Theorem \ref{theorem:basic} for any $m$, hence also for the minimum.
\end{proof}
\end{proposition}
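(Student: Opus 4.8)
The plan is to prove the inequality by induction on the iteration index $t$, following the two branches of the recurrence~\eqref{eq:recursion}. The base case $t=0$ requires $\E_{\X}\left[\risk_{\T_n}(\w^0)-\risk_{\T_n}^*\right]\le\initerror$ for every relevant sample size, which is exactly the content of Lemma~\ref{lemma:initial_error}: it converts the function-value gap $\risk(\w^0)-\risk(\w^*)$ into a gap on $\risk_{\T_n}$ by combining $L$-smoothness (an upper bound on $\risk_{\T_n}(\w^0)-\risk_{\T_n}^*$ in terms of $\|\w^0-\w^*_{\T_n}\|^2$) with $\mu$-strong convexity (a quadratic lower bound), which is where the factor $4L/\mu$, and the matching step size $\eta=1/(4L)$, come from.

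For the induction step fix $t\ge 1$ and assume the claimed bound holds for every pair referenced by the recurrence at $(t,n)$, namely $(t-1,n)$ and $(t,m)$ for all $m<n$; the induction is well founded because these pairs are lexicographically smaller, so it should be run over $t$ and, within each $t$, over $n$. In the first case (effective sample size unchanged between $t-1$ and $t$), I condition on $\w^{t-1}$ and take expectation over the algorithmic randomness of step $t$; assumption~\eqref{eq:conv_assumed} gives a per-step contraction by $\rho_n$, and taking the outer expectation over $\X$ and the earlier randomness, substituting the induction hypothesis at $(t-1,n)$, and using $\rho_n\ge 0$ yields $\E_{\X}\left[\risk_{\T_n}(\w^t)-\risk_{\T_n}^*\right]\le\rho_n\,\U(t-1,n)$. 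In the second case (sample enlarged from some $m<n$ to $n$), the induction hypothesis at $(t,m)$ says $\w^t$ is $(\U(t,m),\T_m)$-optimal in expectation, so Theorem~\ref{theorem:basic} with the roles $\T=\T_m\subseteq\T_n=\S$ bounds its suboptimality on $\risk_{\T_n}$ by $\U(t,m)+\frac{n-m}{n}\bound(m)$; since this holds for every $m<n$, it holds for the minimizing $m$. In either case $\E_{\X}\left[\risk_{\T_n}(\w^t)-\risk_{\T_n}^*\right]$ is at most the relevant branch of~\eqref{eq:recursion}, hence at most their minimum $\U(t,n)$, closing the induction.

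The main obstacle I anticipate is not any single estimate but the bookkeeping of the several layers of randomness together with the ``w.h.p.'' qualifier. Theorem~\ref{theorem:basic} carries an expectation over $\S$ but only a high-probability guarantee in the draw of the sub-sample; since here the sub-samples $\T_m$ are nested prefixes of a single random sequence $\X$, I would make precise that the failure events of the underlying uniform-convergence bound~\eqref{eq:bound} for the at most $n$ prefix lengths can be gathered by a union bound and absorbed into the ``w.h.p.'' statement of the proposition. A secondary subtlety worth flagging is that $\U(t,n)$ is a minimum over \emph{all} admissible size schedules, so the statement is really about {\methodname} run with the schedule $M$ that realizes those minima (the optimal policy of the recurrence); for any other fixed schedule the identical induction gives only the weaker bound obtained by dropping the inner minimization. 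Finally, applying~\eqref{eq:conv_assumed} after conditioning on $\w^{t-1}$ is legitimate precisely because that assumption is a pointwise-in-iterate contraction, which is the respect in which it is stronger than Lemma~\ref{lemma:saga}.
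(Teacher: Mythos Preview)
Your proposal is correct and follows essentially the same route as the paper: induction with base case supplied by Lemma~\ref{lemma:initial_error}, the first branch of~\eqref{eq:recursion} handled via the per-step contraction~\eqref{eq:conv_assumed}, and the second branch via Theorem~\ref{theorem:basic}. Your treatment is in fact more careful than the paper's sketch---you make explicit the lexicographic induction in $(t,n)$ needed because the second branch references $(t,m)$ at the same $t$, and you correctly flag that the high-probability qualifier must absorb a union bound over the nested prefixes and that the bound is really attained by the schedule realizing the minima in~\eqref{eq:recursion}.
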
 

%\begin{corollary}
%Under the same assumptions as before
%\begin{align*}
%\E_{\X} \left[ \risk_{\T_n}(\w^t) - \risk_{\T_n}^* \right]  \le \Ub(t,n) := \min_{m \leq n} \U(t,m)\,. 
%\end{align*}
%\begin{proof}
%Discard $n-m$ samples and run the algorithm on the smaller $m$ sample. 
%\highlight{Hadi: I don't think so, Please check Email}
%\end{proof}
%\label{corollary:throw-data-away}
%\end{corollary}

Although the $\U$-recursion can be solved for small $n$ using dynamic programming (assuming knowledge of all constants), we analyse a much simpler heuristics and its $n \to \infty$ behavior. This leads to interesting insights, while being very practical. In particular, our algorithm is an \textit{anytime} algorithm, which does not require knowledge of the total number of iterations $T$ ahead of time.   

\subsection{Sample Schedules}

In this section, we present and analyse two adaptive sample-size schemes for {\methodname}. 

\paragraph{\textsc{Linear}} We start with sample size $\kappa$ and perform $2\kappa$ steps. From then on, we add a new sample every other iteration. The effective sample size is thus
\begin{align}
M_{\textsc{Lin}}(t) = \max \left\{2\kappa, \left\lceil \tfrac t 2 \right\rceil \right\} 
\end{align}
Note that this strategy defines an upper bound on $\U(2t,t)$ and $\U(2t+1,t)$. 

\paragraph{\textsc{Alternating}} We have also implemented a variant where we perform updates in alternation: every other iteration we sample a new data point, which is added to the set. However, we also \textit{force} an update on this fresh sample. In alternation, we simply re-sample an existing data point uniformly at random. We do not provide a theoretical analysis for this scheme but show experimentally that it slightly outperforms the \textsc{Linear} strategy (see results in the appendix). We thus report results for the \textsc{Alternating} strategy in the experimental section.

% I think we don't need this anymore 
%\paragraph{\textsc{Doubling}} This strategy also starts with sample size $\kappa$. It then repeats the following steps: (1) perform $m$ iterations on sample of size $m$ and (2) double the sample size:
%\begin{align}
%M_{\textsc{Alt}}(t) = 
%\begin{cases}
%\kappa & \hspace*{-3mm} \text{if $t \leq \kappa$} \\
%2^{r_t} \kappa, \;\; r_t = \max_r\{2^r \kappa \le t\} & \text{else}
%\end{cases}
%\end{align}

\subsection{Analysis}

We now provide an analysis that establishes the convergence rate of the {\sc Linear} strategy.

\begin{lemma} \label{lemma:twopass}
For $\bound(n) = D n^{-\alpha}, 0<\alpha \leq 1$, the {\sc Linear}
strategy obtains the following suboptimality
\begin{align}
\U(2n,n) \leq \bound\left(n\right) +
\frac{\initerror}{2}\left(\frac{\kappa}{ n}\right)^2 
\label{eq:lemma:twopass_prop}
\end{align}
\begin{proof}
By induction over $n$. The base case follows from $C_m \leq \initerror$. Using Eq.~\eqref{eq:recursion} and ~\eqref{eq:lemma:twopass_prop} for the inductive case,  we get
\begin{align*}
\U \left(2(n+1),n+1\right) & \stackrel{~\eqref{eq:recursion}}{\leq} \rho_{n+1}^2 \left[ \U\left(2n,n\right) + \frac{1}{n+1} \bound(n) \right] \\
& \stackrel{~\eqref{eq:lemma:twopass_prop}}{\leq} \frac{\initerror}{2}
\left(\frac{\kappa}{n+1}\right)^2 +\frac{n^2\left(n+2\right)}{\left(n+1\right)^3} \bound(n) 
%\\  &  \leq \frac{1}{2} \initerror \left(\frac{\kappa}{n+1}\right)^2 + \bound(n+1)
\end{align*}
Note that by definition of the logarithmic function, $\log \left[ n (n+2) \right] < 2\log(n+1)$, and moreover 
\begin{align*}
\frac{n}{n+1} \frac{\bound(n)}{\bound(n+1)} = \frac{n^{1-\alpha}}{(n+1)^{1-\alpha}} \leq 1 \,,
\end{align*}
which  completes the proof. 
\end{proof}
\end{lemma}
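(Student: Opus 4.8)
The plan is to prove Lemma~\ref{lemma:twopass} by induction on $n$, using the recurrence Eq.~\eqref{eq:recursion} together with the specific structure of the \textsc{Linear} schedule, which (as noted) guarantees that $\U(2n,n)$ and $\U(2n+1,n)$ are both valid upper bounds once $n \ge 2\kappa$. The key observation is that going from a sample of size $n$ to size $n+1$ under \textsc{Linear} costs exactly two SAGA iterations plus one ``switching'' term, so the recurrence specializes to
\begin{align*}
\U(2(n+1),n+1) \le \rho_{n+1}^2\left[\U(2n,n) + \tfrac{1}{n+1}\bound(n)\right],
\end{align*}
obtained by first applying the second branch of Eq.~\eqref{eq:recursion} with $m=n$ (the switching cost is $\tfrac{(n+1)-n}{n+1}\bound(n) = \tfrac{1}{n+1}\bound(n)$) and then applying the first branch twice with $\rho_{n+1} = 1-\tfrac{1}{n+1}$.

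**Inductive step.** Substituting the inductive hypothesis $\U(2n,n) \le \bound(n) + \tfrac{\initerror}{2}(\kappa/n)^2$ gives
\begin{align*}
\U(2(n+1),n+1) \le \left(\tfrac{n}{n+1}\right)^2\left[\left(1+\tfrac{1}{n+1}\right)\bound(n) + \tfrac{\initerror}{2}\tfrac{\kappa^2}{n^2}\right].
\end{align*}
The error-propagation term becomes $\left(\tfrac{n}{n+1}\right)^2 \cdot \tfrac{\initerror}{2}\tfrac{\kappa^2}{n^2} = \tfrac{\initerror}{2}\tfrac{\kappa^2}{(n+1)^2}$ exactly, which is the first target term. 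For the statistical term, I need to show $\left(\tfrac{n}{n+1}\right)^2\left(1+\tfrac{1}{n+1}\right)\bound(n) = \tfrac{n^2(n+2)}{(n+1)^3}\bound(n) \le \bound(n+1)$. Writing $\bound(n) = Dn^{-\alpha}$, this is equivalent to $\tfrac{n^2(n+2)}{(n+1)^3}\cdot\tfrac{n^{-\alpha}}{(n+1)^{-\alpha}} \le 1$, i.e. $\tfrac{n^{2-\alpha}(n+2)}{(n+1)^{3-\alpha}} \le 1$. This is the crux: I would split it as $\tfrac{n(n+2)}{(n+1)^2} \le 1$ (true since $n(n+2) = (n+1)^2-1$) times $\tfrac{n^{1-\alpha}}{(n+1)^{1-\alpha}} \le 1$ (true since $0 < \alpha \le 1$ makes $1-\alpha \ge 0$). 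Multiplying these two inequalities, both with factors in $(0,1]$, yields the claim.

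**Base case and the main obstacle.** The base case is at $n = 2\kappa$ (the smallest sample size in the \textsc{Linear} schedule): here $\U(4\kappa, 2\kappa) \le \rho_{2\kappa}^{4\kappa}\initerror$ from iterating the first branch of Eq.~\eqref{eq:recursion} $4\kappa$ times starting from $\U(0,2\kappa) = \initerror$, and one checks $\rho_{2\kappa}^{4\kappa} = (1-\tfrac{1}{2\kappa})^{4\kappa} \le e^{-2}$, which must be bounded by $\bound(2\kappa) + \tfrac{\initerror}{2}\cdot\tfrac14$; since $e^{-2} < \tfrac18 < \tfrac14$ the inductive inequality holds with room to spare (the $\bound$ term only helps). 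The main obstacle I anticipate is purely bookkeeping rather than conceptual: making sure the ``two iterations per added sample'' accounting lines up so that exactly $\U(2n,n) \to \U(2(n+1), n+1)$ consumes two steps, and that the schedule's initial plateau of $2\kappa$ steps at size $2\kappa$ (note $M_{\textsc{Lin}}$ caps at $2\kappa$, not $\kappa$) is handled consistently with the $(\kappa/n)^2$ form of the bound — one should double-check the constant in the base case absorbs the slack. The remaining algebra — the telescoping of $(n/(n+1))^2$ against $\kappa^2/n^2$ and the monotonicity estimate on $\bound$ — is routine, as the excerpt's proof sketch already indicates.
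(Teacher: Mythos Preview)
Your inductive step is exactly the paper's: the same recurrence $\U(2(n+1),n+1)\le\rho_{n+1}^2[\U(2n,n)+\tfrac{1}{n+1}\bound(n)]$, the same telescoping of the $\initerror$-term, and the same factorization of the $\bound$-term into $n(n+2)\le(n+1)^2$ (which the paper phrases as $\log[n(n+2)]<2\log(n+1)$) times $(n/(n+1))^{1-\alpha}\le 1$.

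One concrete slip in your base case: $e^{-2}\approx 0.135$ is \emph{not} less than $1/8=0.125$, so at $n=2\kappa$ the inequality $\rho_{2\kappa}^{4\kappa}\initerror\le \tfrac{\initerror}{8}$ fails and you cannot simply say ``the $\bound$ term only helps'' without an extra assumption on its size. This is exactly the bookkeeping you flagged yourself. The clean fix is to anchor the induction at $n=\kappa$ (consistent with the text ``start with sample size $\kappa$ and perform $2\kappa$ steps''), where you need $(1-1/\kappa)^{2\kappa}\initerror\le \bound(\kappa)+\tfrac{\initerror}{2}$, and now $e^{-2}<\tfrac12$ suffices with no appeal to $\bound$. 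The paper's own proof glosses over this by just writing ``the base case follows from $C_m\le\initerror$,'' so your more explicit treatment (once corrected) is an improvement.
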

This means that for large enough $n$ the {\sc Linear} strategy is able to  approach the statistical accuracy with $2n$ iterations, i.e.~two "passes" over the data. Note the very significant improvement relative to the $\log n$ factor inherent to the optimal fixed sample size choice (see Table~\ref{table:compare_with_baselines} for a comparison of these two bounds).

What does that imply for the $T=n$ case that we have been emphasizing? It is simple to state an answer as a corollary.
\begin{corollary} \label{corollary:onepass}
Under the same assumptions as Lemma \ref{lemma:twopass}, it holds for even $n$
\begin{align*}
        \U(n,n) \leq \left(3 \cdot 2^{\alpha-1} \right)  \bound\left(n\right) + 2 \initerror
        \left(\frac{\kappa}{n}\right)^2
\end{align*}
\begin{proof}
Note that with Eq.~\eqref{eq:recursion} (a) and Lemma \ref{lemma:twopass} (b) we get 
\begin{align*}
\U(2n,2n) \stackrel{\text{(a)}}{\le} \U(2n,n) + \frac 12 \bound(n) \stackrel{\text{(b)}}\le \frac 32\bound(n) + 2 \initerror  \left( \frac \kappa {2n} \right)^2
\end{align*}
The fact that $\bound(n) = 2^{\alpha} \bound(2n)$ completes the proof. 
\end{proof} 
\end{corollary}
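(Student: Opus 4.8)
The plan is to obtain the one-pass bound $\U(n,n)$ directly from the two-pass bound of Lemma~\ref{lemma:twopass} by inserting a single sample-size-augmentation step of the recurrence~\eqref{eq:recursion} and then rescaling $n \mapsto n/2$, which is legitimate since $n$ is assumed even. In other words, I would not touch the geometric-decay branch of~\eqref{eq:recursion} at all; the whole corollary is a short consequence of Lemma~\ref{lemma:twopass} applied at the halved sample size, composed with one invocation of the augmentation branch.

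Concretely, first I would apply the second branch of~\eqref{eq:recursion} at outer sample size $2m$ with the inner minimum evaluated at $m<2m$, which gives
\begin{align*}
\U(2m,2m) \le \U(2m,m) + \frac{2m-m}{2m}\bound(m) = \U(2m,m) + \tfrac12\bound(m).
\end{align*}
Then I would substitute the two-pass estimate $\U(2m,m) \le \bound(m) + \tfrac{\initerror}{2}(\kappa/m)^2$ of Lemma~\ref{lemma:twopass}, obtaining
\begin{align*}
\U(2m,2m) \le \tfrac32\bound(m) + \tfrac{\initerror}{2}\Big(\tfrac{\kappa}{m}\Big)^2.
\end{align*}
Finally I would specialize to $m=n/2$: the initial-error term becomes $\tfrac{\initerror}{2}(2\kappa/n)^2 = 2\initerror(\kappa/n)^2$, and using the assumed polynomial form $\bound(k)=Dk^{-\alpha}$ we have $\bound(n/2)=2^\alpha\bound(n)$, so $\tfrac32\bound(n/2) = 3\cdot2^{\alpha-1}\bound(n)$. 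Adding the two pieces yields exactly the claimed bound on $\U(n,n)$.

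Since this is essentially a one-line chaining of an already-proved lemma with one step of the recurrence, I do not expect a genuine obstacle; the only thing requiring care is bookkeeping on the side conditions. One should check that $n/2 \ge \kappa$ (i.e.\ $n \ge 2\kappa$) so that Lemma~\ref{lemma:twopass} and the "large enough $n$" regime are in force at sample size $n/2$, and one should note that monotonicity of $k\mapsto\bound(k)$ together with the nested construction of the $\T_t$ and the definition $M_{\textsc{Lin}}(t)=\max\{2\kappa,\lceil t/2\rceil\}$ are what make the choice $m<2m$ admissible in the inner minimum of~\eqref{eq:recursion}.
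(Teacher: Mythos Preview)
Your proposal is correct and follows exactly the paper's own argument: one application of the augmentation branch of~\eqref{eq:recursion} with inner sample size equal to half the outer, followed by Lemma~\ref{lemma:twopass} and the substitution $m=n/2$ (equivalently the paper's relabelling $2n\mapsto n$). Your additional remark about the side condition $n/2\ge\kappa$ is a welcome point of care that the paper leaves implicit.
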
    
The proof of the above corollary suggests to only use $n=T/2$ samples, when performing $T$ steps and to simply ignore the other half (that potentially could have been sampled). 
One might wonder if a better strategy than the {\sc Linear} one could be defined, e.g. by iterating more than twice on each newly added sample or by increasing the sample size by more than one. The next lemma answers this question and proves that the {\sc Linear} strategy is optimal for large-scale datasets as long as $\bound(n) \propto 1/n$. 
\begin{lemma}
	Assume that $\bound(n) \propto D/n$, then the {\sc Linear}
	strategy is optimal for all sample size $n > \kappa$.
	\begin{proof}
		Here, we briefly state a sketch of the proof . The details are presented in Appendix~\ref{App:optimality}. First, we reformulate the problem of the optimal sample size schedule in terms of number of iterations on each samples size. Given that this problem is convex, we can use the KKT conditions to prove the optimality of incrementing by one sample (see Lemma~\ref{lemma:plus_one_optimality}) and iterating twice on each sample size (see
		Lemma~\ref{lemma:two_iterations_optimality}).
	\end{proof}
\end{lemma}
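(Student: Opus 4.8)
The plan is to turn the search for the best sample-size schedule into a finite convex program over the number of iterations spent at each sample size, and then extract the {\sc Linear} schedule from its KKT conditions. A schedule is determined by the monotone sequence of sizes it visits together with an iteration count $t_m\ge 0$ for each $m\in\{\kappa,\dots,n\}$ (with $t_m=0$ meaning size $m$ is skipped). Unrolling the recurrence~\eqref{eq:recursion} along such a schedule and using $\bound(m)=D/m$ yields the closed-form bound
\begin{align*}
\initerror\prod_{m=\kappa}^{n}\rho_m^{t_m}
+\sum_{m=\kappa}^{n-1}\Bigl(\tfrac{D}{m}-\tfrac{D}{m+1}\Bigr)\prod_{m'=m+1}^{n}\rho_{m'}^{t_{m'}},
\end{align*}
where I have used the telescoping identity $\frac{m'-m}{m'}\bound(m)=\sum_{j=m}^{m'-1}\bigl(\tfrac{D}{j}-\tfrac{D}{j+1}\bigr)$, which shows that jumping from $m$ directly to $m'$ incurs exactly the same switching cost as passing through every intermediate size with zero iterations there; hence no schedule is made worse by declaring that it visits all of $\kappa,\kappa+1,\dots,n$, and I may take that as given. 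The objective is the total work $T=\sum_{m=\kappa}^{n}t_m$, to be minimised subject to the displayed bound being at most $\bound(n)=D/n$ — equivalently, by monotonicity, to minimise the bound for a fixed budget $T$.

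Next I would pass to the change of variables $v_m:=t_m\,(-\ln\rho_m)\ge0$, so that $\rho_m^{t_m}=e^{-v_m}$ exactly, and to the partial sums $S_m:=\sum_{m'=m}^{n}v_{m'}$ with $S_{n+1}:=0$, so that $S_m-S_{m+1}=v_m\ge0$. In these variables the bound becomes $\initerror e^{-S_\kappa}+D\sum_{j=\kappa+1}^{n}\bigl(\tfrac1{j-1}-\tfrac1j\bigr)e^{-S_j}$, a nonnegatively weighted sum of exponentials of affine functions and hence convex, while $T=\sum_m v_m/(-\ln\rho_m)$ is linear in the $S_m$ (using $-\ln\rho_m=1/m+O(1/m^2)$). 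The feasible region — the monotone cone intersected with a convex sublevel set — is convex, so the KKT conditions are necessary and sufficient for a global minimiser; for the asymptotic statement one may freely replace $-\ln\rho_m$ by $1/m$, since the two agree to leading order in the regime $t_m=O(1)$ that will arise.

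I would then solve the stationarity equations. Because $S_m$ enters the constraint through a single exponential term — $\initerror e^{-S_\kappa}$ for $m=\kappa$, and $D(\tfrac1{m-1}-\tfrac1m)e^{-S_m}$ for $m>\kappa$ — and because (after checking that the multipliers of the $v_m\ge0$ constraints vanish at the optimum) the gradient of $T$ has all entries of order one, stationarity forces $D(\tfrac1{m-1}-\tfrac1m)e^{-S_m}$ to be essentially constant in $m$ together with $\initerror e^{-S_\kappa}$ matching it. Taking ratios of the equations at $m$ and $m+1$ and using $\tfrac1{(m-1)m}\big/\tfrac1{m(m+1)}=\tfrac{m+1}{m-1}$ gives $e^{-v_m}\to\tfrac{m-1}{m+1}$, i.e.\ $t_m\to m\ln\tfrac{m+1}{m-1}=2+O(1/m)$: the optimal schedule increments the sample size by one and performs (asymptotically) two iterations at each size, which is precisely the {\sc Linear} schedule, while the boundary equation at $m=\kappa$ pins down the $\Theta(\kappa\log\kappa)$ burn-in at the smallest size. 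Plugging this profile back in, one verifies it meets the constraint with $T=\sum_m t_m=2n+\Theta(\kappa\log\kappa)=(1+o(1))\,2n$, recovering Lemma~\ref{lemma:twopass}; a final rounding argument translates the continuous optimum to the integer schedule at the cost of a bounded factor.

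The step I expect to be the main obstacle is the treatment of the inequality constraints $v_m\ge0$ inside the KKT analysis: one must show that none of them is active at the optimum — equivalently that the unconstrained stationary point already satisfies $v_m=\ln\tfrac{m+1}{m-1}>0$ for every $m$ — and also argue cleanly that the reduction "visit every size" does not pre-empt the best placement of iterations (the telescoping identity makes skipping a size weakly dominated, but combining this with the convex program requires care). The passage from $\rho_m^{t_m}$ to $e^{-t_m/m}$ and the rounding of the $t_m$ to integers are routine, but must be carried out in the bound-increasing direction so that the final statement is a genuine optimality claim for the recurrence $\U$.
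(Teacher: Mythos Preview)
Your proposal is correct and follows essentially the same route as the paper: reformulate the schedule as iteration counts $t_m$ per sample size, unroll the recurrence via the telescoping identity $\frac{m'-m}{m'}\bound(m)=\sum_{j=m}^{m'-1}\bigl(\bound(j)-\bound(j+1)\bigr)$ (the paper's Lemma~\ref{lemma:expanded_upperbound}), observe the resulting objective is a sum of log-convex terms, and read off $t_m\simeq 2$ from the KKT stationarity conditions by taking ratios at consecutive $m$ (the paper's Lemma~\ref{lemma:two_iterations_optimality}). The only cosmetic difference is that you reduce to ``visit every size'' upfront via telescoping and then check the nonnegativity constraints are slack, whereas the paper keeps the $t_m\ge 0$ constraints in the KKT system and derives a separate contradiction argument (Lemma~\ref{lemma:plus_one_optimality}) ruling out $t_m=0,\ t_{m+1}>0$; these are two packagings of the same computation.
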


% So far, we have provided encouraging upper bounds. We also give a result for the special case of $\bound(n) \propto 1/n$ that shows that the \textsc{Linear} heuristic is asymptotically not worse than the strategy that is obtained by solving the recurrence exactly. 
% \begin{corollary}
% \label{cor:towards_upperbound}
% Assume that $\bound(n) \propto 1/n$, then  
% \begin{align*}
% \U\left(n,n\right) \stackrel{n \to \infty}\longrightarrow  3 \bound(n) \,.
% \end{align*}
% \begin{proof} See appendix. \end{proof}
% \end{corollary}

% !TEX root = adapt-ss.tex

\begin{figure}
\center
\includegraphics[width=0.4\textwidth]{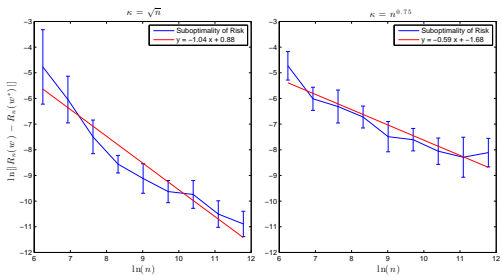} 
\caption{{\it Results on synthetic dataset.} (left) Since, the empirical suboptimality is $\propto 1/n$, we expect the slope measured on this plot to be close to one. (right) Since  $\kappa = n^{0.75}$ slows down the convergence rate, the slope of this plot is less than one.}
\label{fig:slopes}
\end{figure}

%----------------------------------------------------------------------------
\begin{figure*}[t!]
	\begin{center}
          \begin{tabular}{@{}c@{\hspace{5mm}}c@{\hspace{5mm}}c@{}}
            \includegraphics[
            width=0.3\linewidth]{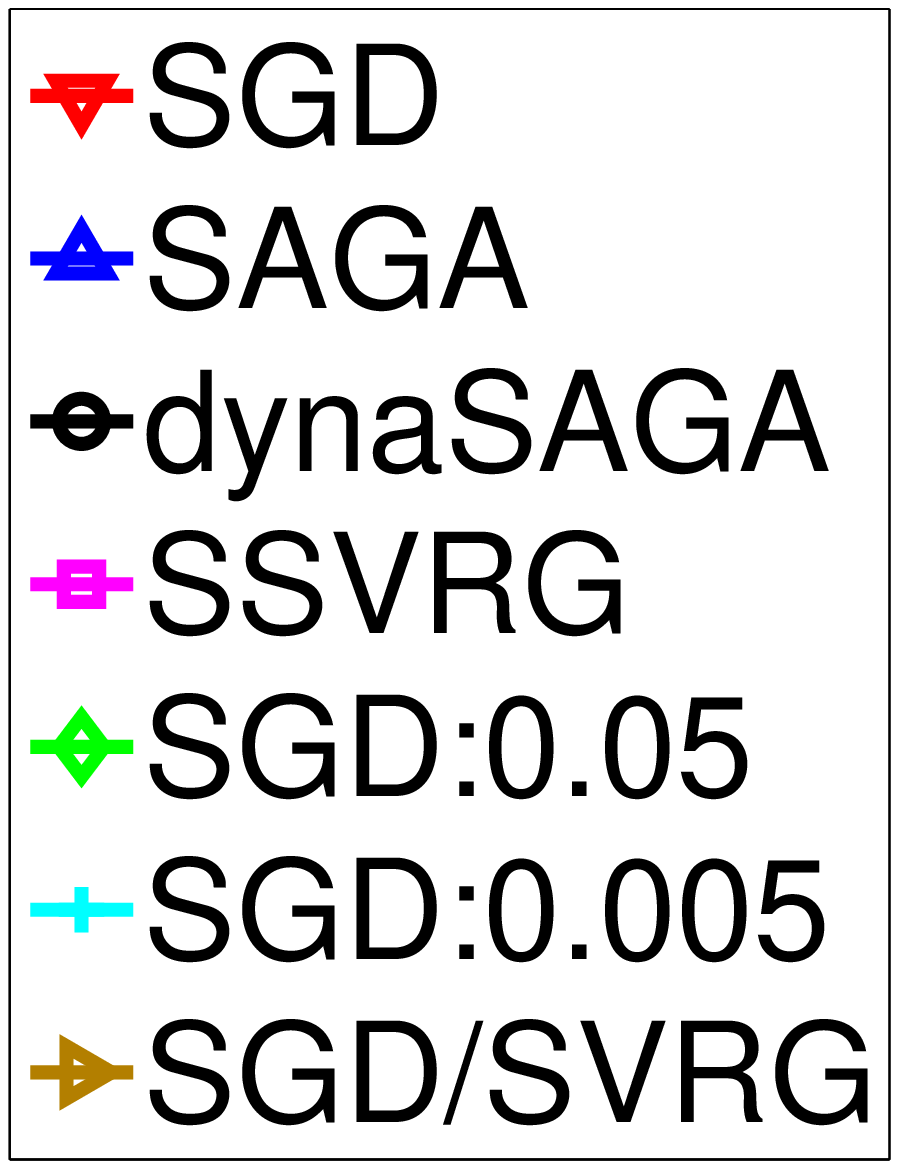} & 
            \includegraphics[
            width=0.3\linewidth]{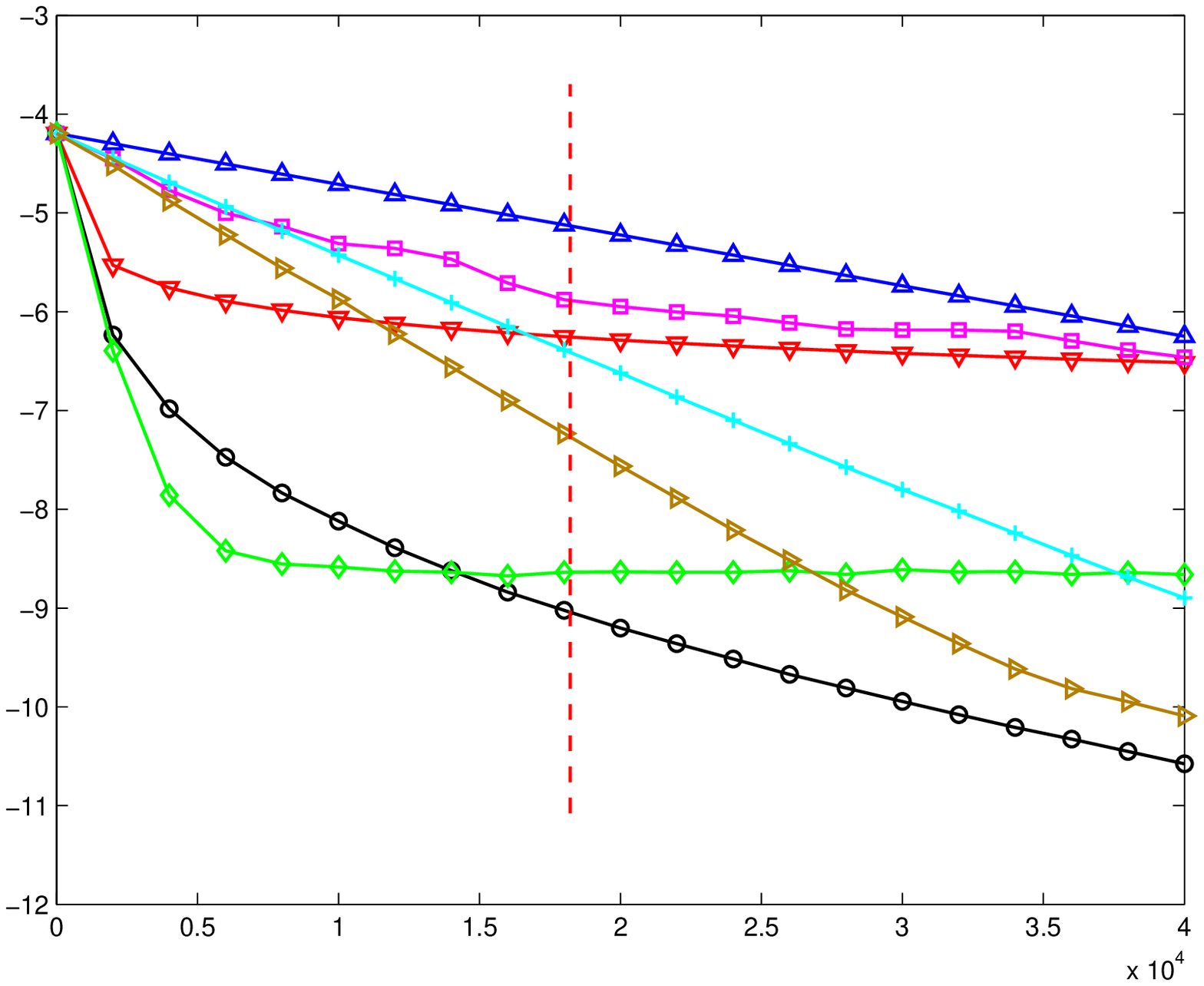} &
             \includegraphics[
             width=0.3\linewidth]{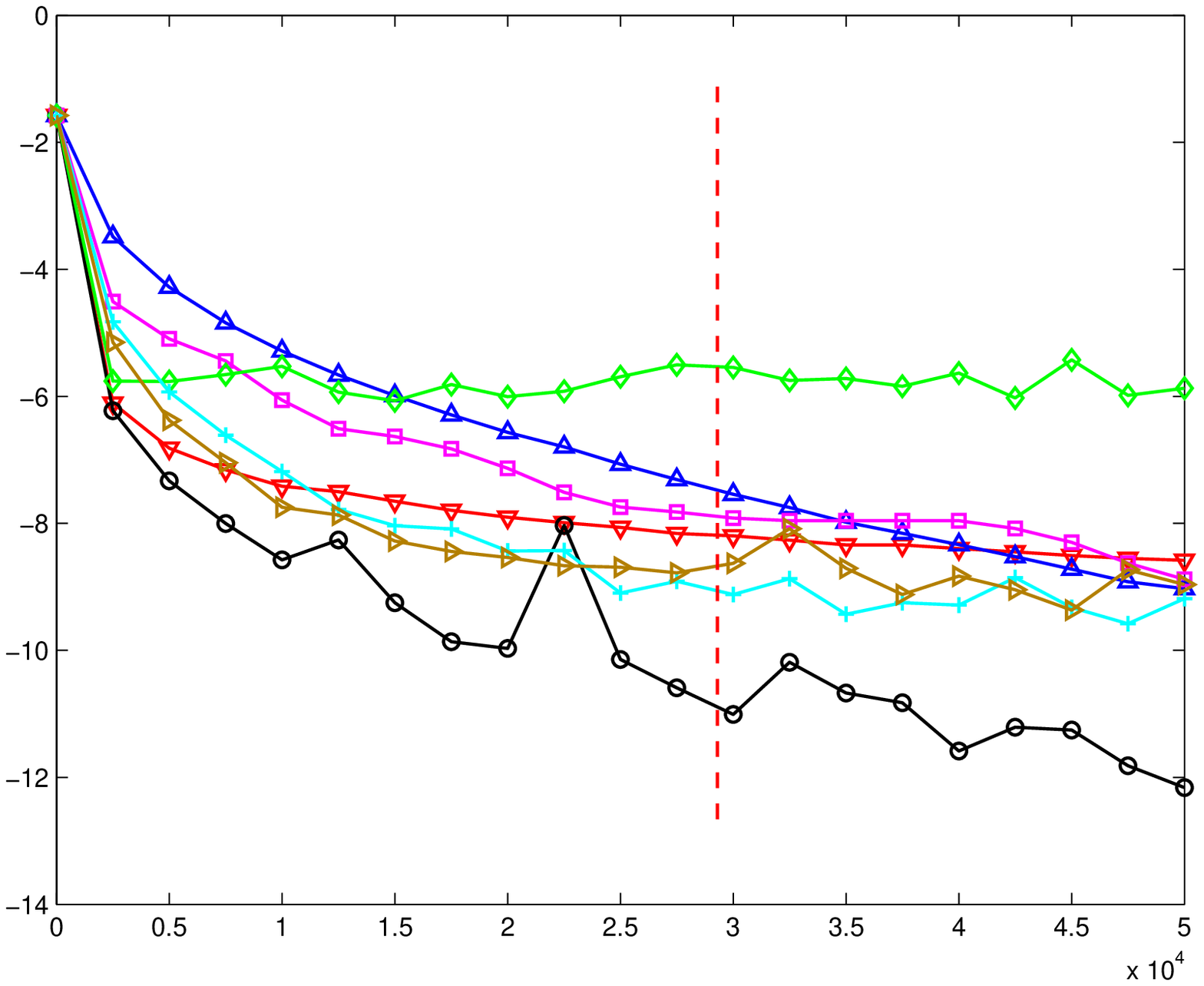} \\
                         &
            1. {\sc rcv} &
            2. {\sc a9a}
            \\
            \includegraphics[width=0.3\linewidth]{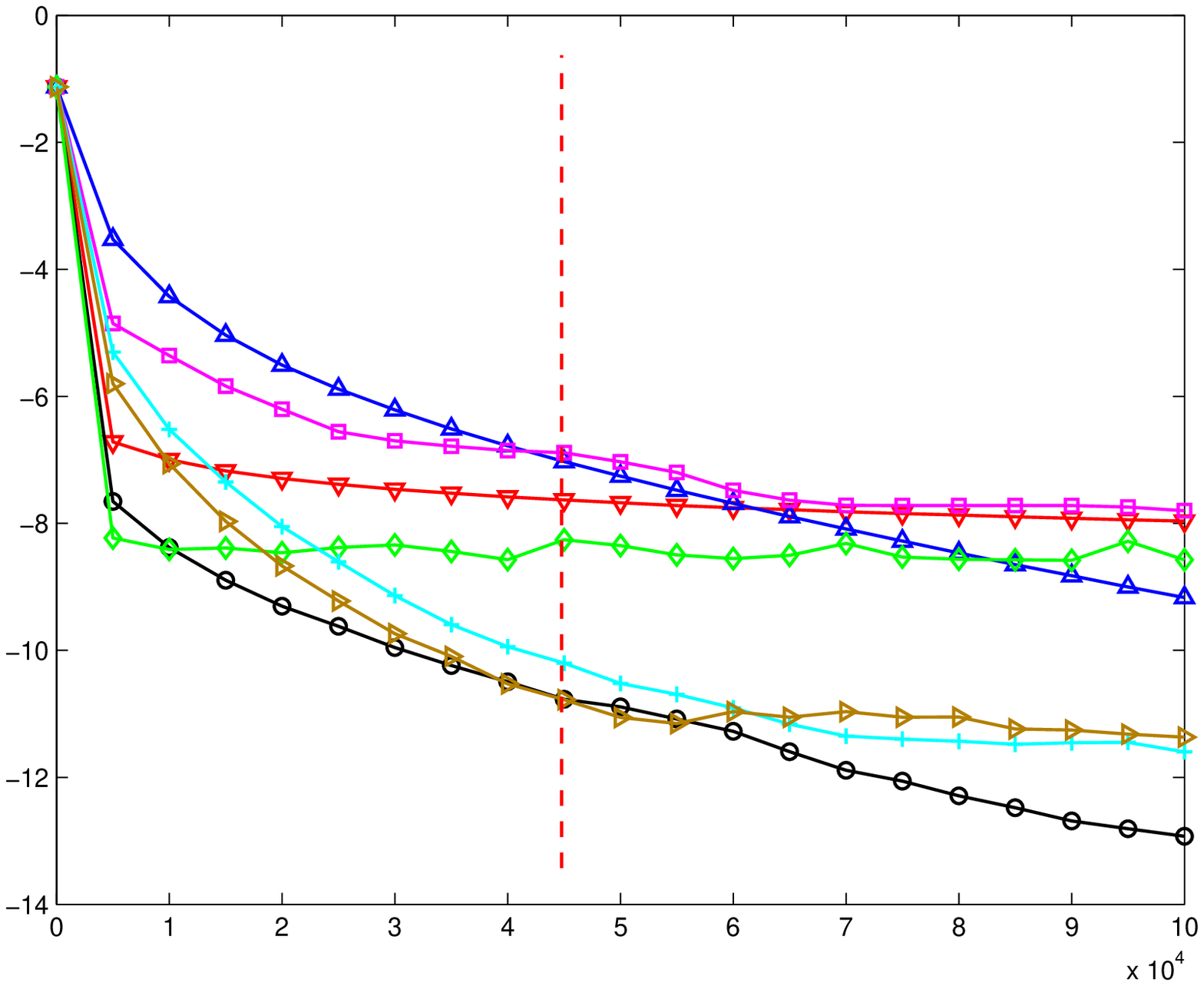} &
            \includegraphics[width=0.3\linewidth]{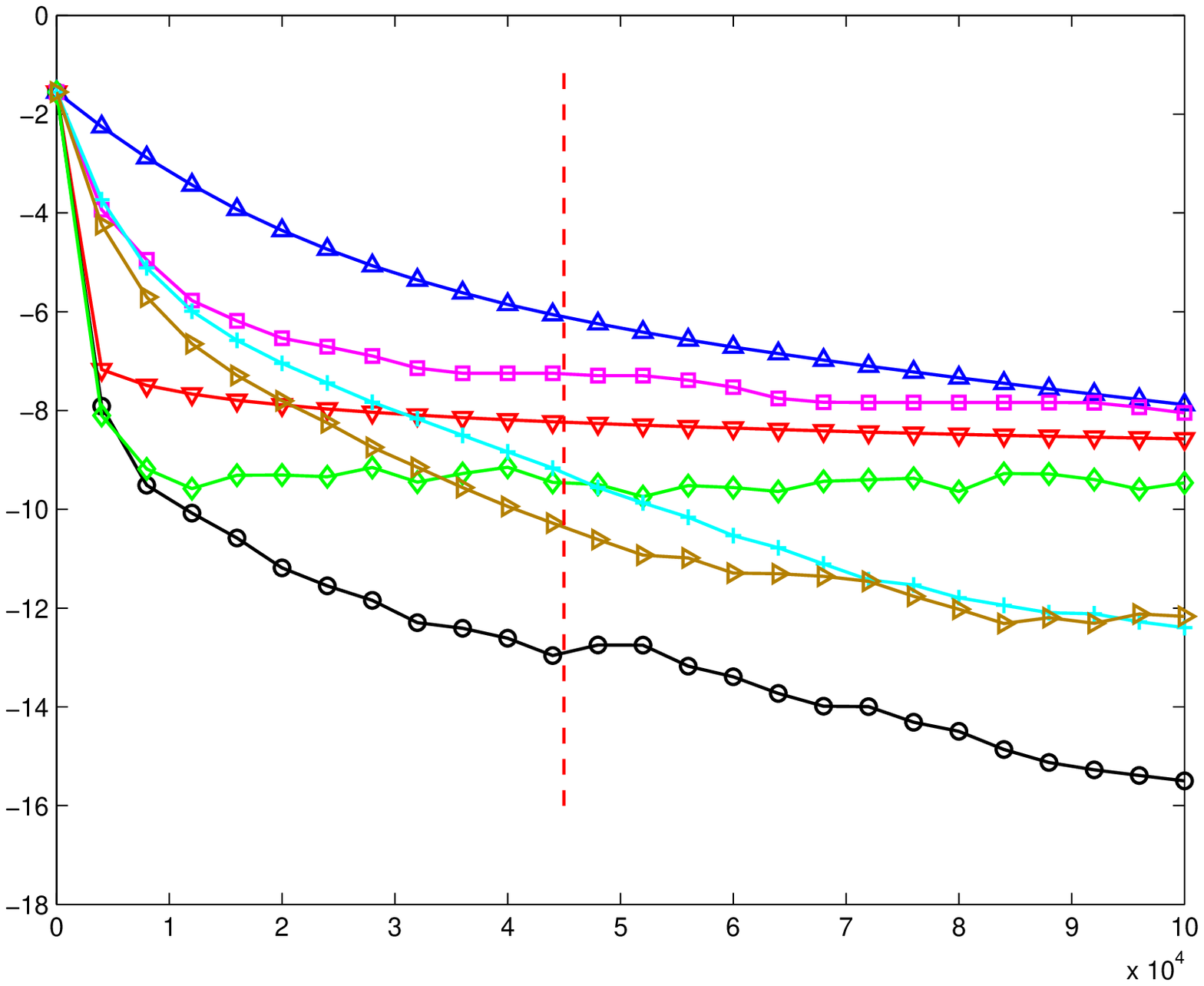} & 
             \includegraphics[width=0.3\linewidth]{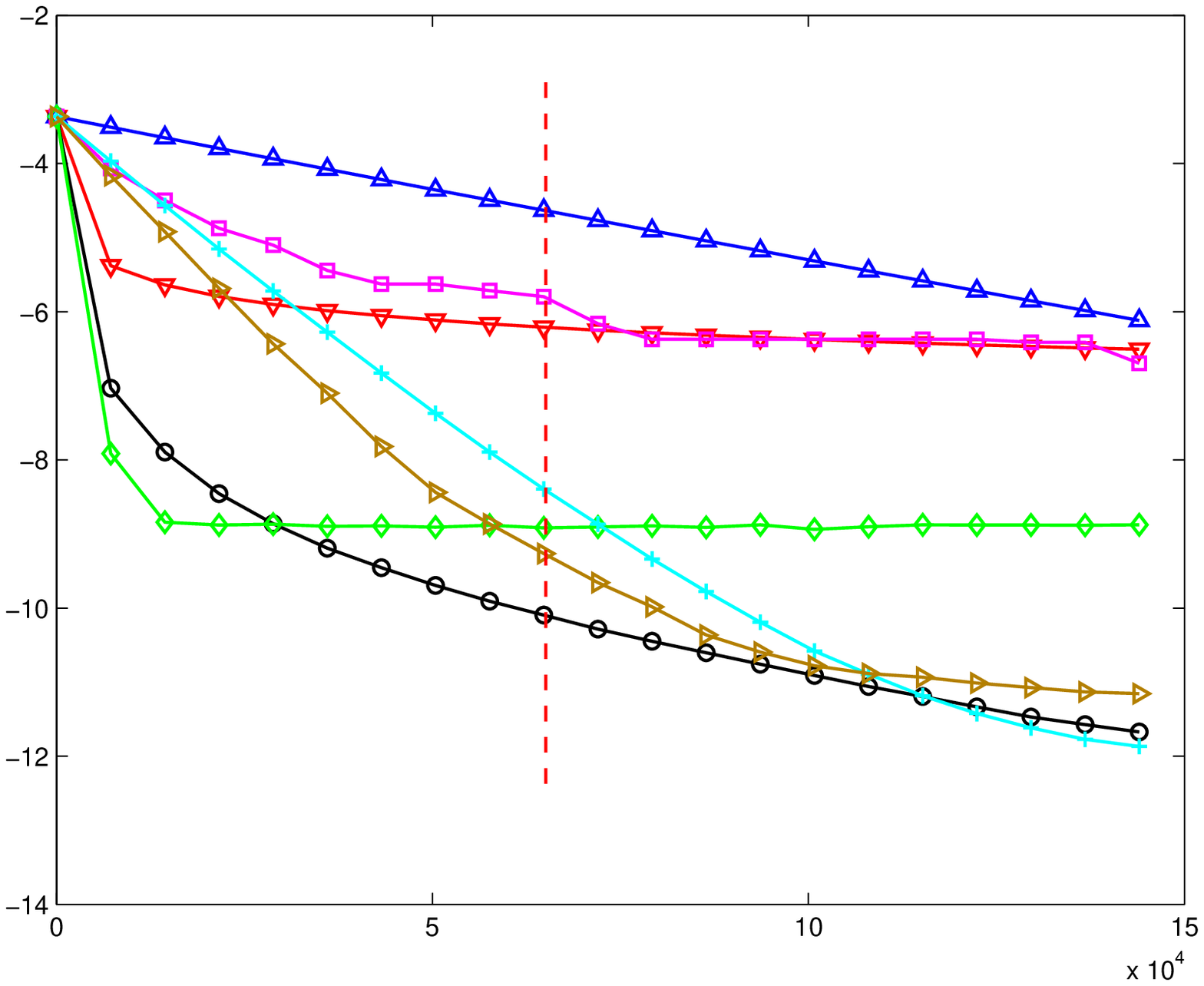}   
            \\ 
            3. {\sc w8a} &
            4. {\sc ijcnn1} & 
            5. {\sc real-sim} 
            \\
             \includegraphics[width=0.3\linewidth]{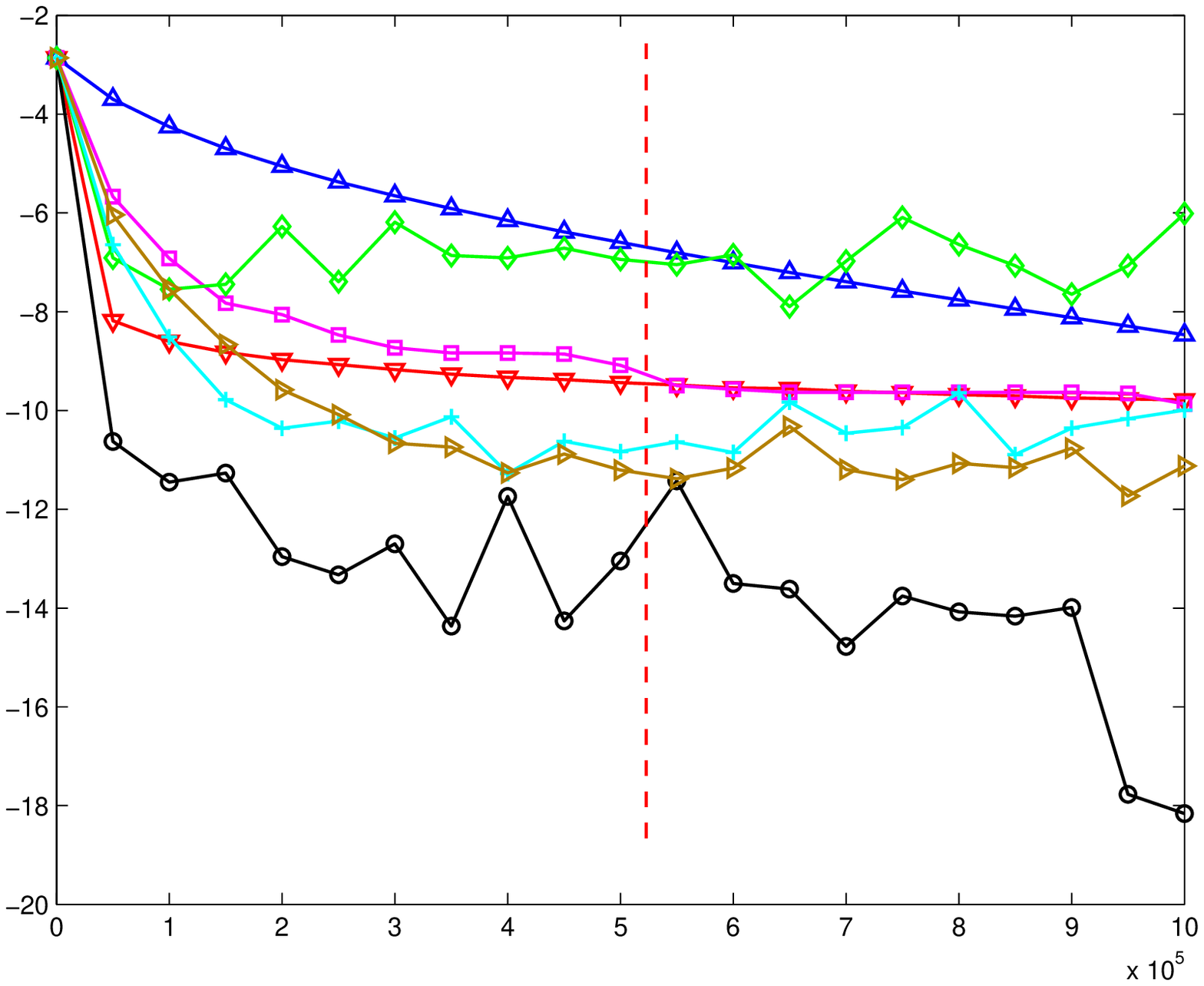} & 
             \includegraphics[width=0.3\linewidth]{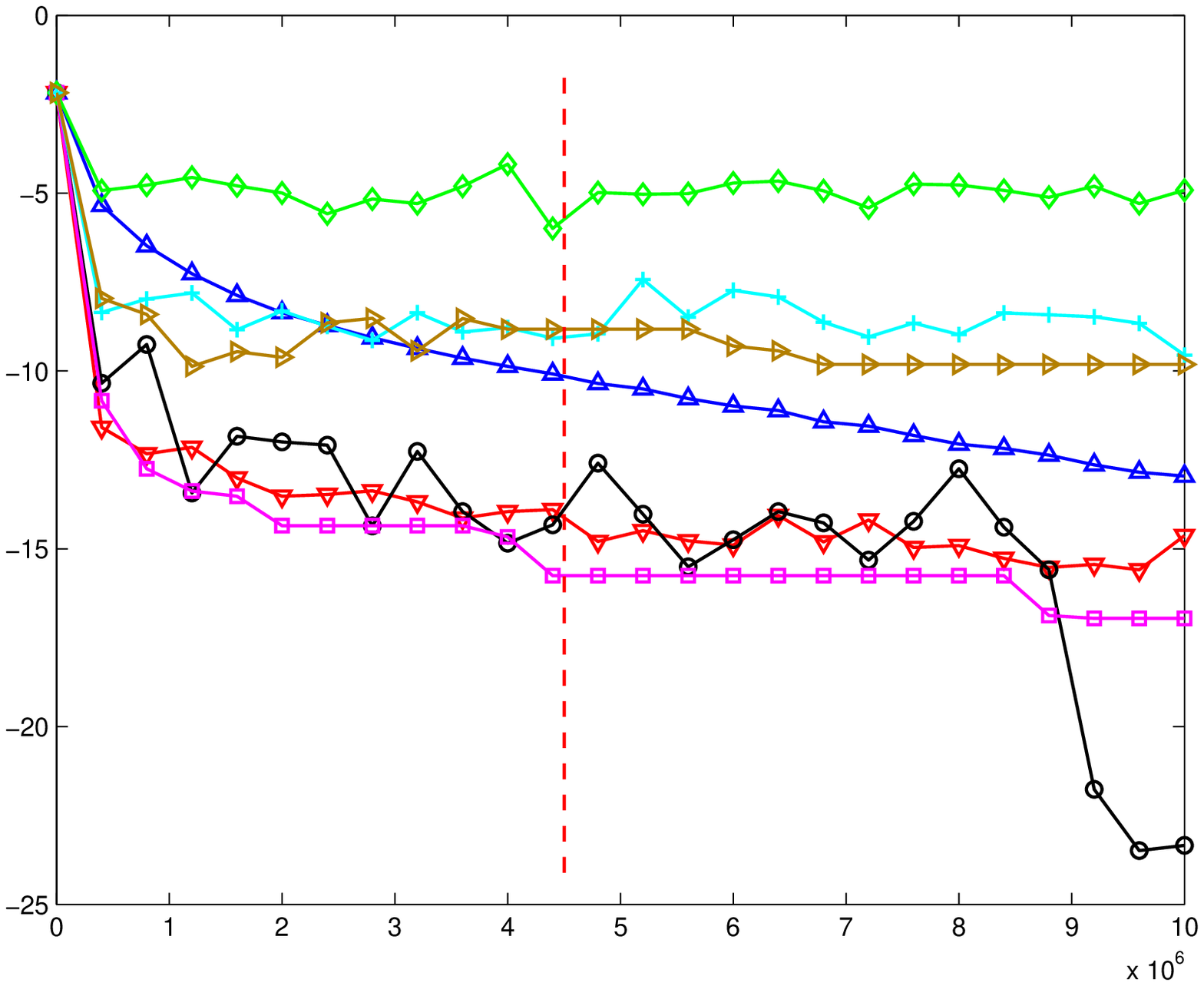} &  \\ 
            6. {\sc covtype} & 
            7. {\sc SUSY} & 
	  \end{tabular}
          \caption{ {\it Suboptimality on the empirical risk}.~The
          vertical axis shows the suboptimality of the empirical risk, i.e. $\log_{2}
          \E_{10} \left[ \risk_{\T}(\w^t) - \risk_{\T}^* \right]$ where the
          expectation is taken over 10 independent runs. The training set
          includes 90\% of the data. The vertical red dashed line is drawn after exactly one epoch over the data. }
          \label{fig:results}
	\end{center}
	
\end{figure*}
%----------------------------------------------------------------------------

\section{Experimental Results}

We present experimental results on synthetic as well as real-world data, which largely confirms the above analysis.

\subsection{Baselines}
We compare \methodname{} (both the {\sc Linear} and {\sc
Alternating} strategy) to various optimization methods presented in Section~\ref{sec:related_work}. This includes SGD (with constant and decreasing step-size), SAGA, streaming SVRG (SSVRG) as well as the mixed  SGD/SVRG approach presented in~\cite{babanezhad2015stop}.

%----------------------------------------------------------------------------
\begin{figure*}[t!]
	\begin{center}
          \begin{tabular}{@{}c@{\hspace{5mm}}c@{\hspace{5mm}}c@{}}
            \includegraphics[
            width=0.3\linewidth]{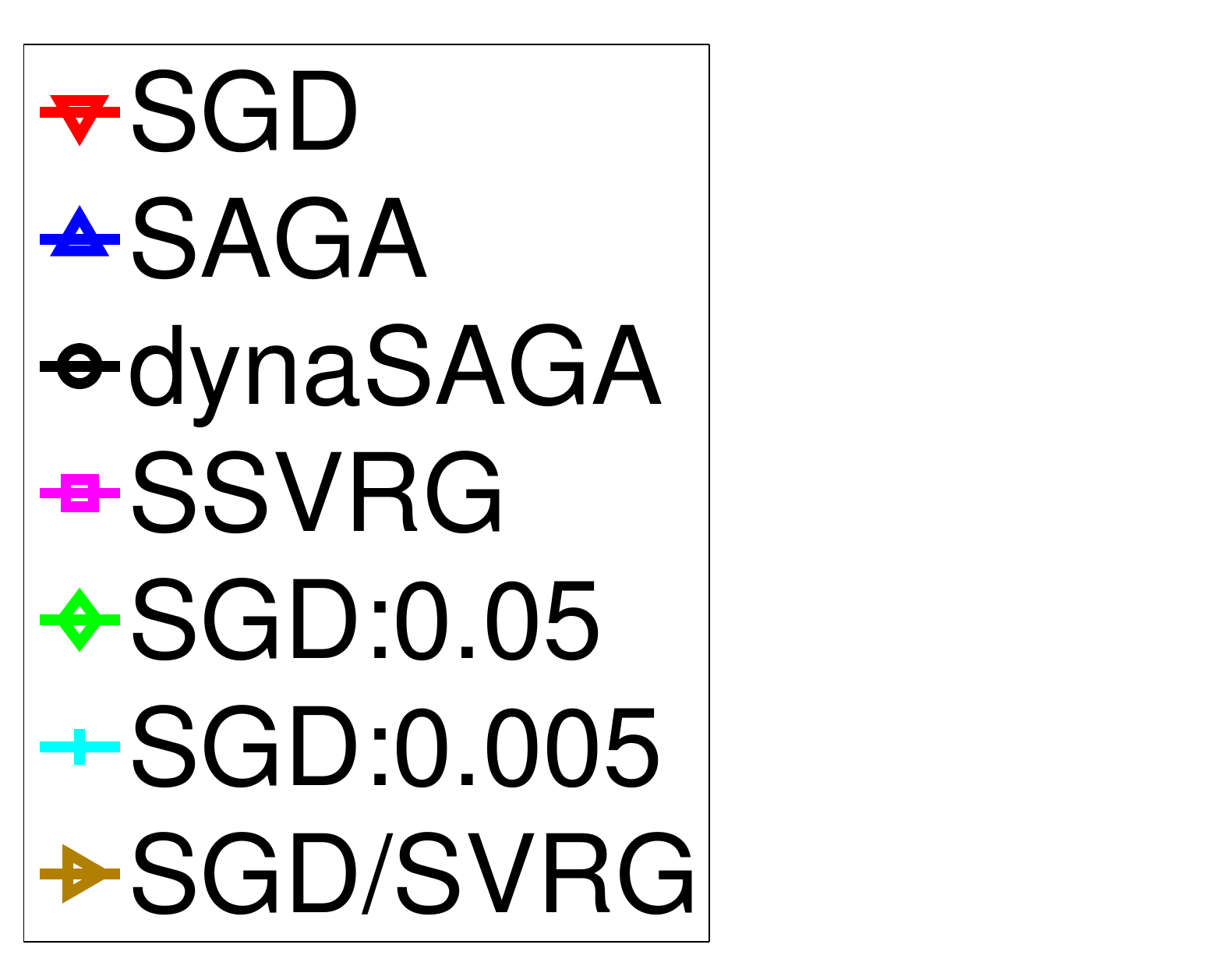} & 
            \includegraphics[
            width=0.3\linewidth]{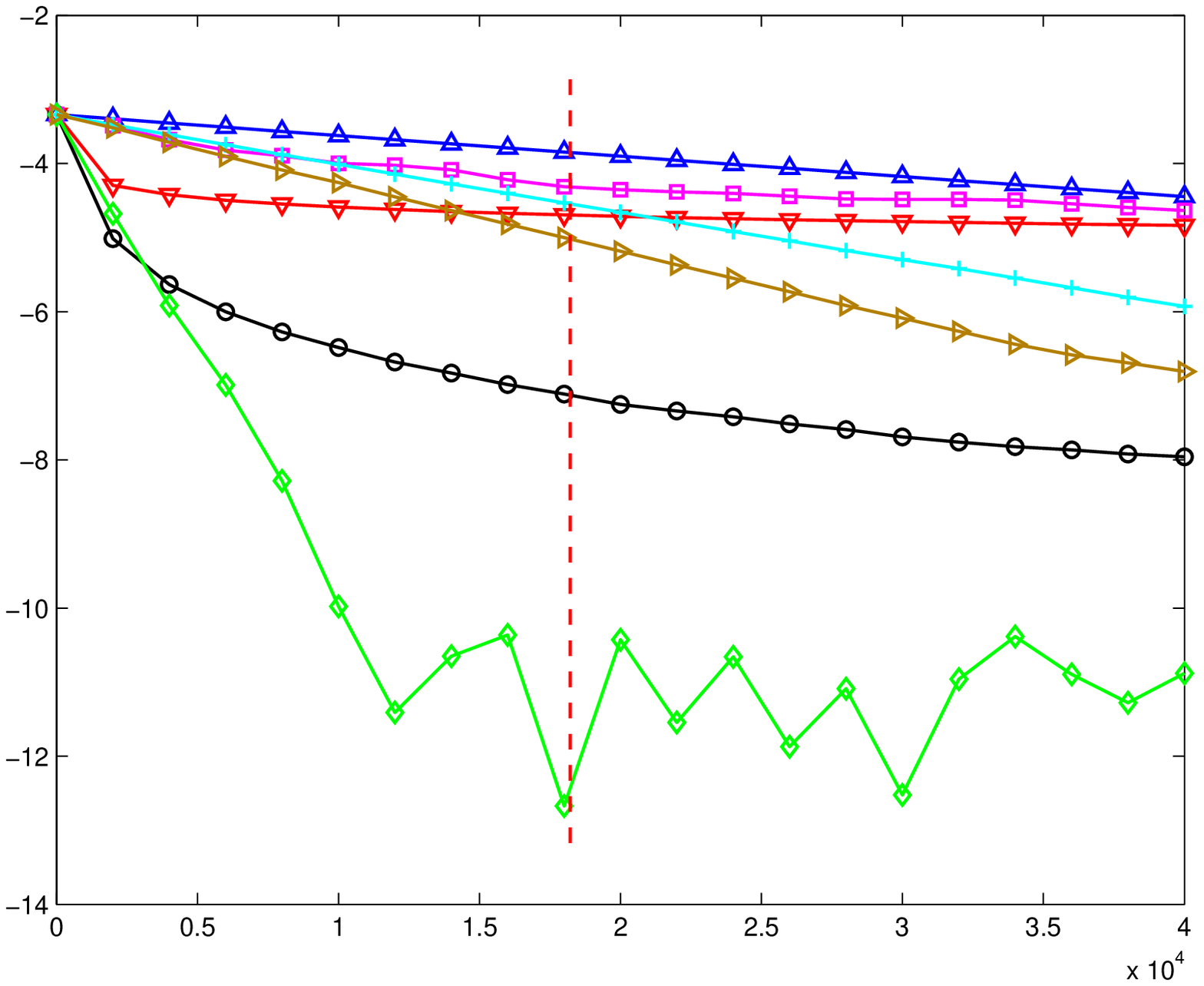} &
             \includegraphics[
             width=0.3\linewidth]{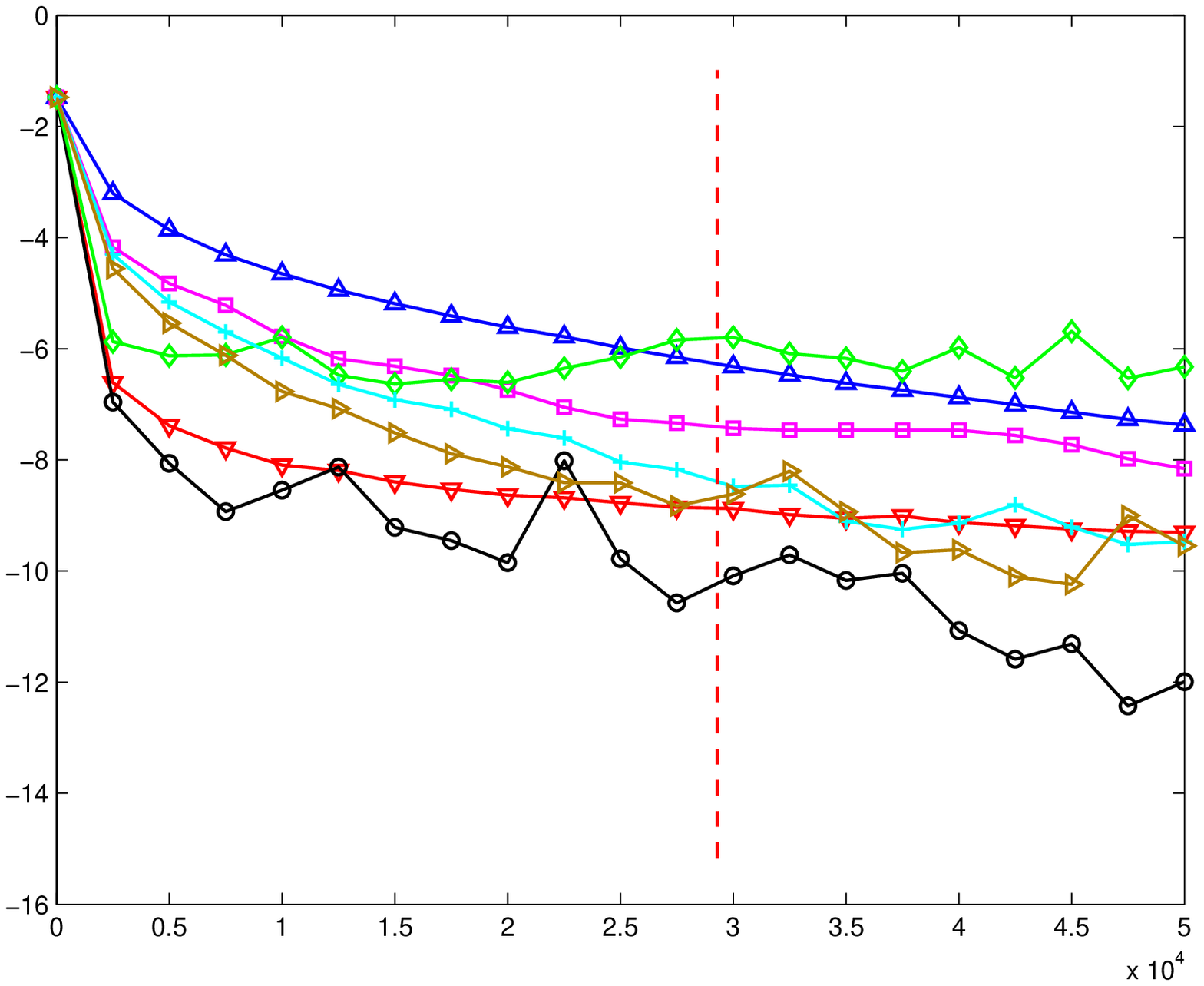} \\
                         &
            1. {\sc rcv} &
            2. {\sc a9a}
            \\
            \includegraphics[width=0.3\linewidth]{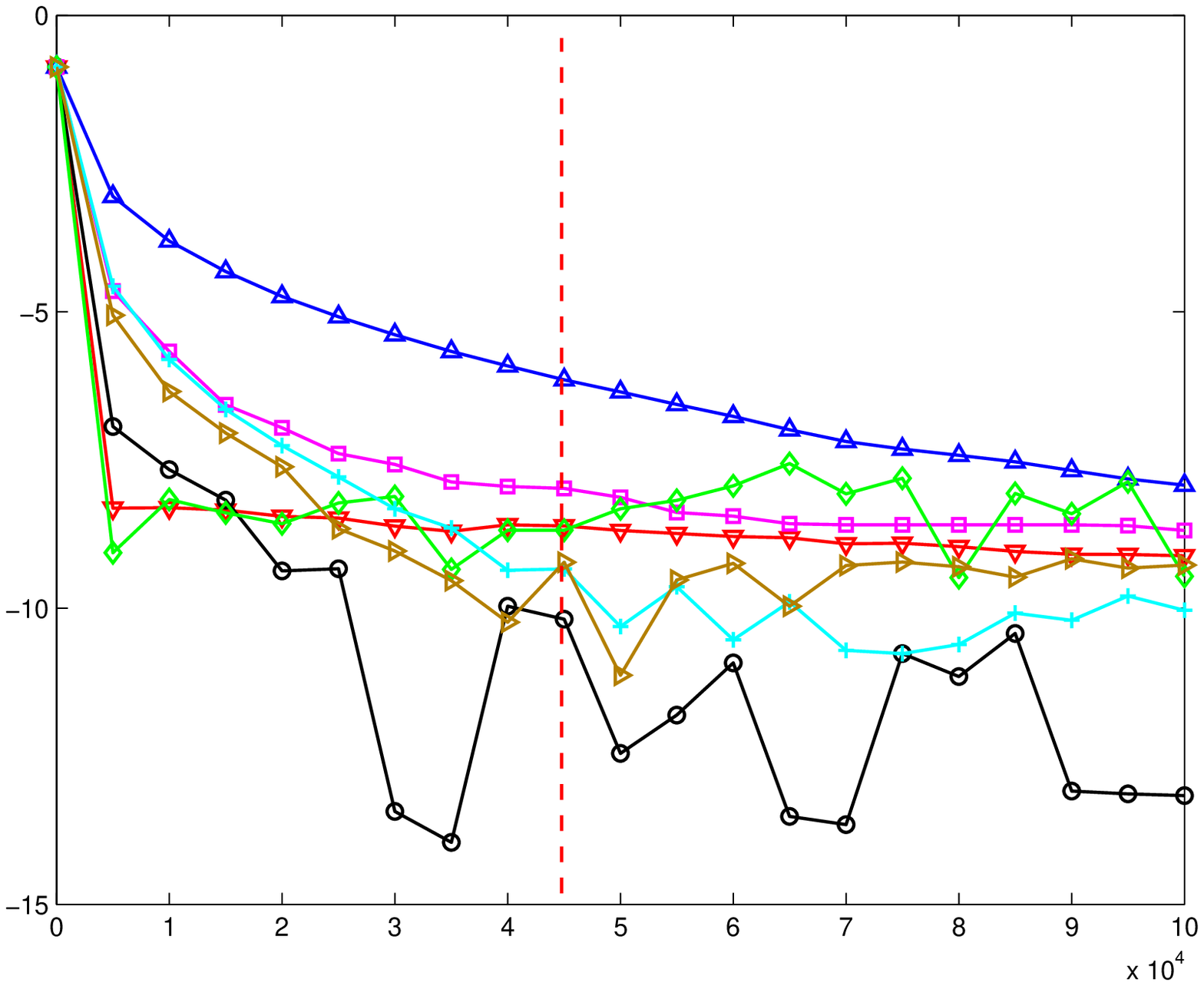} &
            \includegraphics[width=0.3\linewidth]{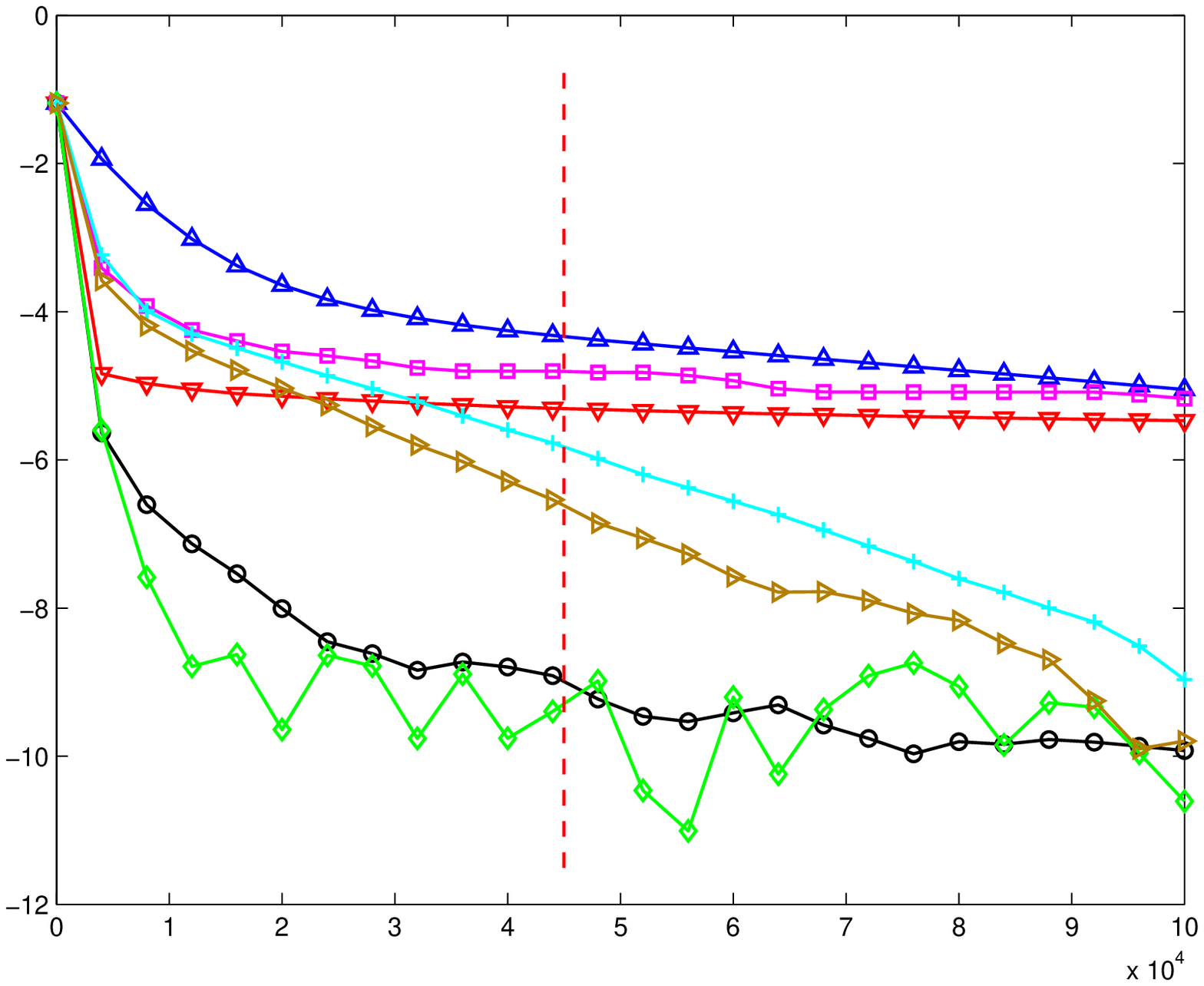} & 
             \includegraphics[width=0.3\linewidth]{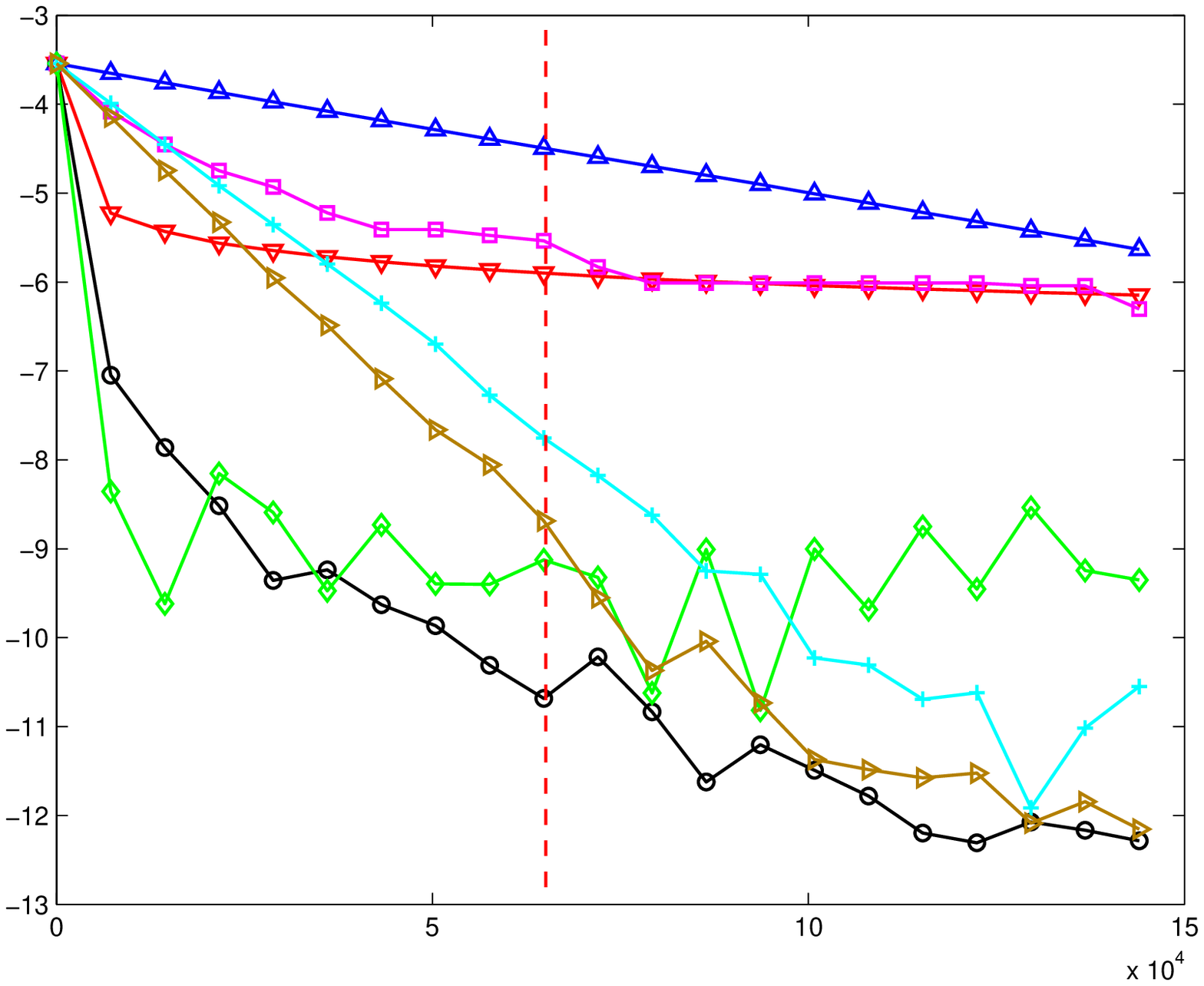}   
            \\ 
            3. {\sc w8a} &
            4. {\sc ijcnn1} & 
            5. {\sc real-sim} 
            \\
             \includegraphics[width=0.3\linewidth]{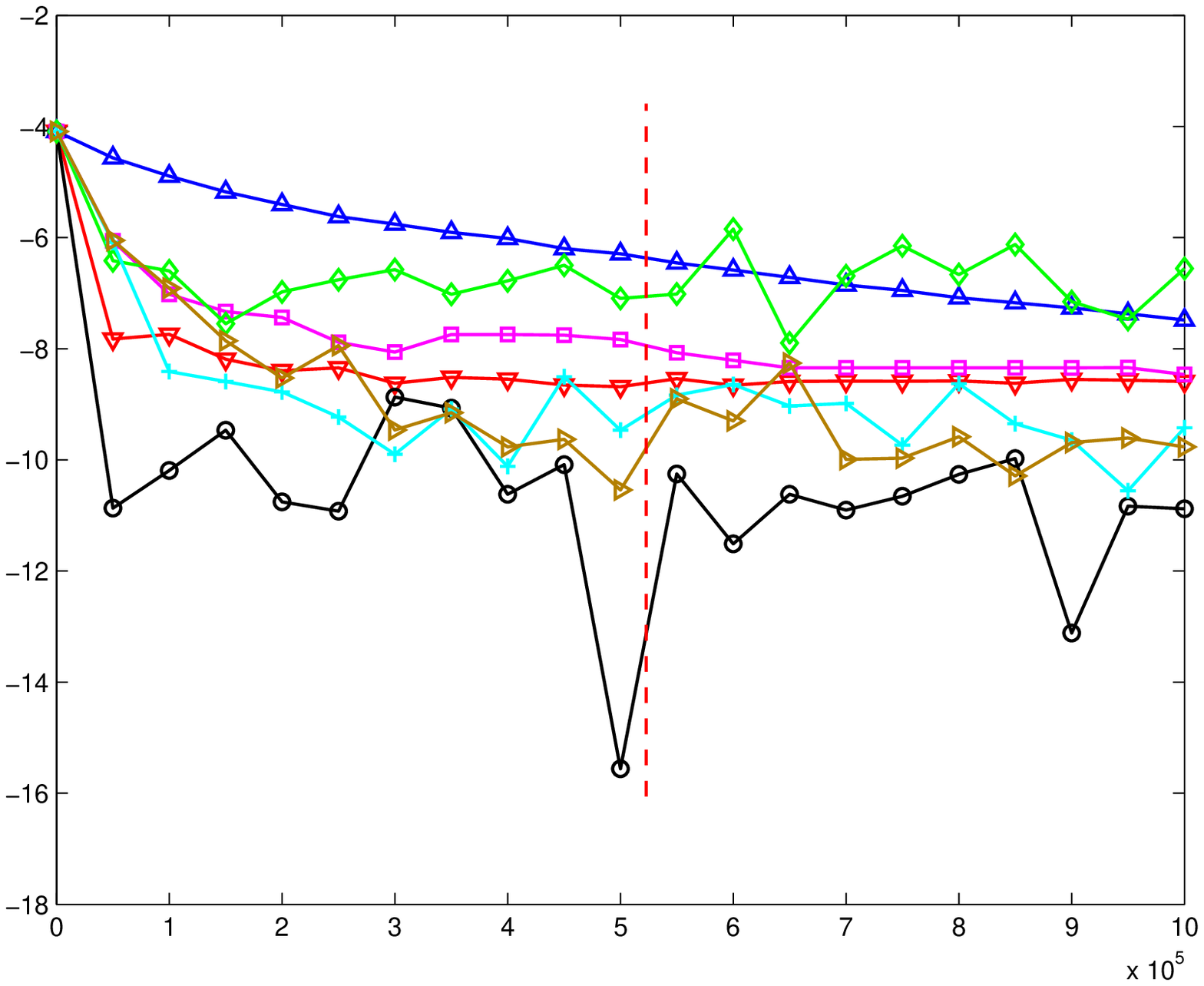} & 
             \includegraphics[width=0.3\linewidth]{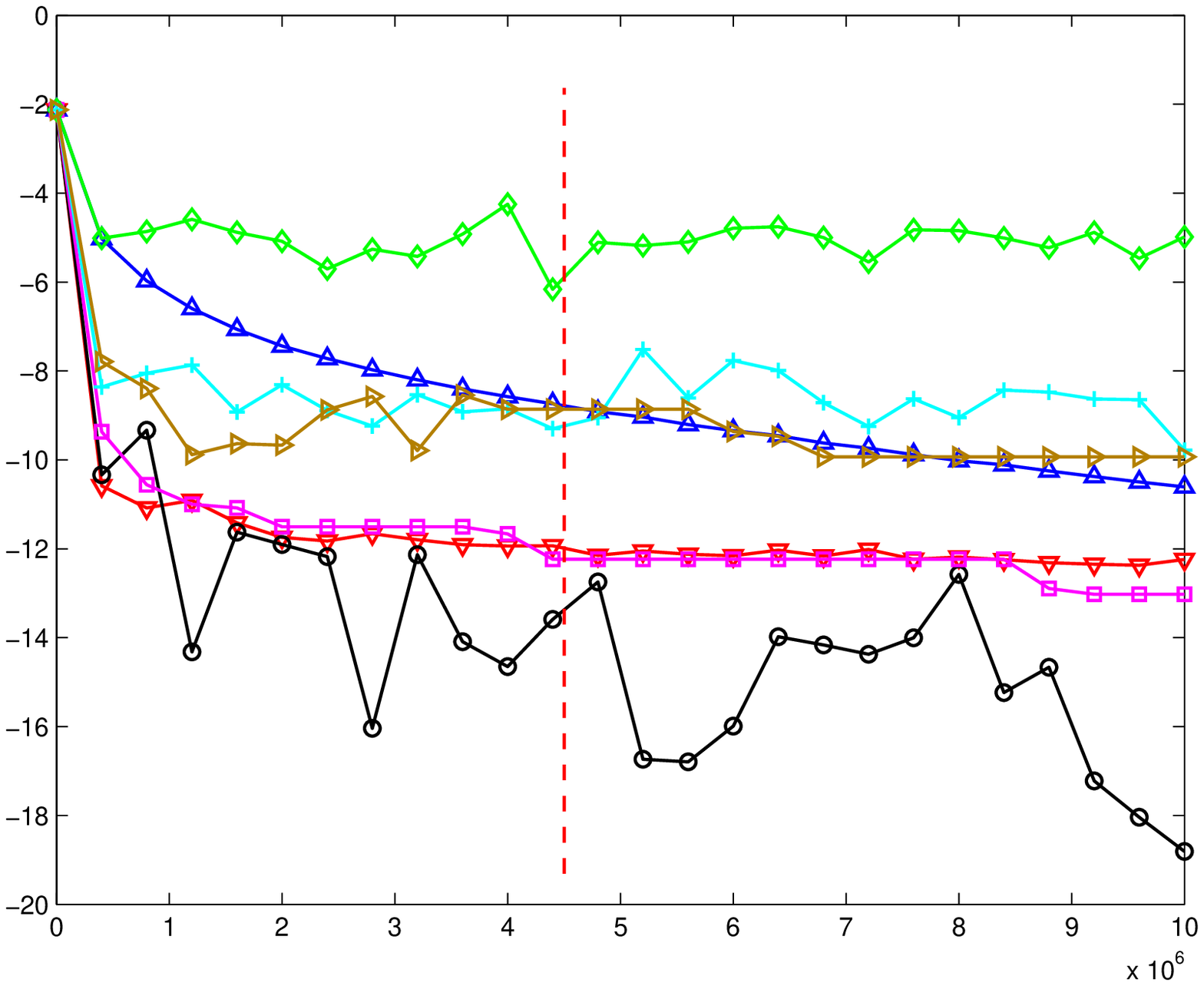} &  \\ 
            6. {\sc covtype} & 
            7. {\sc SUSY} & 
	  \end{tabular}
          \caption{{\it Suboptimality on the expected risk}.~The
          vertical axis shows the suboptimality of the expected risk, i.e. $\log_{2}
          \E_{10} \left[ \risk_{\S}(\w^t) - \risk_{\S}(\w^*_{\T}) \right]$,
          where $\S$ is a test set which includes 10\% of the data
          and $\w^*_{\T}$ is the optimum of the empirical risk on $\T$. The vertical red dashed line is drawn after exactly one epoch over the data.
          }
          \label{fig:results_test}
	\end{center}
	
\end{figure*}
%----------------------------------------------------------------------------

\subsection{Experiment on synthetic data}

%We first present some experiments results on a synthetic dataset for which we can easily manipulate the condition number that appears in the convergence rate given in Lemma~\ref{lemma:saga}.
%$\left(\frac{\kappa}{n}\right)^2$ term, while the generalization bound $\bound(n)$ is independent of the condition number $\kappa$.

We consider linear regression, where inputs $\a \in \Re^d$ are drawn from a Gaussian distribution $\mathcal{N}(0,\Sigma_{d \times d})$ and outputs are corrupted by additive noise $y = \langle \x, \w^* \rangle + \epsilon$,  $\epsilon \sim \mathcal{N}\left(0,\sigma^2\right)$. We are given $n$ i.i.d observations of this model, $\S = \{ (\a_i,y_i)\}_{i=1}^n$, from which we compute the least squares risk $ \risk_\S(\w) = \frac{1}{n} \sum_{i=1}^n \left(\langle \a_i, \w\rangle -	y_i\right)^2$.

By considering the matrix $A_n$ to be a row-wise arrangement of the input vectors $\a_i$, we can write the Hessian matrix of $\risk_n(\w) $ as $\Sigma_n = \frac{1}{n} A_n^T A_n$. When $n \gg d$, the matrix $\Sigma_n$ converges to $\Sigma$ and we can therefore assume that $\risk_n(\w)$ is $\mu$-strongly convex and $L$-Lipschitz where the constants $\mu$ and $L$ are the smallest and largest eigenvalues of $\Sigma$. We experiment with two different values for the condition number $\kappa$.

\paragraph{Case $\kappa = \sqrt{n}$:} We use a diagonal  $\Sigma$ with elements decreasing from $1$ to $\frac{1}{\sqrt{n}}$, hence $\kappa = \sqrt{n}$. In this particular case the analysis derived in Lemma~\ref{lemma:twopass} predicts an upper bound $\U(n,n) < \bigO(\frac{1}{n})$ which is confirmed by the results shown in Figure~\ref{fig:slopes}.

\paragraph{Case $\kappa = n^{\frac{3}{4}}$:}

When $\kappa = n^{\frac{3}{4}}$, the term $\left(\frac{\kappa}{n}\right)^2$ is the dominating term in the proposed upper-bound. In this case, $\U(n,n)$ is thus upper-bounded by $\bigO\left(\frac{1}{\sqrt{n}}\right)$, which is once again verified experimentally in Figure~\ref{fig:slopes}.

\subsection{Experiments on Real Datasets}

\begin{table}[t]
\caption{{\it Details of the real datasets used in our experiments.} All datasets were selected from the LIBSVM dataset collection.}
\label{table:datasets}
\vskip 0.15in
\begin{center}
\begin{small}
\begin{sc}
\begin{tabular}{lcc} 
\hline
\abovespace\belowspace
Dataset & Size & Number of features 
\\
\hline
\abovespace
rcv1.binary &  20242 & 47236 \\ 
a9a & 32561 & 123 \\
w8a & 49749 & 300 \\
ijcnn1    & 49990  & 22 \\
real-sim &  72309 & 20958 \\
% epsilon & 400000 & 2000 \\
covtype.binary & 581012 & 54 \\
SUSY & 5000000 & 18
\belowspace
\\
\hline
\end{tabular}
\end{sc}
\end{small}
\end{center}
\vskip -0.1in
\end{table}

We also ran experiments on several real-world datasets in order to compare the performance of \methodname{} to state-of-the-art methods. The details of the datasets are shown in Table \ref{table:datasets}. Throughout all the experiments we used the logistic loss with a regularizer
$\lambda = \frac{1}{\sqrt{n}}$\footnote{We also present some additional results for various regularizers of the form $\lambda = \frac{1}{n^p}, p < 1$ in the appendix}. Figures~\ref{fig:results}, and~\ref{fig:results_test} show the suboptimality on the empirical risk and expected risk after a single pass over the datasets. The various parameters used for the baseline methods are described in Table~\ref{table:exp_set}. A critical factor in the performance of most baselines, especially SGD, is the selection of the step-size. We picked the best-performing step-size within the common range guided by existing theoretical analyses, specifically $\eta = 1/L$ and $\eta = \frac{C}{C + \mu t }$ for various values of $C$. Overall, we can see that \methodname{} performs very well, both as an optimization as well as a learning algorithm. SGD is also very competitive and typically achieves faster convergence than the other baselines, however, its behaviour is not stable throughout all the datasets. The SGD variant with decreasing step-size is typically very fast in the early stages but then slows down after a certain number of steps. The results on the RCV dataset are somehow surprising as SGD with constant step-size clearly outperforms all methods but we show in the appendix that its behaviour gets worse as we increase the condition number. As can be seen very clearly, \methodname{} yields excellent solutions in terms of expected risk after one pass (see suboptimality values that intersect with the vertical red dashed lines).

% !TEX root = adapt-ss.tex

\section{Conclusion}

We have presented a new methodology to exploit the trade-off between computational and statistical complexity, in order to achieve fast convergence to a statistically efficient solution. Specifically, we have focussed on a modification of SAGA and suggested a simple dynamic sampling schedule that adds one new data point every other update step. Our analysis shows competitive convergence rates both in term of suboptimality on the empirical risk as well as (more importantly) the expected risk in a one pass or a two pass setting. These results have been validated experimentally. 

Our approach depends on the underlying optimization method only through its convergence rate for minimizing an empirical risk. We thus suspect that a similar sample size adaption is applicable to a much wider range of algorithms, including to non-convex optimization methods for deep learning. 

\newpage
\bibliography{bibliography}

\begin{thebibliography}{28}
\providecommand{\natexlab}[1]{#1}
\providecommand{\url}[1]{\texttt{#1}}
\expandafter\ifx\csname urlstyle\endcsname\relax
  \providecommand{\doi}[1]{doi: #1}\else
  \providecommand{\doi}{doi: \begingroup \urlstyle{rm}\Url}\fi

\bibitem[Babanezhad et~al.(2015)Babanezhad, Ahmed, Virani, Schmidt,
  Kone{\v{c}}n{\`y}, and Sallinen]{babanezhad2015stop}
Babanezhad, Reza, Ahmed, Mohamed~Osama, Virani, Alim, Schmidt, Mark,
  Kone{\v{c}}n{\`y}, Jakub, and Sallinen, Scott.
\newblock Stop wasting my gradients: Practical svrg.
\newblock \emph{Advances in Neural Information Processing Systems}, 2015.

\bibitem[Bartlett et~al.(2005)Bartlett, Bousquet, and
  Mendelson]{bartlett2005local}
Bartlett, Peter~L, Bousquet, Olivier, and Mendelson, Shahar.
\newblock Local rademacher complexities.
\newblock \emph{Annals of Statistics}, pp.\  1497--1537, 2005.

\bibitem[Bartlett et~al.(2006)Bartlett, Jordan, and
  McAuliffe]{bartlett2006convexity}
Bartlett, Peter~L, Jordan, Michael~I, and McAuliffe, Jon~D.
\newblock Convexity, classification, and risk bounds.
\newblock \emph{Journal of the American Statistical Association}, 101\penalty0
  (473):\penalty0 138--156, 2006.

\bibitem[Bottou(2010)]{bottou2010large}
Bottou, L{\'e}on.
\newblock Large-scale machine learning with stochastic gradient descent.
\newblock In \emph{Proceedings of COMPSTAT'2010}, pp.\  177--186. Springer,
  2010.

\bibitem[Boucheron et~al.(2005)Boucheron, Bousquet, and
  Lugosi]{boucheron2005theory}
Boucheron, St{\'e}phane, Bousquet, Olivier, and Lugosi, G{\'a}bor.
\newblock Theory of classification: A survey of some recent advances.
\newblock \emph{ESAIM: probability and statistics}, 9:\penalty0 323--375, 2005.

\bibitem[Bousquet(2002)]{bousquet2002concentration}
Bousquet, Olivier.
\newblock Concentration inequalities and empirical processes theory applied to
  the analysis of learning algorithms.
\newblock \emph{PhD thesis, Ecole Polytechnique}, 2002.

\bibitem[Bousquet \& Bottou(2008)Bousquet and Bottou]{bousquet2008tradeoffs}
Bousquet, Olivier and Bottou, L{\'e}on.
\newblock The tradeoffs of large scale learning.
\newblock In \emph{Advances in Neural Information Processing Systems}, pp.\
  161--168, 2008.

\bibitem[Boyd \& Vandenberghe(2004)Boyd and Vandenberghe]{boyd04}
Boyd, Stephen and Vandenberghe, Lieven.
\newblock \emph{Convex Optimization}.
\newblock Cambridge University Press, New York, NY, USA, 2004.

\bibitem[Chandrasekaran \& Jordan(2013)Chandrasekaran and
  Jordan]{chandrasekaran2013computational}
Chandrasekaran, Venkat and Jordan, Michael~I.
\newblock Computational and statistical tradeoffs via convex relaxation.
\newblock \emph{Proceedings of the National Academy of Sciences}, 110\penalty0
  (13):\penalty0 E1181--E1190, 2013.

\bibitem[Defazio et~al.(2014)Defazio, Bach, and
  Lacoste-Julien]{defazio2014saga}
Defazio, Aaron, Bach, Francis, and Lacoste-Julien, Simon.
\newblock Saga: A fast incremental gradient method with support for
  non-strongly convex composite objectives.
\newblock In \emph{Advances in Neural Information Processing Systems}, pp.\
  1646--1654, 2014.

\bibitem[Defazio et~al.(2015)Defazio, Caetano, and Domke]{defazio2014finito}
Defazio, Aaron~J, Caetano, Tib{\'e}rio~S, and Domke, Justin.
\newblock Finito: A faster, permutable incremental gradient method for big data
  problems.
\newblock In \emph{The international conference on Machine learning}, 2015.

\bibitem[Frostig et~al.(2015)Frostig, Ge, Kakade, and Sidford]{frostig15}
Frostig, Roy, Ge, Rong, Kakade, Sham~M., and Sidford, Aaron.
\newblock Competing with the empirical risk minimizer in a single pass.
\newblock In \emph{The Conference on Learning Theory}, pp.\  728--763, 2015.

\bibitem[He \& Tak{\'{a}}c(2015)He and Tak{\'{a}}c]{HeT15}
He, Xi and Tak{\'{a}}c, Martin.
\newblock Dual free {SDCA} for empirical risk minimization with adaptive
  probabilities.
\newblock \emph{CoRR}, abs/1510.06684, 2015.

\bibitem[Hofmann et~al.(2015)Hofmann, Lucchi, Lacoste-Julien, and
  McWilliams]{hofmann2015variance}
Hofmann, Thomas, Lucchi, Aurelien, Lacoste-Julien, Simon, and McWilliams,
  Brian.
\newblock Variance reduced stochastic gradient descent with neighbors.
\newblock In \emph{Advances in Neural Information Processing Systems 28}, pp.\
  2296--2304. Curran Associates, Inc., 2015.

\bibitem[Johnson \& Zhang(2013)Johnson and Zhang]{johnson2013accelerating}
Johnson, Rie and Zhang, Tong.
\newblock Accelerating stochastic gradient descent using predictive variance
  reduction.
\newblock In \emph{Advances in Neural Information Processing Systems}, pp.\
  315--323, 2013.

\bibitem[Kone{\v{c}}n{\`y} \& Richt{\'a}rik(2013)Kone{\v{c}}n{\`y} and
  Richt{\'a}rik]{konevcny2013semi}
Kone{\v{c}}n{\`y}, Jakub and Richt{\'a}rik, Peter.
\newblock Semi-stochastic gradient descent methods.
\newblock \emph{arXiv preprint arXiv:1312.1666}, 2013.

\bibitem[Kushner \& Yin(2003)Kushner and Yin]{kushner2003stochastic}
Kushner, Harold~J and Yin, George.
\newblock \emph{Stochastic approximation and recursive algorithms and
  applications}, volume~35.
\newblock Springer Science \& Business Media, 2003.

\bibitem[Lucic et~al.(2015)Lucic, Ohannessian, Karbasi, and
  Krause]{lucic2015tradeoffs}
Lucic, Mario, Ohannessian, Mesrob~I, Karbasi, Amin, and Krause, Andreas.
\newblock Tradeoffs for space, time, data and risk in unsupervised learning.
\newblock In \emph{AISTATS}, 2015.

\bibitem[Mahdavi et~al.(2013)Mahdavi, Zhang, and Jin]{mahdavi2013mixed}
Mahdavi, Mehrdad, Zhang, Lijun, and Jin, Rong.
\newblock Mixed optimization for smooth functions.
\newblock In \emph{Advances in Neural Information Processing Systems}, pp.\
  674--682, 2013.

\bibitem[Moulines \& Bach(2011)Moulines and Bach]{moulines2011non}
Moulines, Eric and Bach, Francis~R.
\newblock Non-asymptotic analysis of stochastic approximation algorithms for
  machine learning.
\newblock In \emph{Advances in Neural Information Processing Systems}, pp.\
  451--459, 2011.

\bibitem[Polyak \& Juditsky(1992)Polyak and Juditsky]{polyak1992acceleration}
Polyak, Boris~T and Juditsky, Anatoli~B.
\newblock Acceleration of stochastic approximation by averaging.
\newblock \emph{SIAM Journal on Control and Optimization}, 30\penalty0
  (4):\penalty0 838--855, 1992.

\bibitem[Robbins \& Monro(1951)Robbins and Monro]{robbins1951stochastic}
Robbins, Herbert and Monro, Sutton.
\newblock A stochastic approximation method.
\newblock \emph{The Annals of Mathematical Statistics}, pp.\  400--407, 1951.

\bibitem[Roux et~al.(2012)Roux, Schmidt, and Bach]{roux2012stochastic}
Roux, Nicolas~L, Schmidt, Mark, and Bach, Francis~R.
\newblock A stochastic gradient method with an exponential convergence rate for
  finite training sets.
\newblock In \emph{Advances in Neural Information Processing Systems}, pp.\
  2663--2671, 2012.

\bibitem[Schmidt et~al.(2013)Schmidt, Roux, and Bach]{schmidt2013minimizing}
Schmidt, Mark, Roux, Nicolas~Le, and Bach, Francis.
\newblock Minimizing finite sums with the stochastic average gradient.
\newblock \emph{arXiv preprint arXiv:1309.2388}, 2013.

\bibitem[Shalev-Shwartz \& Zhang(2013)Shalev-Shwartz and
  Zhang]{shalev2013stochastic}
Shalev-Shwartz, Shai and Zhang, Tong.
\newblock Stochastic dual coordinate ascent methods for regularized loss.
\newblock \emph{The Journal of Machine Learning Research}, 14:\penalty0
  567--599, 2013.

\bibitem[Vapnik(1998)]{vapnik1998statistical}
Vapnik, Vlamimir.
\newblock \emph{Statistical learning theory}, volume~1.
\newblock Wiley New York, 1998.

\bibitem[Wang et~al.(2016)Wang, Wang, and Srebro]{wang2016reducing}
Wang, Jialei, Wang, Hai, and Srebro, Nathan.
\newblock Reducing runtime by recycling samples.
\newblock \emph{arXiv preprint arXiv:1602.02136}, 2016.

\bibitem[Zhang et~al.(2013)Zhang, Mahdavi, and Jin]{zhang2013linear}
Zhang, Lijun, Mahdavi, Mehrdad, and Jin, Rong.
\newblock Linear convergence with condition number independent access of full
  gradients.
\newblock In \emph{Advances in Neural Information Processing Systems}, pp.\
  980--988, 2013.

\end{thebibliography}
\bibliographystyle{icml2016}

\newpage
\appendix
\onecolumn
\section{Appendix}
\label{App:table_details}

\subsection{Proofs}
\textbf{Proof of Lemma~\ref{lemma:saga}}.
\begin{proof} \label{lemma:proof_saga} 
	We start with the convergence rate of SAGA established in~\cite{defazio2014saga} as
	\begin{equation} \label{eq:saga_convergence_solution_space}
		\E_{\cal A} \left[\| \w^{t} - \w^*_\S \|^2\right] \leq \rho_{|\S|}^{t} 
		\left[\| \w^{0} - \w^*_\S \|^2  + \frac{|\S|}{\mu |\S| + L } \left(
		\risk_\S(\w^0) - \langle \nabla\risk_\S(\w^*_\S), \w^0 - \w^*_\S \rangle -
		\risk_\S^*\right) \right].
	\end{equation}
	We then use the $L$-smoothness assumption of $f_\x(\w)$ to relate the suboptimality on the function values to the bound in Eq.~\eqref{eq:saga_convergence_solution_space}.
	\begin{align*}
		 \E_{\cal A} \left[|\risk_\S(\w^{t}) - \risk_\S(\w^*_\S)|\right] & = \E_{\cal
		 A} \left[|\E_{\x \in \S}\left[f_{\x}(\w^{t})\right] - \E_{\x \in \S}
		 \left[f_{\x}(\w^*_\S) \right]|\right]
		 \\
		 & \stackrel{L-\text{smoothness}}{\leq} L \E_{\cal A} \left[\| \w^{t} -
		 \w^*_\S \|^2\right] \\ 
		 & \stackrel{Eq.~\ref{eq:saga_convergence_solution_space}}{\leq} \rho_{|\S|}^t
		 C_{\S},
	\end{align*} 
	where $C_{\S}$ is the initial suboptimality on the empirical risk defined as: 
	\begin{align*} \label{eq:initial_error}
		C_{\S} = L
		\left[\| \w^{0} - \w^*_\S \|^2  + \frac{|\S|}{\mu |\S| + L } \left(
		\risk_\S(\w^0) - \langle \nabla\risk_\S(\w^*_\S), \w^0 - \w^*_\S \rangle -
		\risk_\S^*\right) \right]
	\end{align*}
	Note that this initial error depends on the set $\S$ and its size $|\S|$. In the following Lemma, we propose an upper bound on this initial error that is independent of $\S$
\end{proof}

\begin{lemma} \label{lemma:initial_error}
	 W.h.p, the initial suboptimality error of sample $\S$ is bounded by:
	\[
		C_\S \leq\initerror := \frac{4 L}{\mu} \left[\risk(\w^0) - \risk(\w^*)\right]
	\]
\end{lemma}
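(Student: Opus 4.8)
The plan is to bound $C_\S$ — whose definition involves $\|\w^0 - \w_\S^*\|^2$ and a loss-value gap, both referring to the empirical minimizer $\w_\S^*$ — by a quantity referring instead to the population minimizer $\w^*$, and then to control the resulting population-level expression. First I would simplify the definition of $C_\S$. Since $\risk_\S$ is $\mu$-strongly convex, we have $\|\w^0 - \w_\S^*\|^2 \le \frac{2}{\mu}\left(\risk_\S(\w^0) - \risk_\S^* \right)$, and the bracketed term $\risk_\S(\w^0) - \langle \nabla \risk_\S(\w_\S^*), \w^0 - \w_\S^*\rangle - \risk_\S^* = \risk_\S(\w^0) - \risk_\S^*$ because $\nabla \risk_\S(\w_\S^*) = 0$ at the unconstrained minimizer. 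Hence both terms collapse to multiples of $\risk_\S(\w^0) - \risk_\S^*$, giving $C_\S \le L\left(\frac{2}{\mu} + \frac{1}{\mu}\right)\left(\risk_\S(\w^0) - \risk_\S^*\right) \le \frac{3L}{\mu}\left(\risk_\S(\w^0) - \risk_\S^*\right)$, with a bit of room to spare before the claimed $\frac{4L}{\mu}$.

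Next I would pass from the empirical gap to the population gap. Write $\risk_\S(\w^0) - \risk_\S^* = \risk_\S(\w^0) - \risk_\S(\w_\S^*) \le \risk_\S(\w^0) - \risk_\S(\w^*)$, since $\w_\S^*$ minimizes $\risk_\S$. Now I would use the uniform-convergence bound of Eq.~\eqref{eq:bound} twice: $\risk_\S(\w^0) \le \risk(\w^0) + \sup_{\w}|\risk_\S(\w) - \risk(\w)|$ and $-\risk_\S(\w^*) \le -\risk(\w^*) + \sup_{\w}|\risk_\S(\w) - \risk(\w)|$, so that $\risk_\S(\w^0) - \risk_\S(\w^*) \le \risk(\w^0) - \risk(\w^*) + 2\,\sup_{\w}|\risk_\S(\w) - \risk(\w)|$. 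The "w.h.p." qualifier in the statement is exactly what lets us treat this supremum as controlled: by Eq.~\eqref{eq:bound} its expectation over $\S$ is at most $\bound(n)$, and a high-probability version follows from the accompanying concentration (McDiarmid/chaining) machinery cited around that bound. Treating this fluctuation term as lower order — of the same order as the statistical accuracy $\bound(n)$ we are ultimately aiming for, hence absorbable — we get $C_\S \lesssim \frac{3L}{\mu}\left(\risk(\w^0) - \risk(\w^*)\right)$, and folding the fluctuation slack into the constant yields the stated $\initerror = \frac{4L}{\mu}\left(\risk(\w^0) - \risk(\w^*)\right)$.

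The main obstacle is making the "w.h.p." step rigorous in a way that genuinely produces a sample-independent bound: $\risk_\S(\w^0) - \risk_\S^*$ is itself random in $\S$, and one needs a uniform (over $\S$, not just in expectation) control of $\sup_\w |\risk_\S - \risk|$ to conclude that $C_\S \le \initerror$ simultaneously for every sample in the high-probability event — this is presumably why the lemma is phrased with "w.h.p." rather than "in expectation", and why the precise constant is cushioned at $4$ rather than $3$. A secondary subtlety is the choice of $\w^0$: the bound is clean precisely because $\w^0$ is a fixed point (e.g. $\mathbf 0$) independent of $\S$, so that $\risk(\w^0) - \risk(\w^*)$ is a genuine constant; if $\w^0$ depended on $\S$ one would need a uniform deviation bound at a data-dependent point, which the stated argument avoids. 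I would close by remarking that the factor $4$ (versus the $3$ the naive calculation gives) is exactly the budget that absorbs the $\bound(n)$-order fluctuation term.
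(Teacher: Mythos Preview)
Your proposal is correct and follows the same overall scheme as the paper: reduce $C_\S$ to a multiple of the empirical gap $\risk_\S(\w^0)-\risk_\S^*$ via strong convexity and optimality of $\w_\S^*$, then pass to the population gap $\risk(\w^0)-\risk(\w^*)$ via uniform convergence, and finally absorb the $\bound(|\S|)$-order residuals into the constant. Two small differences are worth noting. First, your telescope is shorter: you use $\risk_\S^*\le \risk_\S(\w^*)$ and then apply the uniform deviation only at the fixed points $\w^0$ and $\w^*$, whereas the paper inserts $\risk(\w_\S^*)$ as an intermediate term and invokes an ERM-consistency inequality $\risk(\w_\S^*)-\risk(\w^*)\le c\sup_\w|\risk_\S-\risk|$ to handle the data-dependent point; your route avoids that extra step. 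Second, the paper does not leave the ``absorb the fluctuation'' step implicit: it states as an explicit additional assumption that $\risk(\w^0)-\risk(\w^*)>(2+c)\,\bound(|\S|)$ with high probability, which is exactly what turns $\frac{2L}{\mu}\bigl[\risk(\w^0)-\risk(\w^*)+(2+c)\bound(|\S|)\bigr]$ into $\frac{4L}{\mu}\bigl[\risk(\w^0)-\risk(\w^*)\bigr]$. Your closing remarks identify this mechanism correctly; just be aware that it is an assumption, not a consequence of the preceding bounds.
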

\begin{proof}
We first use the fact that $\risk_\S(\w)$ is $\mu$-strongly~convex as well as
the optimality of $\w^*_\S$ to bound $C_\S$ as
	\begin{eqnarray*}
		& C_\S & := L \left(\| \w^0 - \w^*_\S \|^2 + \frac{|\S|}{\mu |\S| + L} \left[
		\risk_\S(\w^0) - \langle \nabla \risk_\S(\w^*_\S), \w^0 - \w^*_\S \rangle
		-\risk_\S(\w^*_\S) \right]\right) \\
		& & \leq \frac{L}{\mu
		}\left[ \risk_\S(\w^0) - \risk_\S(\w^*_\S) \right]+  \frac{|\S| L}{\mu |\S| +
		L} \left[ \risk_\S(\w^0) - \langle \nabla \risk_\S(\w^*_\S), \w^0 - \w^*_\S
		\rangle -\risk_\S(\w^*_\S) \right] \\ 
		& & \leq \frac{L}{\mu
		}\left[ \risk_\S(\w^0) - \risk_\S(\w^*_\S) \right]+  \frac{|\S| L}{\mu |\S| +
		L} \left[ \risk_\S(\w^0)-\risk_\S(\w^*_\S) \right] \\ 
		& & \stackrel{(L>0)}{\leq} \frac{2 L}{\mu} \left[ \risk_\S(\w^0) -
		\risk_\S(\w^*_\S) \right] \\ 
		& & \leq  \frac{2 L}{\mu} \left[ \risk_\S(\w^0) \stackrel{[1]}{\mp}
		\risk(\w^0) \stackrel{[2]}{\mp} \risk(\w^*) \stackrel{[3]}{\mp} \risk(\w^*_\S) - \risk_\S(\w^*_\S) \right]
		%& & \stackrel{\text{w.h.p}}{\leq} \bound(n) + \risk(\w^0) - \risk(\w^*) + c
		%\bound(n) + \bound(n) \\ 
		%& & \leq 2 \left[ \risk(\w^0) - \risk(\w^*) \right],
	\end{eqnarray*}

We use the generalization bounds in~\cite{vapnik1998statistical} to upper bound [1] and [2]. For [3], we used the uniform convergence rate of the ERM that implies~\cite{vapnik1998statistical}:
	\[
		\risk(\w^*_\S) - \risk(\w^*) \leq c \sup_{\w} | \risk_\S(\w) - \risk(\w)
		|,
	\]
	where $c$ is a constant.
We then get
\begin{align}
		C_\S \stackrel{\text{w.h.p}}{\leq} \frac{2 L}{\mu} \left[ \bound(|\S|) +
		\risk(\w^0) - \risk(\w^*) + c \bound(|\S|) + \bound(|\S|) \right].
\end{align}

We also make the further assumption that with high probability the initial
suboptimality is greater than a constant factor of the statistical accuracy,
i.e. $\risk(\w^0) - \risk(\w^*)  > (2+c) \bound(|\S|)$. We can then further
upper bound $C_\S$ as
\begin{align}
		C_\S \leq \frac{4 L}{\mu}  \left[ \risk(\w^0) - \risk(\w^*) \right].
\end{align}
\end{proof}
\begin{lemma}[for Proposition \ref{proposition:optsample}]
\begin{align*}
V(m) := \frac{D}{m} + C e^{-\frac n m}, \;  \text{then}\; \argmin_{0 < m \le n} V(m) = \frac{n}{\log \frac{ nC}D}
\end{align*}
\label{lemma:diffV}
\begin{proof}
\begin{align*}
& \frac{dV}{d m^{-1}} = D -n C e^{-\frac{n}{m}} 
\stackrel != 0  \\
\iff &  e^{-\frac nm} = \frac D { nC} \\
\iff & \frac nm = \log \frac { nC} D \\
% \iff & m = \frac{n}{\log \frac{ nC}D}
\end{align*}
Solving for $m$, this indeed corresponds to a minimum which can be verified by checking the boundary values $m=n$ and $m \to 0$. 
\end{proof}
\end{lemma}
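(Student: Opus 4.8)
The plan is to reduce this to a one-variable convex minimization via the substitution $u := 1/m$, which linearizes the first term and renders the objective manifestly convex. Writing $\tilde V(u) := Du + C e^{-nu}$ so that $V(m) = \tilde V(1/m)$, the map $m \mapsto 1/m$ is a strictly decreasing bijection of $(0,\infty)$ onto itself; hence minimizing $V$ over $m$ is equivalent to minimizing $\tilde V$ over the corresponding range of $u$, and the constraint $0 < m \le n$ translates into $u \ge 1/n$.

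First I would impose the stationarity condition. Differentiating gives $\tilde V'(u) = D - nC e^{-nu}$, and setting $\tilde V'(u) = 0$ yields $e^{-nu} = D/(nC)$, i.e.\ $nu = \log(nC/D)$. Thus the unique critical point is $u^* = \frac{1}{n}\log(nC/D)$, equivalently $m^* = n/\log(nC/D)$, which is exactly the claimed minimizer.

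Next I would certify global minimality. Since $\tilde V''(u) = n^2 C e^{-nu} > 0$ for all $u$, the function $\tilde V$ is strictly convex on $\mathbb{R}$, so its single stationary point $u^*$ is its unique global minimizer; pulling back through the bijection, $m^*$ is the unique minimizer of $V$ on $(0,\infty)$. It then remains to check feasibility and the boundary of the constrained domain. In the large-sample regime of interest $nC/D \ge e$, whence $\log(nC/D) \ge 1$ and $m^* \le n$ (equivalently $u^* \ge 1/n$), so the unconstrained optimum lies interior to the feasible set and the constraint $m \le n$ is inactive. For completeness, as $m \to 0^+$ the term $D/m \to \infty$ while $C e^{-n/m} \to 0$, so $V \to \infty$, whereas $V(n) = D/n + C/e$ is finite; this boundary comparison reconfirms that the interior stationary point is a minimum rather than a maximum.

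There is essentially no hard step here; the only point demanding care is recognizing that $V$ is not obviously convex in $m$ itself, so the substitution $u = 1/m$ is precisely what makes convexity---and hence the global optimality of the single stationary point---transparent. The secondary subtlety is the domain constraint $m \le n$, which the stated formula tacitly assumes is inactive, and which holds exactly when $nC/D \ge e$, the regime the paper operates in.
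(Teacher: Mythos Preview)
Your proof is correct and follows essentially the same approach as the paper: both differentiate with respect to $u = 1/m$, set the derivative to zero, and solve. Your version is slightly more thorough in that you invoke strict convexity of $\tilde V$ via the second derivative rather than merely checking boundary values, and you make explicit the condition $nC/D \ge e$ under which the constraint $m \le n$ is inactive, but the core argument is identical.
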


\begin{lemma}[for Theorem \ref{theorem:basic}]
\label{lemma:step-in-theorem}
\begin{align*}
\E_{\S|\T} \left[ \risk_\S(\w) - \risk_\T(\w) \right] \le \frac{n- m}{n} | \risk(\w) - \risk_\T(\w) |\,.
\end{align*}
\begin{proof}
\begin{align*}
    \E_{\S|\T} \left[ \risk_\S(\w) - \risk_\T(\w) \right] & = \E_{\S-\T|\T}
    \left[ \risk_\S(\w) - \risk_\T(\w) \right]\\
    & = \E_{\S-\T}
  \left[
  \frac{1}{n} \left[  \sum_{\x \in \T} f_{\x}(\w)  + \sum_{\y \in \S - \T}
  f_{\y}(\w)\right] - \frac{1}{m}  \sum_{\x \in \T} f_{\x}(\w)  \right]\\
  & =  
  \frac{n-m}{n} \E_{\S-\T} \left[ \frac{1}{n-m} \sum_{\y \in \S - \T}
  f_{\y}(\w) -  \risk_\T(\w) \right] \\
  & = \frac{n-m}{n} \E_{\S-\T} \left[ \frac{1}{n-m} \sum_{\y \in \S - \T}
  f_{\y}(\w) -  \risk_\T(\w) \right]\\ 
  &  = \frac{n-m}{n}\left[ \E_{\S-\T} \left[ \risk_{\S-\T}(\w) \right]-
  \risk_\T(\w) \right] \\ 
  & = \frac{n-m}{n} \left[\risk(\w) - \risk_\T(\w)\right]
\end{align*}
\end{proof}
\end{lemma}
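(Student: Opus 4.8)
The plan is to establish an exact identity and then trivially relax it to the stated inequality by passing to the absolute value. Conditioning on $\T$, the iterate $\w$ is fixed — it is an $(\epsilon,\T)$-optimal solution and therefore a function of $\T$ alone — so all the randomness remaining under $\E_{\S|\T}$ is carried by the $n-m$ fresh points $\S - \T$. First I would split the empirical average defining $\risk_\S$ into the contribution of the old points in $\T$ and the new points in $\S - \T$:
\begin{align*}
\risk_\S(\w) = \frac 1n \sum_{\x \in \T} f_\x(\w) + \frac 1n \sum_{\y \in \S - \T} f_\y(\w).
\end{align*}

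Next I would subtract $\risk_\T(\w) = \frac 1m \sum_{\x \in \T} f_\x(\w)$ and collect the two terms that sum over $\T$. Since $\frac 1n - \frac 1m = -\frac{n-m}{nm}$, the $\T$-contribution collapses to $-\frac{n-m}{n}\risk_\T(\w)$, while the $\S-\T$ contribution rescales to $\frac{n-m}{n}\risk_{\S-\T}(\w)$, where $\risk_{\S-\T}$ is the empirical risk over the $n-m$ held-out points. This gives the clean decomposition
\begin{align*}
\risk_\S(\w) - \risk_\T(\w) = \frac{n-m}{n}\left[\risk_{\S-\T}(\w) - \risk_\T(\w)\right],
\end{align*}
in which the factor $\frac{n-m}{n}$ emerges automatically from the reweighting between the two sample sizes.

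The final step is to apply $\E_{\S|\T}$. Because $\w$ is held fixed once $\T$ is given and each $\y \in \S - \T$ is drawn i.i.d.\ from $\Pdata$, linearity of expectation yields $\E_{\S-\T}\!\left[\risk_{\S-\T}(\w)\right] = \E_\x\!\left[f_\x(\w)\right] = \risk(\w)$ pointwise at the fixed $\w$. Substituting this produces the identity $\E_{\S|\T}\!\left[\risk_\S(\w) - \risk_\T(\w)\right] = \frac{n-m}{n}\left[\risk(\w) - \risk_\T(\w)\right]$, and the claimed bound follows at once by bounding the bracket by its modulus, using $\frac{n-m}{n} \ge 0$.

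The only place a careless argument could fail — and hence the one point I would flag — is the independence invoked in the last step: the equality $\E_{\S-\T}\!\left[\risk_{\S-\T}(\w)\right] = \risk(\w)$ is valid precisely because $\w$ does not depend on the held-out points $\S - \T$. This is the whole reason the expectation is conditioned on $\T$; the iterate is $\T$-measurable, so it is constant under $\E_{\S|\T}$, and the fresh points then constitute an unbiased estimate of the population risk at that fixed argument. Everything else is elementary algebra, so I anticipate no substantive obstacle beyond getting this measurability bookkeeping correct.
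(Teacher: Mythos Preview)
Your proposal is correct and follows essentially the same route as the paper: split $\risk_\S(\w)$ into the $\T$ and $\S-\T$ contributions, collect terms to extract the factor $\frac{n-m}{n}$, and then use that the fresh points give an unbiased estimate of $\risk(\w)$ at the $\T$-measurable $\w$. If anything, your write-up is slightly more complete, since you explicitly note the measurability point and the final passage to the absolute value, both of which the paper leaves implicit.
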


\newpage
\subsection{Optimality of the {\sc Linear} Strategy}
\label{App:optimality}
We here introduce a new notation and chose to represent a sample size schedule by a vector $\ts^n = \langle t_m \rangle,
m<n$ where $t_m$ denotes the number of iterations on sample size $m$. Note that the total number of iterations up to the sample size $n$ is $T =
\sum_{m<n} t_m$. We define $n^-$ as the sample size that we iterate on immediately before sample size $n$, i.e.
\begin{equation}
n^- = \max\{ k<n : t_k > 0\}.
\end{equation}

We now rewrite the suboptimality bound in terms of the sample size schedule $t_n$ as
\begin{align} \label{eq:A_recursive}
	A(\ts^n) &= \E_{\S} \left[ \risk_{\S}(\w(\ts^n)) - \risk_{\S}(\w^*) \right] \nonumber \\
	&= \rho_n^{t_n} \left( A(\ts^{n^-}) + \frac{n-n^-}{n} \bound(n^-)
	\right),
\end{align}
where the second equality is derived using Lemma~\ref{lemma:saga} and Theorem~\ref{theorem:basic}.
  
One can relate the upper bound $\U(n,n)$ to $A(\ts^n)$ using the following constrained program:
\begin{align}
	 & \U(n,n) = \min_{\ts^n} A(\ts^n) \label{eq:convex_formulation_for_u}\\ 
	  \text{Subject to } &\forall m\leq n: -t_m \leq 0   \nonumber\\ 
	  &  \sum_{m\leq n} t_m = n \nonumber
\end{align}

In the following we aim at showing that the {\sc Linear} Strategy is the optimal solution of Equation~\ref{eq:A_recursive}. We first prove a Lemma that will be used in the rest of our analysis.

 \begin{lemma}[Expansion of $A(\ts^n)$] \label{lemma:expanded_upperbound}
 if  $\bound(n) = D/n$, then
    \begin{align}
A(\ts^n) & := C(\ts^n) + \sum_{m=m_0+1}^n B_m(\ts^n), \quad \text{where}
\label{eq:A_expanded}\\
C(\ts^n) & := \initerror  \prod_{i=m_0}^n \left( \frac {i-1}i \right) ^{t_i}, \quad
B_m(\ts^n) := \frac{D}{(m-1)m} \prod_{i=m}^n \left( \frac {i-1}{i} \right)^{t_i}
\,. \label{eq:def_c_b}
\end{align}
\begin{proof}
	Although one could painstakingly unroll the recursivity in Equation~\ref{eq:A_recursive}, we here provide a simple induction proof. First, one can easily verify that the equation holds for $n = m_0$. For the inductive step, we assume it holds for $n^-$ and prove it holds for all $\{k: n^-<k\leq n\}$. According to the definition of $n^-$, we have $t_k = 0$ for all $n^-<k<n$, and therefore
\begin{equation} \label{eq:fake_products}
	\rho_k^{t_k} = \prod_{m=n^-+1}^{k} \rho_m^{t_m}.
\end{equation}

We will also make use of the following equality in our analysis:
\begin{align} \label{eq:series_expansion_for_bound}
	\frac{k-n^-}{k} \bound(n^-) & = \bound(n^-)	- \bound(k)
	\stackrel{(\bound(n) = D/n)}{=} \sum_{m=n^-+1}^k \bound(m -1) - \bound(m).
\end{align}
We are now ready to prove the inductive step.
\begin{align}
	A(\ts^k) & \stackrel{\text{EQ~\ref{eq:A_recursive}}}{=}
	\rho_k^{t_k} \left( A(\ts^{n^-}) + \frac{k-n^-}{k} \bound(n^-) \right) \\ 
	& = \rho_k^{t_k} \left( C(\ts^{n^-}) + \sum_{m=m_0+1}^{n^-}
	B_m(\ts^{n^-}) + \frac{k-n^-}{k} \bound(n^-) \right) \\ 
	&  \stackrel{\text{EQ~\ref{eq:def_c_b},~\ref{eq:fake_products}}}{=}
	C(\ts^k) + \sum_{m=m_0+1}^{n^-} B_m(\ts^{k}) + \rho_k^{t_k} \left(\frac{k-n^-}{k} \bound(n^-) \right) \\
	& \stackrel{\text{EQ~\ref{eq:series_expansion_for_bound}}}{=} C(\ts^k) + \sum_{m=m_0+1}^{n^-}
	B_m(\ts^{k}) + \rho_k^{t_k} \sum_{m=n^-+1}^{k}
	\frac{D}{(m-1)m}
	\\
	& \stackrel{\text{EQ~\ref{eq:fake_products}}}{=} 
	C(\ts^k) + \sum_{m=m_0+1}^{n^-}
	B_m(\ts^{k}) + \sum_{m = n^-+1}^{k} B_m(\ts^k) \\ 
	& =  C(\ts^k) + \sum_{m=m_0+1}^k B_m(\ts^k)
\end{align}
\end{proof}
 \end{lemma}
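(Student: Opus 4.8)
The plan is to prove the identity by induction on the sample size, mirroring exactly the recursive structure of $A(\ts^n)$ in Eq.~\eqref{eq:A_recursive}. For the base case $n=m_0$ the sum $\sum_{m=m_0+1}^{n}$ is empty, so the claim collapses to $A(\ts^{m_0}) = C(\ts^{m_0}) = \initerror\,\rho_{m_0}^{t_{m_0}}$, which is just the statement that applying $t_{m_0}$ iterations of geometric rate $\rho_{m_0}$ to the initial error $\initerror$ produces $\rho_{m_0}^{t_{m_0}}\initerror$. For the inductive step I would assume the expansion holds at $n^-$ and establish it simultaneously for every $k$ with $n^- < k \le n$; this is the natural granularity, since by definition of $n^-$ the schedule spends no iterations on any size strictly between $n^-$ and the next active size.

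Two elementary facts drive the step. First, because $t_i=0$ for all $n^- < i < k$, the single factor $\rho_k^{t_k}$ equals the telescoping product $\prod_{i=n^-+1}^{k}\rho_i^{t_i}$ (Eq.~\eqref{eq:fake_products}); substituting $\rho_i=(i-1)/i$ then yields the multiplicative relations $C(\ts^k)=\rho_k^{t_k}C(\ts^{n^-})$ and $B_m(\ts^k)=\rho_k^{t_k}B_m(\ts^{n^-})$ for every $m\le n^-$. Second, since $\bound(n)=D/n$ the switching cost telescopes, $\frac{k-n^-}{k}\bound(n^-)=\bound(n^-)-\bound(k)=\sum_{m=n^-+1}^{k}\frac{D}{(m-1)m}$ (Eq.~\eqref{eq:series_expansion_for_bound}). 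I would then plug the inductive hypothesis for $A(\ts^{n^-})$ into Eq.~\eqref{eq:A_recursive}, push the common $\rho_k^{t_k}$ through the first two terms to rewrite them as $C(\ts^k)+\sum_{m=m_0+1}^{n^-}B_m(\ts^k)$, and expand the remaining switching-cost term as $\rho_k^{t_k}\sum_{m=n^-+1}^{k}\frac{D}{(m-1)m}$.

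The closing recognition step is to observe that for each index $m$ in the range $n^- < m \le k$ the product $\prod_{i=m}^{k}\rho_i^{t_i}$ again collapses to $\rho_k^{t_k}$, since all intermediate exponents vanish, so $\rho_k^{t_k}\frac{D}{(m-1)m}$ is exactly $B_m(\ts^k)$. This converts the expanded switching cost into $\sum_{m=n^-+1}^{k}B_m(\ts^k)$, which merges with the earlier partial sum to give $C(\ts^k)+\sum_{m=m_0+1}^{k}B_m(\ts^k)$ and completes the induction.

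The main obstacle is bookkeeping rather than conceptual: one must track precisely that the \emph{fresh} terms $B_m(\ts^k)$ for $n^- < m \le k$ are generated solely by the telescoped switching cost, while the \emph{old} contributions $C(\ts^{n^-})$ and $B_m(\ts^{n^-})$ for $m\le n^-$ are merely rescaled by $\rho_k^{t_k}$. What makes everything align is that the "gap" factor $\rho_k^{t_k}$ and the telescoping of $\bound$ under $\bound(n)=D/n$ produce matching factors of $D/[(m-1)m]$, which is exactly where the hypothesis $\bound(n)=D/n$ is indispensable.
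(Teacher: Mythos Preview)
Your proposal is correct and follows essentially the same route as the paper's proof: induction along the active sample sizes, using the fact that $t_i=0$ on $(n^-,k)$ to collapse $\rho_k^{t_k}$ into the full product $\prod_{i=n^-+1}^{k}\rho_i^{t_i}$, and telescoping the switching cost via $\bound(n)=D/n$ to generate the new $B_m$ terms. Your explicit separation into ``old'' terms (rescaled by $\rho_k^{t_k}$) versus ``fresh'' terms (born from the telescoped switching cost) is exactly the bookkeeping the paper carries out in its chain of equalities.
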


Using the definitions provided in Lemma~\ref{lemma:expanded_upperbound}, we investigate the optimality conditions of the optimal sample size strategy. 
In the following, we simplify our notations and write $B_m$ and $C$ instead of $B_m(\ts^n)$ and $C(\ts^n)$.

As a first step in our analysis, we introduce the following equations based on the definitions of $B_m$ and $C$. 
\begin{align}
B_{m} & = \frac{1}{m(m-1)} \prod_{i \geq {m}} \left( \frac{i-1}{i} \right)^{t_i} = \frac{m+1}{m-1} \left( \frac {m-1}{m}\right)^{t_m} B_{m+1} \,.
\label{eq:mtomp1}
\end{align}
\begin{align}
\prod_{i=m}^n \left( \frac {i-1}{i} \right)^{t_i} & = \prod_{i=m}^n
\exp\left(\log\left(\left( \frac {i-1}{i} \right)^{t_i}\right)\right)
= \exp \left[
\sum_{i=m}^n t_i \log \left( 1- \frac 1i \right) \right]
\label{eq:exponential_trick_for_products}
% \simeq \exp \left[ \sum_{i=m}^n \frac{t_i}{i} \right] 
\,.
\end{align}
We now compute the derivative of $A(\ts^n_*)$ as
\begin{align} 
  \frac{\partial A(\ts^n_*)}{\partial t_m} & =  \log(1-\frac{1}{m}) \left(
   C(\ts^n) + \sum_{k=m_0+1}^m B_k(\ts^n)\right) \nonumber \\
  & \simeq -\frac{1}{m}  \left(  C + \sum_{k=m_0+1}^m
  B_k \right)\label{eq:partial_derivative} \,.
\end{align} 
$C(\ts^n)$
 		and $B_m(\ts^n)$ are \textit{log-convex} (hence \textit{convex})
 		functions with respect to $\ts^n$.
 		Since the sum operator preserves convexity \cite{boyd04}, $A(\ts_n)$ is
 		\textit{convex} as well. Let $\lambda_i$, $\nu$ denote the Lagrangian
 		coefficients associated with the inequality and equality
 		constraints respectively. According the KKT conditions~\cite{boyd04} for the the optimal solution, the following inequalities hold:
 		\begin{align}
 		     \lambda_m \geq 0 & \label{eq:KKT_positive_lambda}\\
 			 - \lambda_m t_m^* = 0 & \label{eq:KKT_slackness}\\ 
 			 \frac{\partial A(\ts^n_*)}{\partial t_m} - \lambda_m + \nu = 0 &
 			 \label{eq:KKT_derivatives}
 		\end{align}
 		According the above condition there are two possible cases for the partial
 		derivative $\frac{\partial A(\ts^n_*)}{\partial t_m}$: 
 		\begin{itemize}
 		  \item For the case of $t_m^* > 0$, the slackness
 		  condition~\ref{eq:KKT_slackness} implies that $\lambda_m = 0$. Then, according to the 
 		  condition~\ref{eq:KKT_derivatives}: 
 		  \begin{align}
 		  	& \frac{\partial A(\ts^n_*)}{\partial t_m} = - \nu \nonumber \\ 
 		  	\stackrel{\text{EQ.~\ref{eq:partial_derivative}}}{\Longrightarrow}
 		  	& \frac{1}{m}  \left(  C + \sum_{k=m_0+1}^m
  B_k\right) = \nu \label{eq:condition_for_nonzero_iteration}
 		  \end{align}
 		  \item For the case of $t_m^* = 0$, $\lambda_i > 0 (a.)$ holds
 		  based on the complementary slackness condition~\ref{eq:KKT_slackness}. 
 		  \begin{align}
 		  	&\frac{\partial A(\ts^n_*)}{\partial t_m} = \lambda_i - \nu
 		  	\stackrel{(a.)}{>} - \nu \nonumber \\ 
 		  	\stackrel{\text{EQ.~\ref{eq:partial_derivative}}}{\Longrightarrow}
 		  	& \frac{1}{m}  \left( C + \sum_{k=m_0 +1}^m
  B_k \right) < \nu \label{eq:condition_for_zero_iteration}
 		  \end{align}
 		\end{itemize}
 		
In the following two lemmas we use the conditions of optimality derived in Equations~\ref{eq:condition_for_nonzero_iteration} and~\ref{eq:condition_for_zero_iteration} to prove optimality of the {\sc Linear} Strategy. Specifically, we first prove that for the optimal strategy, $t_m>0$ for $m_0<m\leq n^-$ and $t_m = 0$ for $m>n^-$. We also prove the optimality of incrementing the sample size by one. In the second lemma, we show that $t_m^* \simeq 2$.
 
 \begin{lemma}[Optimality of sample size increment]
 \label{lemma:plus_one_optimality}
 For large enough $m$, a schedule with $t_{m}=0$ and $t_{m+1}>0$ cannot be optimal. 
\begin{proof}
 Note that by repeated application of Equation~\eqref{eq:mtomp1} we obtain 
 \begin{align} \label{eq:chain_of_inequalities}
B_{m+1} < B_{m} < \dots < B_{m^-+1}
\stackrel{\text{EQ.~\ref{eq:condition_for_nonzero_iteration}
\&~\ref{eq:condition_for_zero_iteration}}}{<}
\nu
\end{align}
where optimality conditions \textit{a.} $t_{m^-}>0$
(EQ.\ref{eq:condition_for_nonzero_iteration}) and \textit{b.} $t_{m^-+1} = 0$
(EQ.\ref{eq:condition_for_zero_iteration}) yeild the last inequality:
\begin{align}
		B_{m^-+1} & = \sum_{k = m_0 +1}^{m^-+1} B_k - \sum_{k = m_0 +1}^{m^-} B_k \mp
		 C \\ 
		& \stackrel{a.}{=}  \sum_{k = m_0 +1}^{m^-+1} B_k +  C - m \nu \\ 
		& \stackrel{b.}{<} (m+1) \nu - m \nu = \nu
\end{align}
On the other hand, optimality of \textit{a.}  $t_{m+1} > 0$
(EQ.\ref{eq:condition_for_nonzero_iteration}) and \textit{b.} $t_{m}=0$
(EQ.\ref{eq:condition_for_zero_iteration}) also imply $B_{m+1} > \nu$ which
is in contradiction with the previously established $B_{m+1} < \nu$. Indeed, we have
\begin{align}
		B_{m+1} & = \sum_{k = m_0 +1}^{m+1} B_k - \sum_{k = m_0 +1}^{m} B_k \mp
		 C \\ 
		& \stackrel{a.}{=} (m+1) \nu - \sum_{k = m_0 +1}^{m} B_k - C \\ 
		& \stackrel{b.}{>} (m+1) \nu - m \nu = \nu \;\;
\end{align}
\end{proof}
 \end{lemma}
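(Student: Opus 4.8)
The final statement to prove is \textbf{Lemma~\ref{lemma:plus_one_optimality}} (optimality of sample-size increment): for large enough $m$, a schedule with $t_m = 0$ and $t_{m+1} > 0$ cannot be optimal. The plan is to argue by contradiction using the KKT characterization derived just above: since $A(\ts^n)$ is convex in $\ts^n$ and the constraint set is a simplex-like polytope, the KKT conditions~\eqref{eq:KKT_positive_lambda}--\eqref{eq:KKT_derivatives} are necessary and sufficient for optimality, and they reduce (via Equation~\eqref{eq:partial_derivative}) to the two regime conditions~\eqref{eq:condition_for_nonzero_iteration} and~\eqref{eq:condition_for_zero_iteration}, namely $\frac1k\bigl(C + \sum_{j=m_0+1}^k B_j\bigr) = \nu$ whenever $t_k^*>0$ and $<\nu$ whenever $t_k^*=0$.

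The core of the argument is the monotonicity chain~\eqref{eq:chain_of_inequalities}. First I would establish from the recursion~\eqref{eq:mtomp1}, $B_m = \frac{m+1}{m-1}\left(\frac{m-1}{m}\right)^{t_m} B_{m+1}$, that for large $m$ the factor $\frac{m+1}{m-1}\bigl(\frac{m-1}{m}\bigr)^{t_m}$ is strictly less than $1$ whenever $t_m \ge 1$ (and in particular once $t_m$ is bounded below by the roughly-$2$ value coming from the companion lemma), so that $B_{m+1} < B_m$; iterating down to the last index $m^-$ with positive iteration count gives $B_{m+1} < B_m < \cdots < B_{m^-+1}$. Then I would use the telescoping identity $B_{m^-+1} = \sum_{j=m_0+1}^{m^-+1} B_j - \sum_{j=m_0+1}^{m^-} B_j$, substitute the optimality equality at $k=m^-$ (which gives $\sum_{j=m_0+1}^{m^-} B_j + C = m^-\nu$, hence the second sum plus $C$ equals $m^-\nu$) and the optimality inequality at $k=m^-+1$ with $t_{m^-+1}=0$ (which gives $\sum_{j=m_0+1}^{m^-+1}B_j + C < (m^-+1)\nu$), to conclude $B_{m^-+1} < (m^-+1)\nu - m^-\nu = \nu$, and therefore $B_{m+1} < \nu$ by the chain.

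Next I would derive the opposite inequality from the hypothesis of the lemma. Applying condition~\eqref{eq:condition_for_nonzero_iteration} at $k = m+1$ (since $t_{m+1}>0$) gives $\sum_{j=m_0+1}^{m+1} B_j + C = (m+1)\nu$, while condition~\eqref{eq:condition_for_zero_iteration} at $k=m$ (since $t_m=0$) gives $\sum_{j=m_0+1}^{m} B_j + C < m\nu$; subtracting, $B_{m+1} = (m+1)\nu - \bigl(\sum_{j=m_0+1}^{m}B_j + C\bigr) > (m+1)\nu - m\nu = \nu$. This contradicts $B_{m+1}<\nu$ from the previous paragraph, so no such schedule can be optimal.

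The main obstacle I anticipate is making the ``for large enough $m$'' quantifier precise: the step $B_{m+1} < B_m$ requires $\frac{m+1}{m-1}\bigl(\frac{m-1}{m}\bigr)^{t_m} < 1$, which is false for $t_m=1$ and small $m$ but holds once $t_m \ge 2$ and $m$ is past a fixed threshold (or, using $\bigl(1-\frac1m\bigr)^{t_m}\le 1 - \frac{t_m}{m} + O(1/m^2)$, once $t_m \ge 2$ and $m$ large). One must also be careful that the chain~\eqref{eq:chain_of_inequalities} is only valid when every intermediate index between $m^-+1$ and $m+1$ has $t=0$ (so that the $B$'s are comparable through repeated use of~\eqref{eq:mtomp1} with exponent $0$, which is harmless), and that $m^-$ here denotes the largest index below $m$ with $t_{m^-}>0$ — consistent with the hypothesis $t_m=0$. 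Apart from pinning down the threshold, the remaining manipulations are the routine telescoping and substitution steps sketched above.
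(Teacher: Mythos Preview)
Your proposal follows essentially the same route as the paper: derive a contradiction by showing $B_{m+1} < \nu$ via the chain $B_{m+1} < B_m < \cdots < B_{m^-+1} < \nu$ (anchored by the KKT conditions at $m^-$ and $m^-+1$), while simultaneously $B_{m+1} > \nu$ from the KKT conditions at $m$ and $m+1$. The telescoping substitutions you describe are exactly those in the paper's proof.

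There is, however, a confusion in how you justify the chain and in where the ``large $m$'' hypothesis enters. You write that $B_{m+1} < B_m$ requires the factor $\frac{m+1}{m-1}\bigl(\frac{m-1}{m}\bigr)^{t_m}$ to be strictly \emph{less} than $1$; but since $B_m$ equals that factor times $B_{m+1}$, one needs the factor to be \emph{greater} than $1$. More importantly, every index $k$ with $m^- < k \le m$ has $t_k = 0$ by the very definition of $m^-$, so on this range the factor is simply $\frac{k+1}{k-1} > 1$ and the monotonicity $B_{m^-+1} > \cdots > B_m > B_{m+1}$ is immediate, with no asymptotic condition on $m$ and no appeal to the companion lemma. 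The ``for large enough $m$'' in the statement is needed only for the first-order approximation $\log(1-1/m) \simeq -1/m$ used in~\eqref{eq:partial_derivative} (which underlies conditions~\eqref{eq:condition_for_nonzero_iteration}--\eqref{eq:condition_for_zero_iteration}), not for the chain itself. Once you correct this, the ``obstacle'' you anticipate disappears and your argument coincides with the paper's.
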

 \begin{lemma}[Optimality of two iterations]
 \label{lemma:two_iterations_optimality}
 	Consider $\ts^n_*$ as the minimizer of the optimization
 	problem~\ref{eq:convex_formulation_for_u}.
 	For sufficiently large $m:m_0<m\leq n^-$,
 	$t_m^* \simeq 2$.
 	\begin{proof}
 		Using Lemma~\ref{lemma:plus_one_optimality}, $t_m^*>0$ holds for $m_0<m\leq
 		n^-$. We proceed with optimality conditions \textit{a.} $t_m^*>0$ and
 		\textit{b.} $t_{m-1}^*>0$ in
 		equation~\ref{eq:condition_for_nonzero_iteration}.
 		\begin{align}
 			B_{m} & = \sum_{k = m_0 +1}^m B_k - \sum_{k = m_0 +1}^{m-1} B_k \mp
 			 C \\
 			& \stackrel{a.}{=} m \nu - \sum_{k = m_0 +1}^{m-1} B_k - C \\ 
 			& \stackrel{b.}{=} m \nu - (m-1) \nu = \nu
 		\end{align}
 		Consequently, $B_m = B_{m+1} = \nu$. Using Equation~\ref{eq:mtomp1}, one
 		conclude that $t_m^* \simeq 2 $: 
 		\begin{align}
 			\frac{m-1}{m+1} = \left( \frac{m-1}{m}  \right)^{t_m^*} \iff t_m^* =
 			\frac{\log \left( 1- \frac 2 {m+1} \right) }{\log \left( 1- \frac 1m \right)} \simeq
 			\frac{2m}{m+1} \simeq 2\,.
 		\end{align}
 	\end{proof}
 \end{lemma}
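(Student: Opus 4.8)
The plan is to turn the search over sample-size schedules into a finite-dimensional convex program and then read off the structure of its minimizer from the KKT conditions. First I would represent a schedule by the vector $\ts^n = \langle t_m \rangle_{m \le n}$, where $t_m$ is the number of {\methodname} steps taken at effective sample size $m$ (so $t_m = 0$ for $m$ below the starting size), subject to $t_m \ge 0$ and $\sum_m t_m = n$. Combining the per-iteration contraction of Lemma~\ref{lemma:saga} with the switching bound of Theorem~\ref{theorem:basic} yields the recurrence $A(\ts^n) = \rho_n^{t_n}\bigl(A(\ts^{n^-}) + \tfrac{n-n^-}{n}\bound(n^-)\bigr)$ for the guaranteed suboptimality, and minimizing $A$ over all admissible $\ts^n$ is exactly $\U(n,n)$ (this is the reformulation in Eq.~\eqref{eq:convex_formulation_for_u}). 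So it suffices to show that the {\sc Linear} schedule $t_m \equiv 2$ with increment $\triangle m = 1$ solves this program.

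Second, I would unroll the recurrence into the closed form $A(\ts^n) = \initerror \prod_i \bigl(\tfrac{i-1}{i}\bigr)^{t_i} + \sum_m \tfrac{D}{(m-1)m}\prod_{i \ge m}\bigl(\tfrac{i-1}{i}\bigr)^{t_i}$ (Lemma~\ref{lemma:expanded_upperbound}). This step is where the hypothesis $\bound(n) \propto D/n$ is essential: only then does the switching term telescope, $\tfrac{n-m}{n}\bound(m) = \sum_k \bigl(\bound(k-1)-\bound(k)\bigr)$, producing the clean product form. Writing each factor as $\exp\bigl(\sum_i t_i \log(1-1/i)\bigr)$ shows that every summand of $A$ is log-convex, hence convex, in $\ts^n$; since the feasible set is a simplex, the problem is convex and the KKT conditions are necessary and sufficient.

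Third, differentiating gives $\partial A/\partial t_m = \log(1-1/m)\bigl(C + \sum_{k \le m} B_k\bigr)$, where $C$ and $B_k$ denote the terms above. Complementary slackness then says: at an active coordinate $\tfrac{1}{m}(C + \sum_{k\le m}B_k) = \nu$, while at an inactive one the left side is strictly below $\nu$, with $\nu$ the equality-constraint multiplier. Using the ratio identity $B_m = \tfrac{m+1}{m-1}(\tfrac{m-1}{m})^{t_m}B_{m+1}$ (so the $B_m$ decrease strictly along the support) I would first rule out gaps: if $t_m = 0$ but $t_{m+1} > 0$, the active condition at $m+1$ forces $B_{m+1} > \nu$ while the chain of inequalities from the previous active index forces $B_{m+1} < \nu$, a contradiction — hence the support is a contiguous block and incrementing the sample size by exactly one is optimal (Lemma~\ref{lemma:plus_one_optimality}). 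Finally, applying the active condition at two consecutive sizes yields $B_m = B_{m+1} = \nu$, and substituting back into the ratio identity gives $t_m^\star = \log(1-\tfrac{2}{m+1}) / \log(1-\tfrac1m) \to 2$ as $m \to \infty$ (Lemma~\ref{lemma:two_iterations_optimality}); rounding to the integer schedule $t_m = 2$ is exactly the {\sc Linear} strategy.

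I expect the main obstacle to be the \emph{asymptotic} character of the last step: the relaxed stationary value $t_m^\star$ is not an integer, so the conclusion is only that the optimum approaches $t_m = 2$ for large $n$, which is why the statement is restricted to the large-scale regime $n > \kappa$. A related technicality is that the program ranges over real $t_m \ge 0$ whereas {\methodname} runs an integer number of steps; one should check that the rounding error is negligible relative to $\bound(n)$. The remaining work is routine: verifying log-convexity of the individual factors, checking that the boundary configurations of the minimization are suboptimal, and making the first-order expansion $\log(1-1/m) \approx -1/m$ precise enough to justify the "$\simeq$" statements.
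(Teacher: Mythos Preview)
Your proposal is correct and follows essentially the same route as the paper: reformulate as a convex program over $\ts^n$, unroll $A(\ts^n)$ into the product form of Lemma~\ref{lemma:expanded_upperbound}, derive the KKT active/inactive conditions (Eqs.~\eqref{eq:condition_for_nonzero_iteration}--\eqref{eq:condition_for_zero_iteration}), invoke Lemma~\ref{lemma:plus_one_optimality} to get a contiguous support, subtract the active conditions at $m$ and $m-1$ to obtain $B_m = B_{m+1} = \nu$, and then solve the ratio identity~\eqref{eq:mtomp1} for $t_m^\star \simeq 2$. Your closing remarks about integrality and the asymptotic nature of the ``$\simeq$'' are apt --- the paper makes exactly the same approximations without further comment.
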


\newpage
\subsection{Additional Experimental results}

\subsubsection{Comparison of the two adaptive sample size schemes for \methodname}

We here compare the {\sc Linear} and {\sc Alternating} schemes on the collection of real datasets presented in Table~\ref{table:datasets} for a regularizer $\lambda = n^{-\frac{1}{2}}$. The results for the empirical and expected risk shown in Figure~\ref{fig:results_iid} and Figure~\ref{fig:results_test_iid} show that the {\sc Alternating} scheme slightly outperforms the {\sc Linear} strategy.

%----------------------------------------------------------------------------
\begin{figure}[H]
	\begin{center}
          \begin{tabular}{@{}c@{\hspace{5mm}}c@{\hspace{5mm}}c@{}}
            \includegraphics[width=0.3\linewidth]{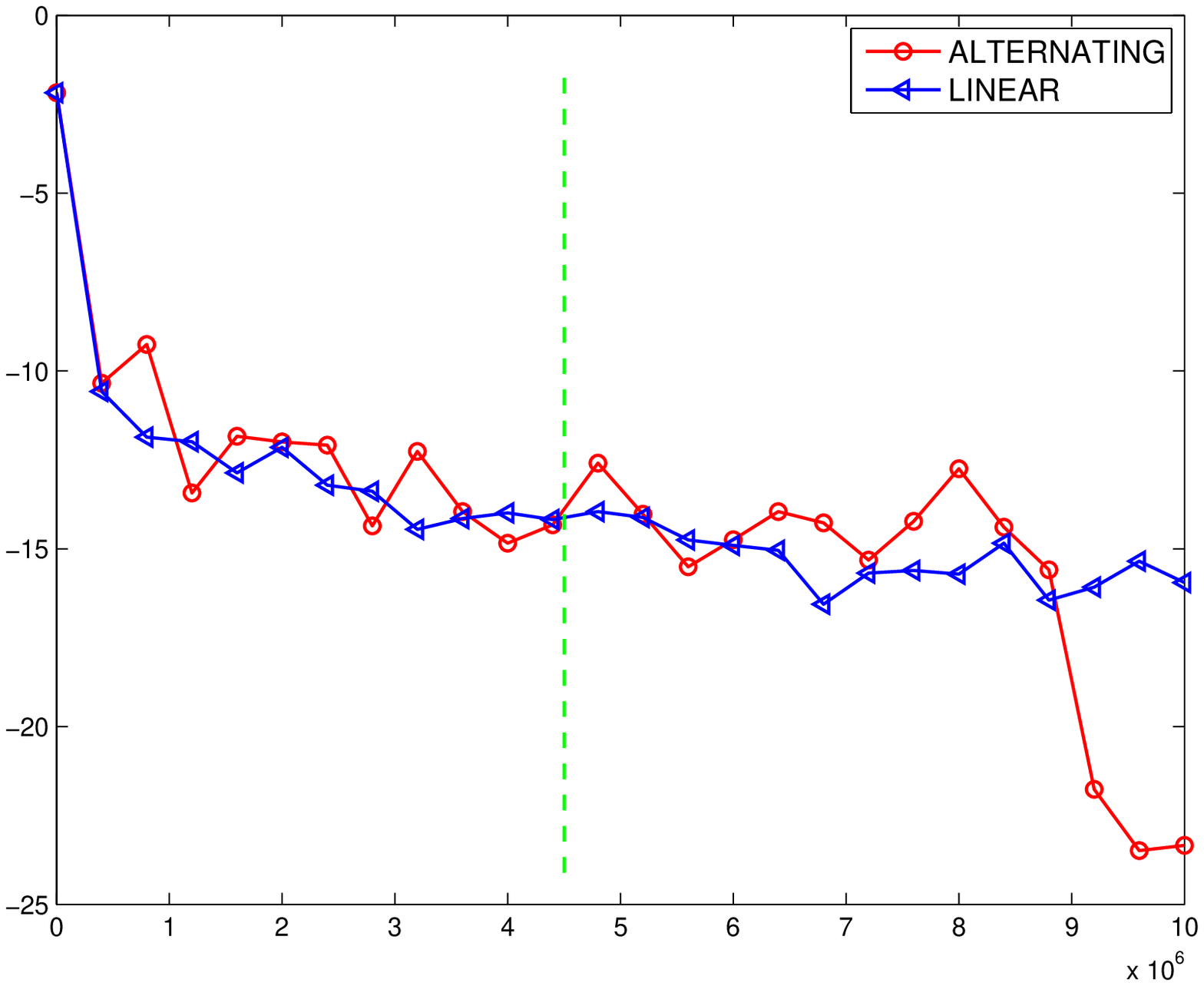} & 
            \includegraphics[width=0.3\linewidth]{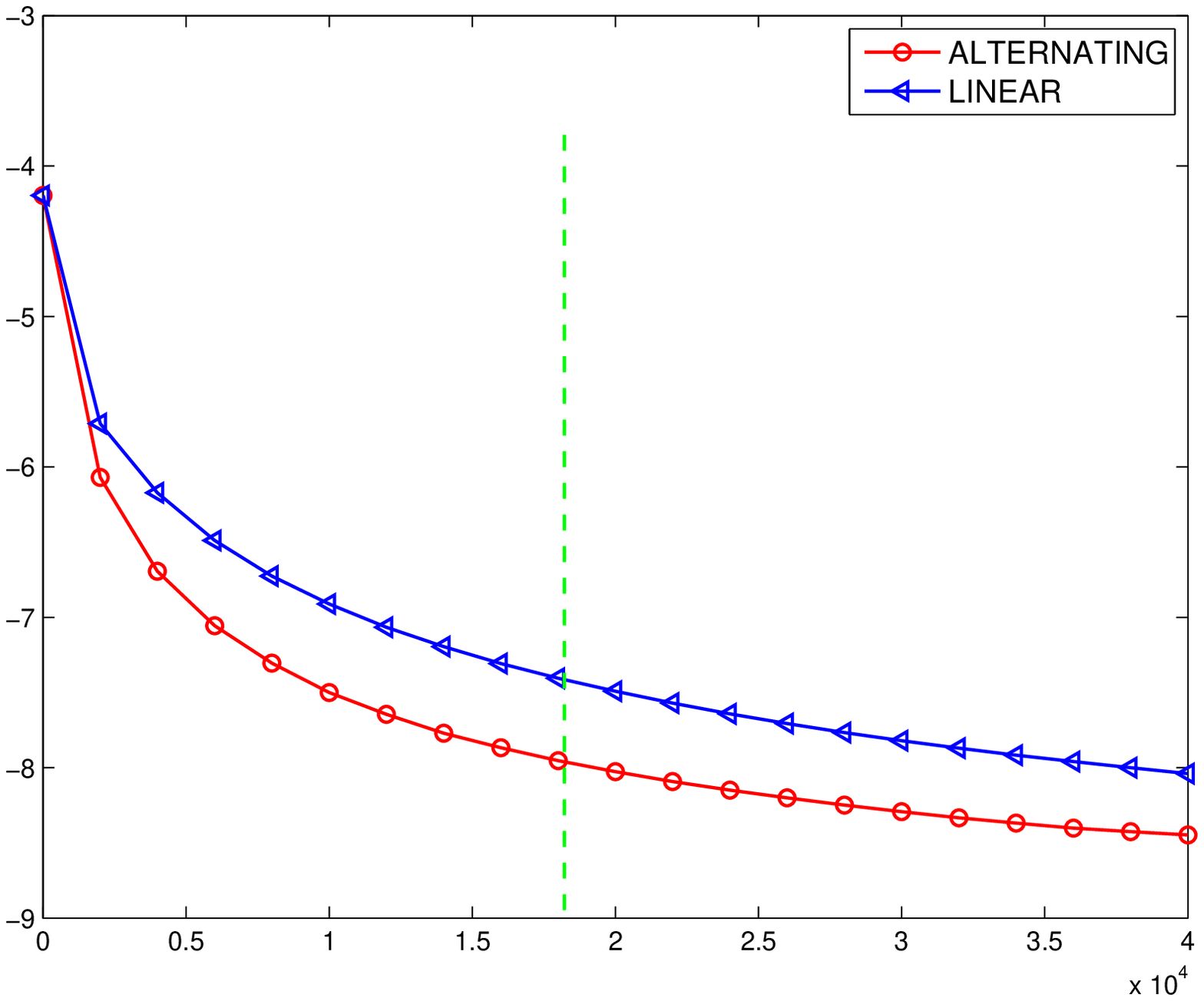} &
             \includegraphics[width=0.3\linewidth]{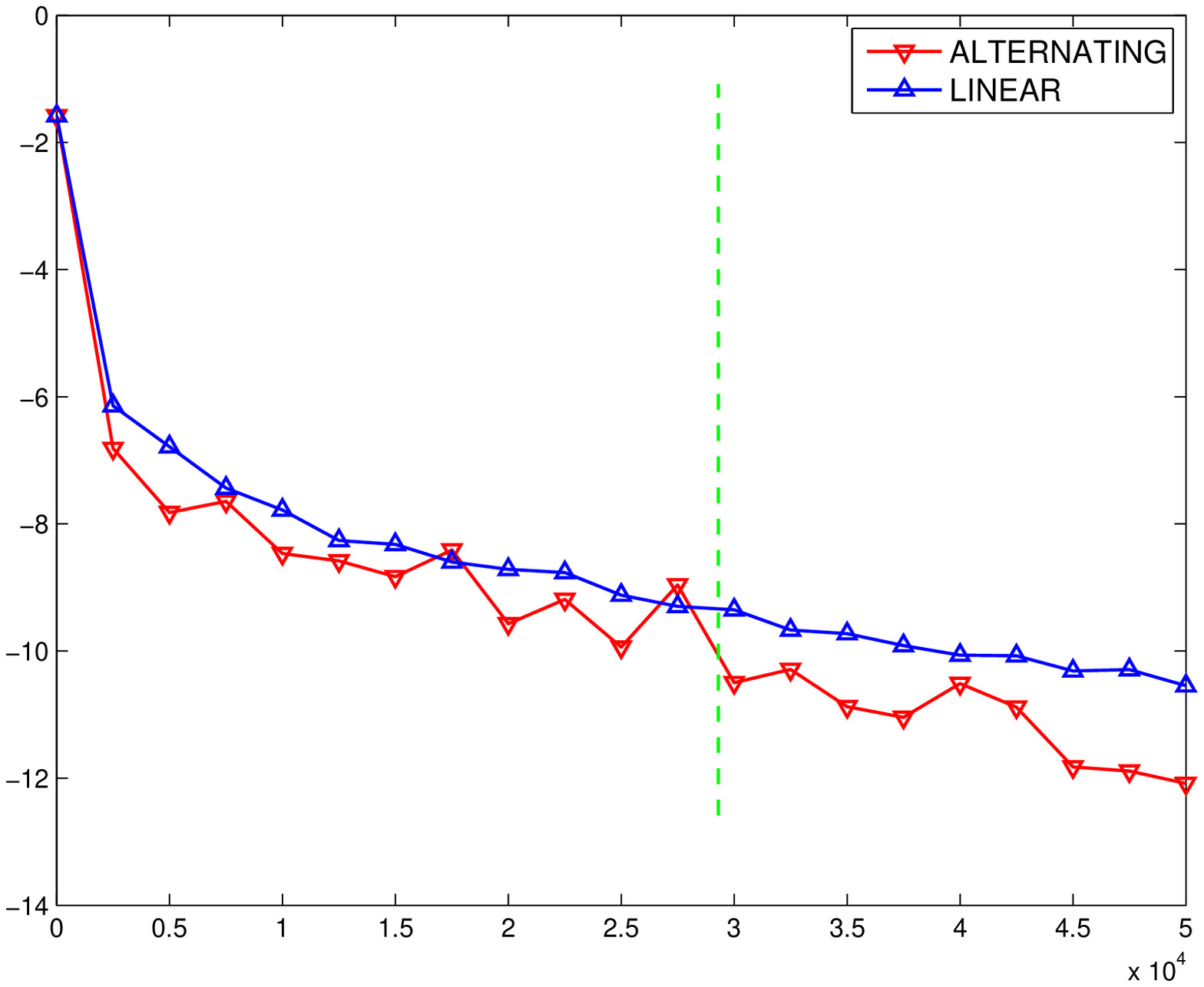} \\
            1. {\sc SUSY} &
            2. {\sc rcv} &
            3. {\sc a9a}
            \\
            \includegraphics[width=0.3\linewidth]{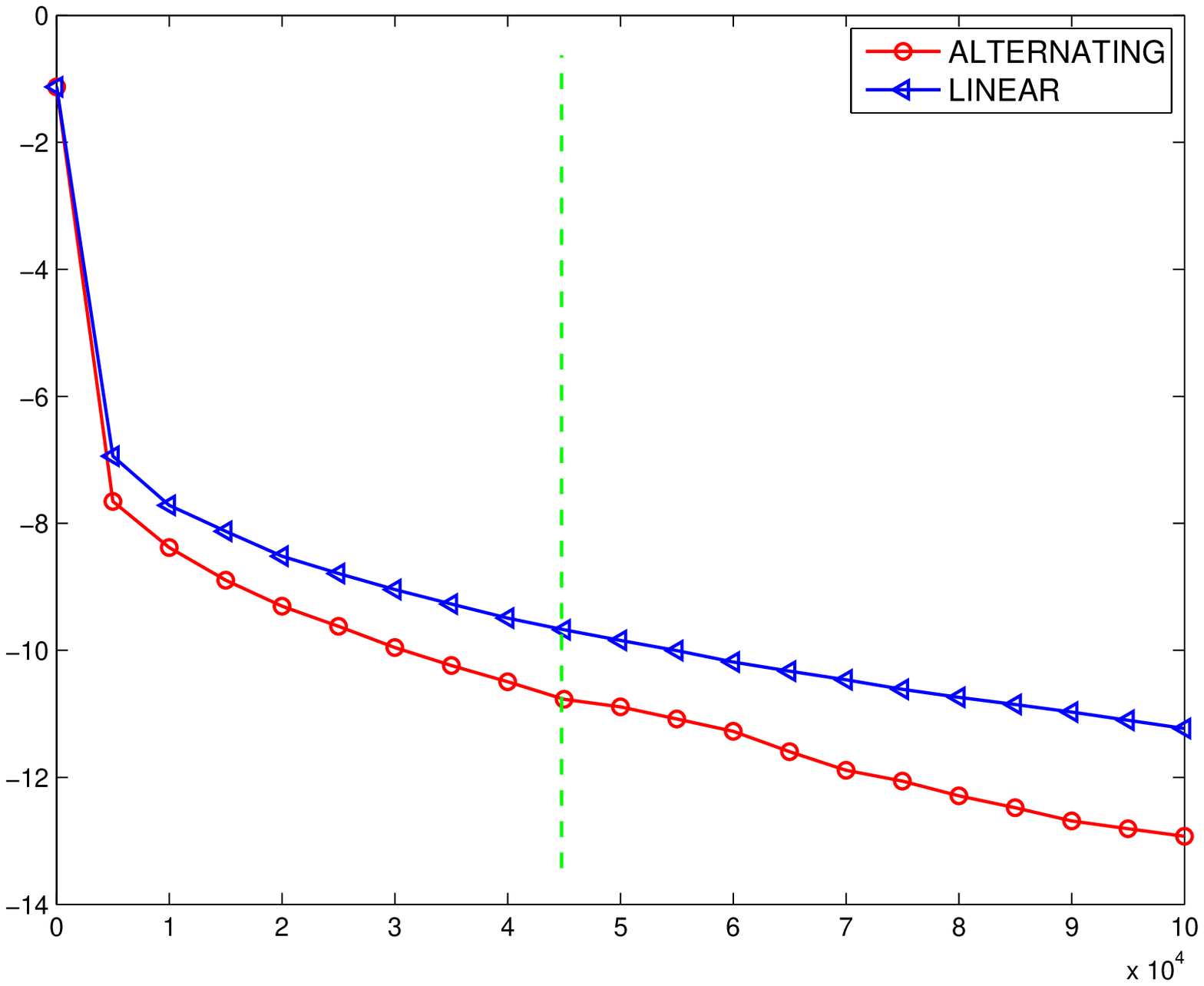} &
            \includegraphics[width=0.3\linewidth]{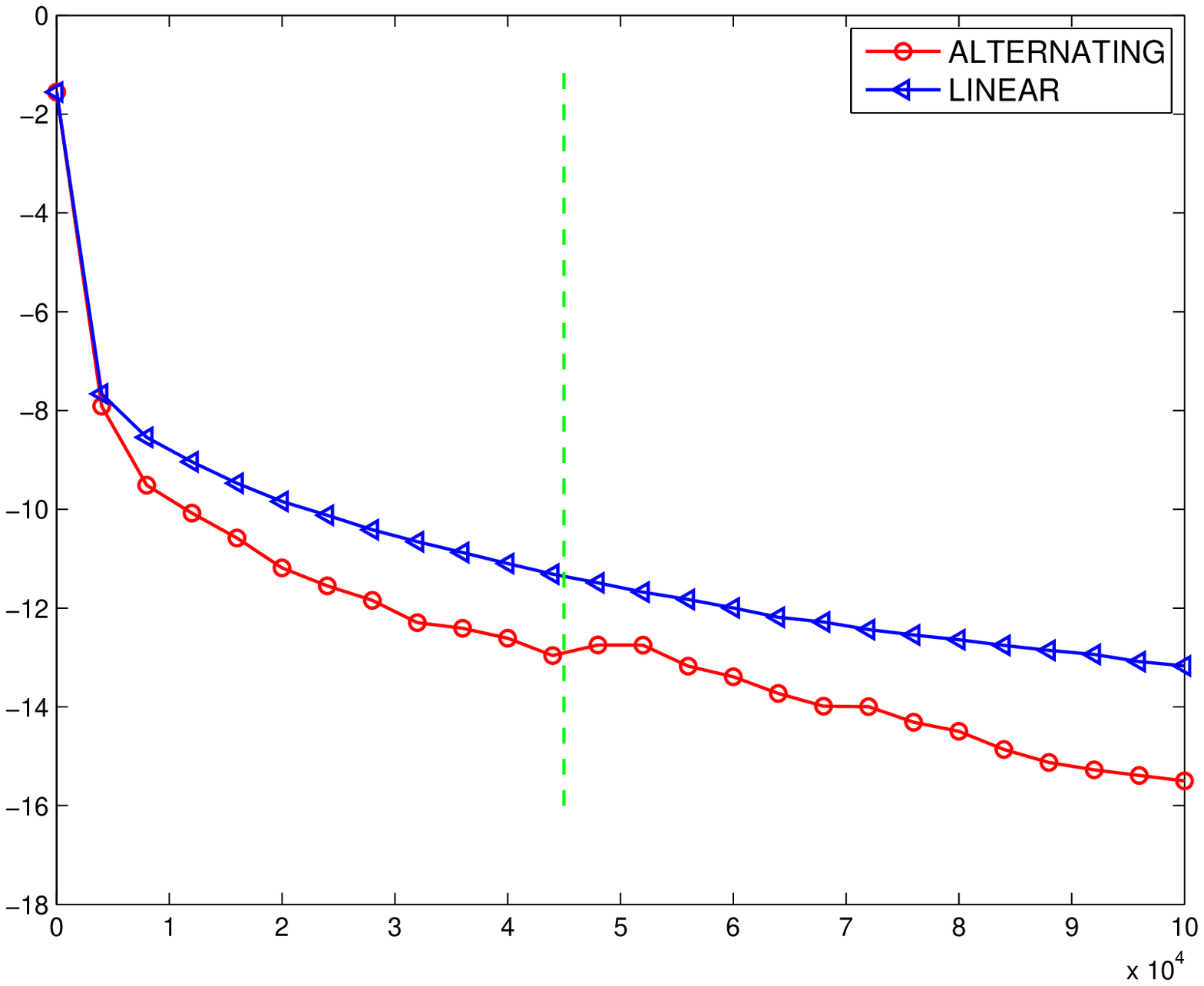} & 
             \includegraphics[width=0.3\linewidth]{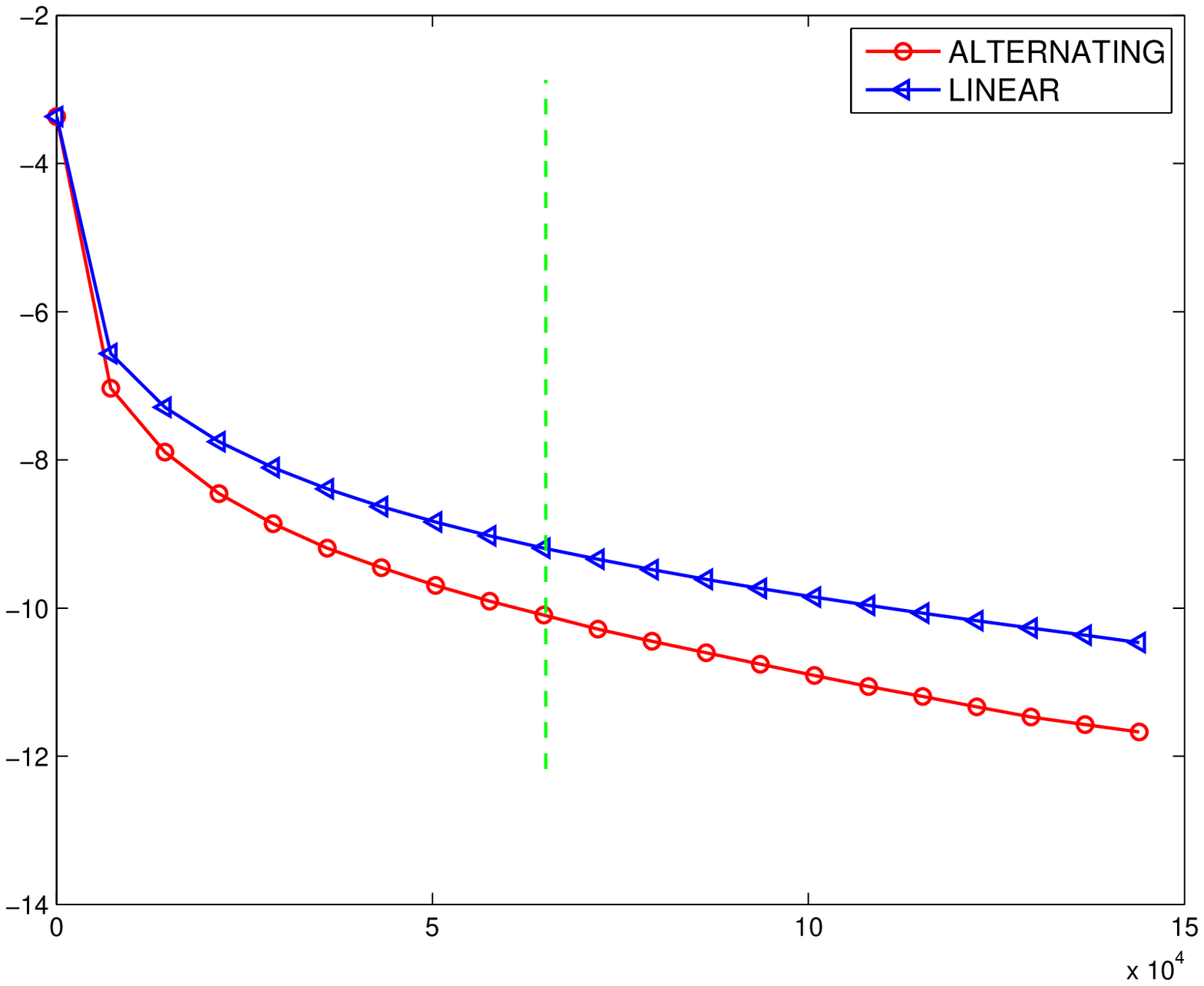}   
            \\ 
            4. {\sc w8a} &
            5. {\sc ijcnn1} & 
            6. {\sc real-sim} 
            \\
             \includegraphics[width=0.3\linewidth]{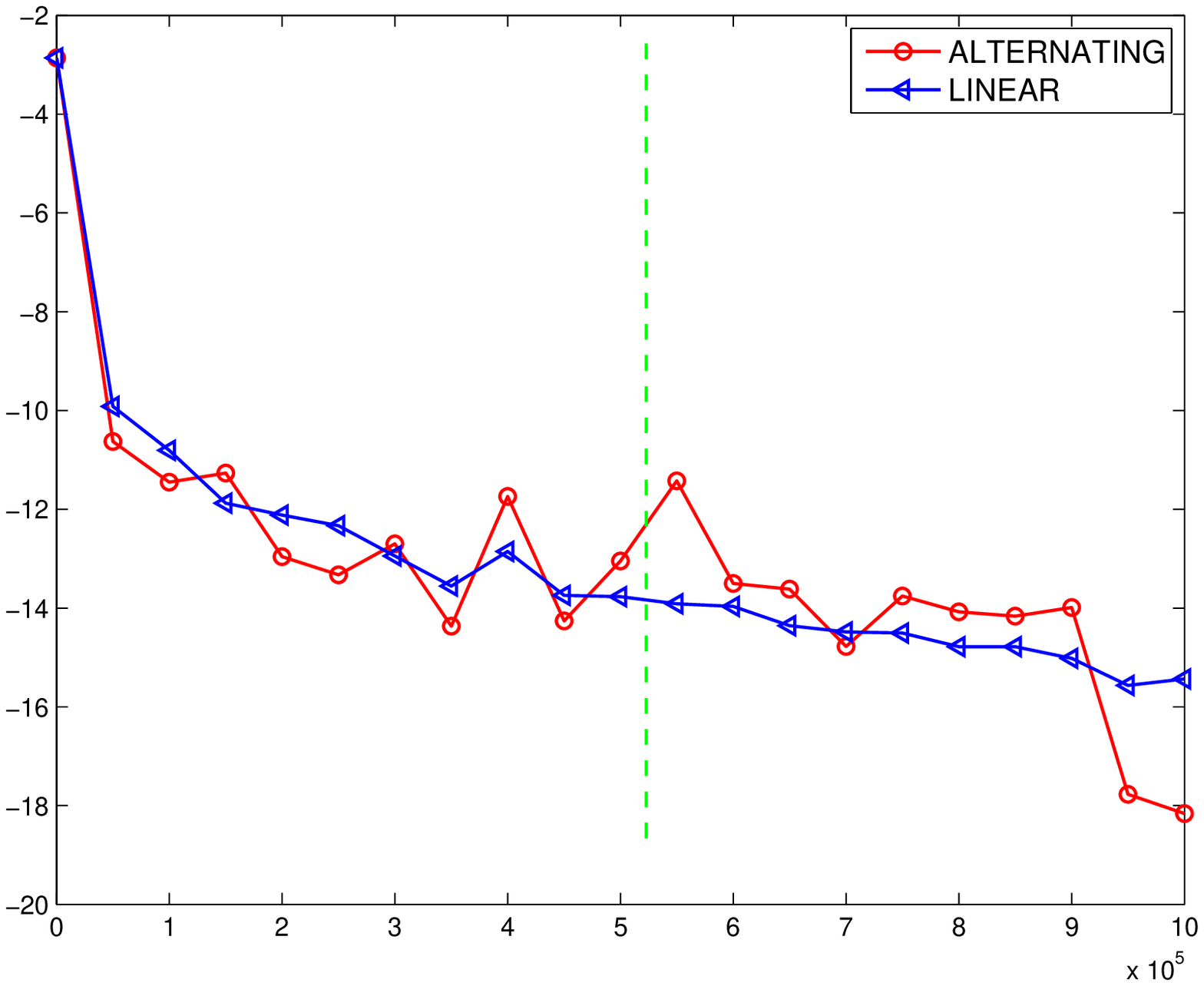} \\ 
            7. {\sc covtype} & 
	  \end{tabular}
          \caption{{\it Suboptimality on the empirical risk.}~The
          vertical axis shows the suboptimality of the empirical risk, i.e. $\log_{2}
          \E_{10} \left[ \risk_{\T}(\w^t) - \risk_{\T}^* \right]$ where the
          expectation is taken over 10 independent runs. The training set
          includes 90\% of the data. The vertical green dashed line is drawn after exactly one epoch over the data. }
          \label{fig:results_iid}
	\end{center}
	
\end{figure}
%----------------------------------------------------------------------------

%----------------------------------------------------------------------------
\begin{figure}[H]
	\begin{center}
          \begin{tabular}{@{}c@{\hspace{5mm}}c@{\hspace{5mm}}c@{}}
            \includegraphics[width=0.3\linewidth]{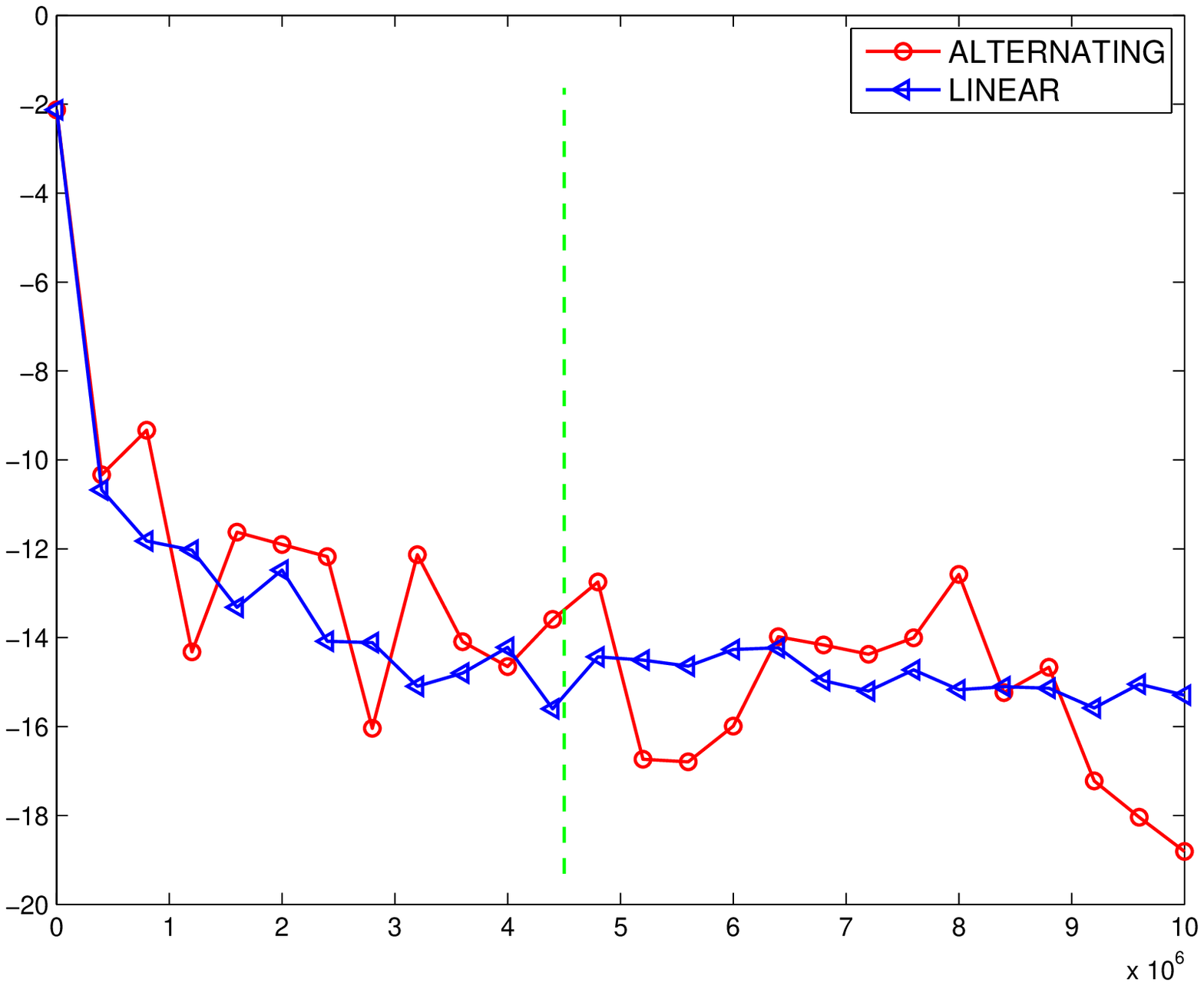} &
            \includegraphics[width=0.3\linewidth]{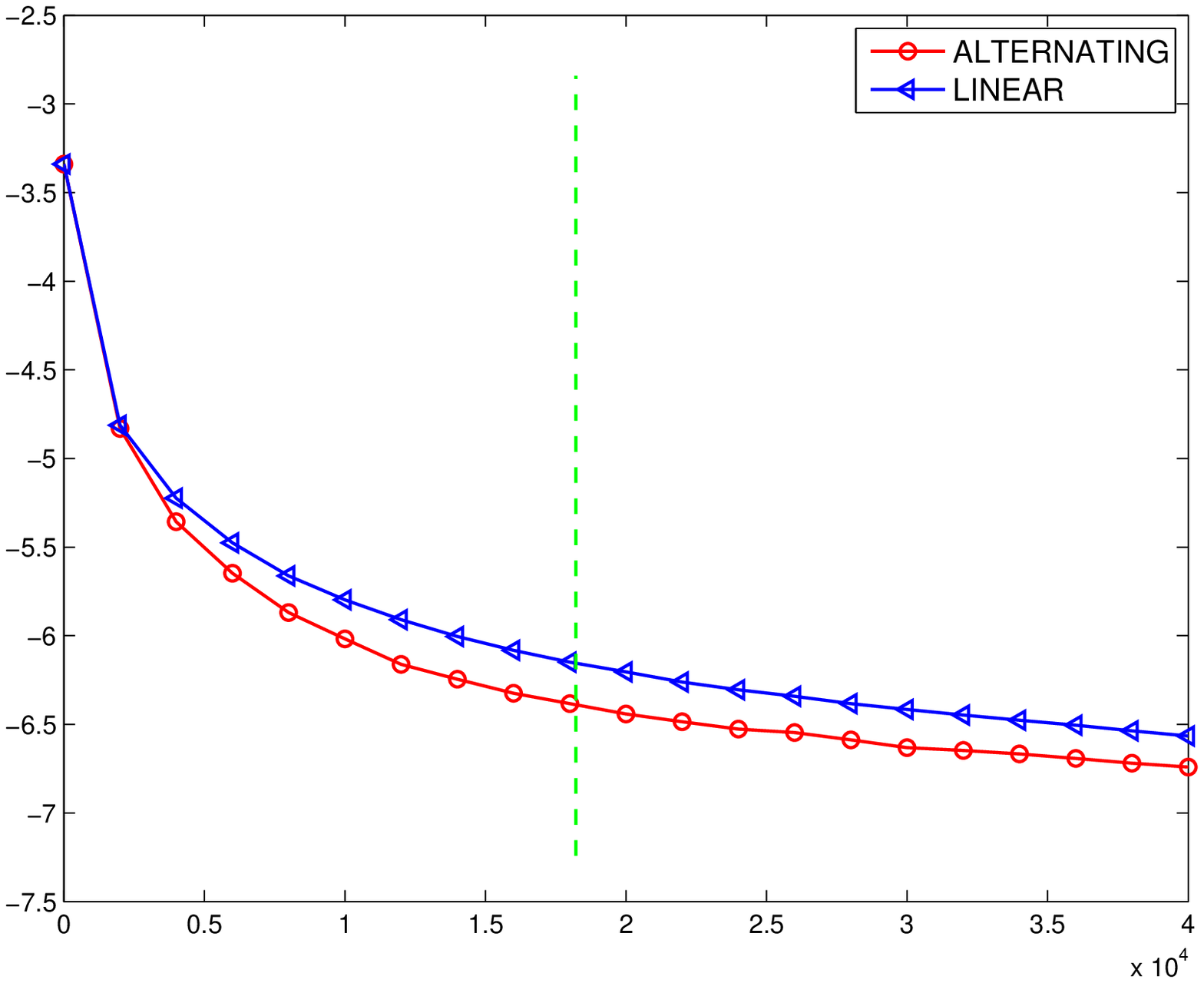} &
            \includegraphics[width=0.3\linewidth]{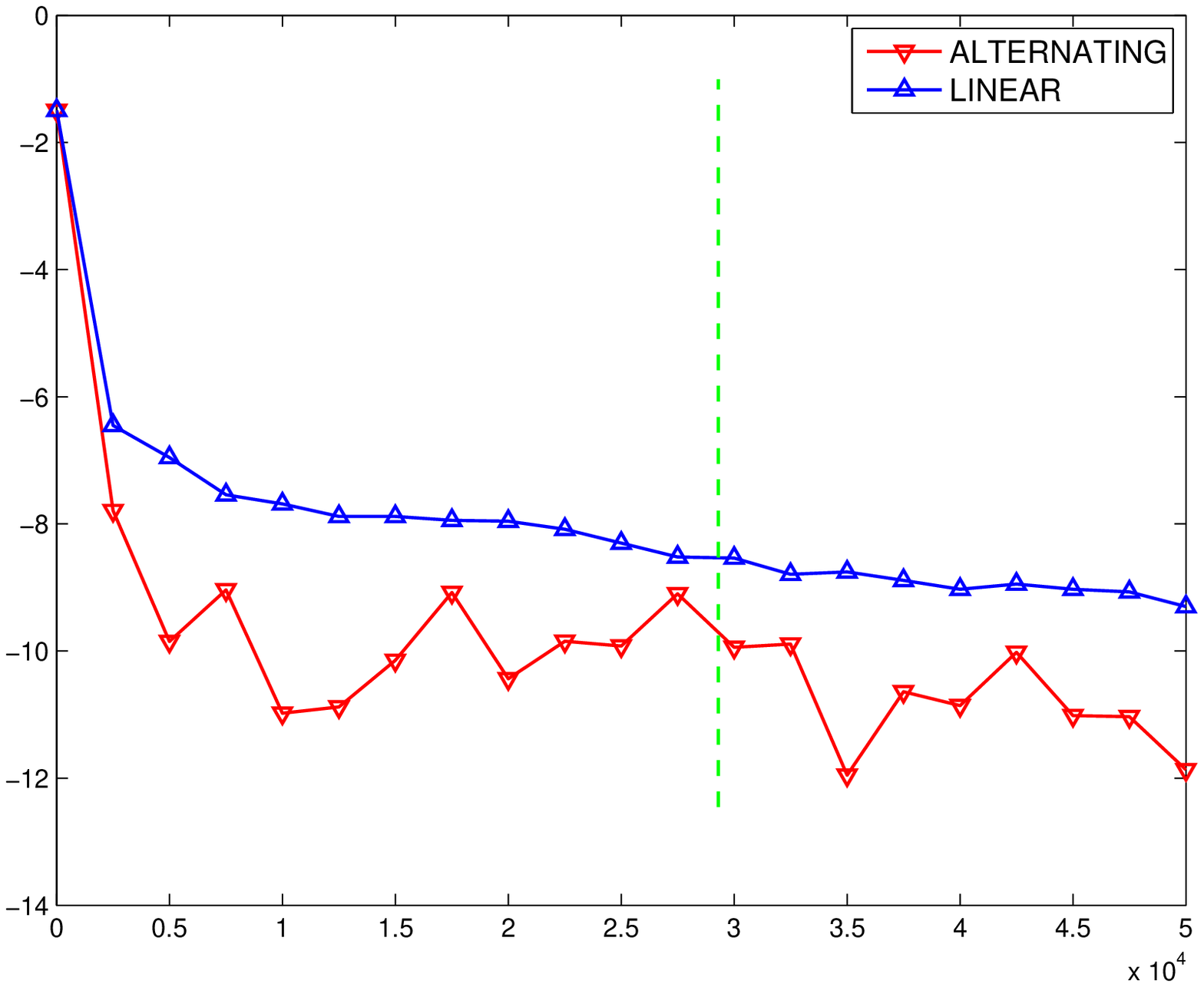} \\
            1. {\sc SUSY} &
            2. {\sc rcv} &
            3. {\sc a9a}
            \\
            \includegraphics[width=0.3\linewidth]{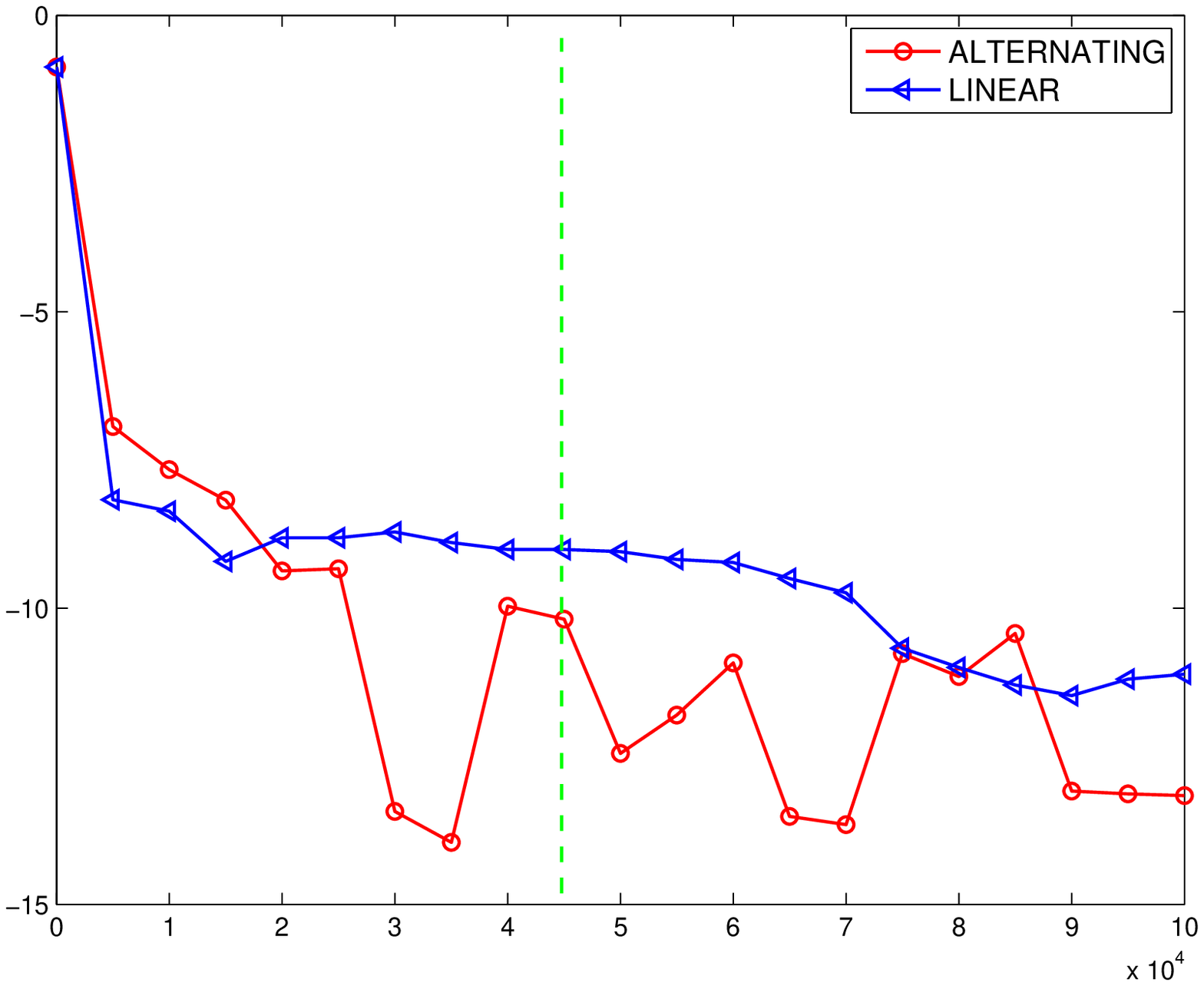} &
            \includegraphics[width=0.3\linewidth]{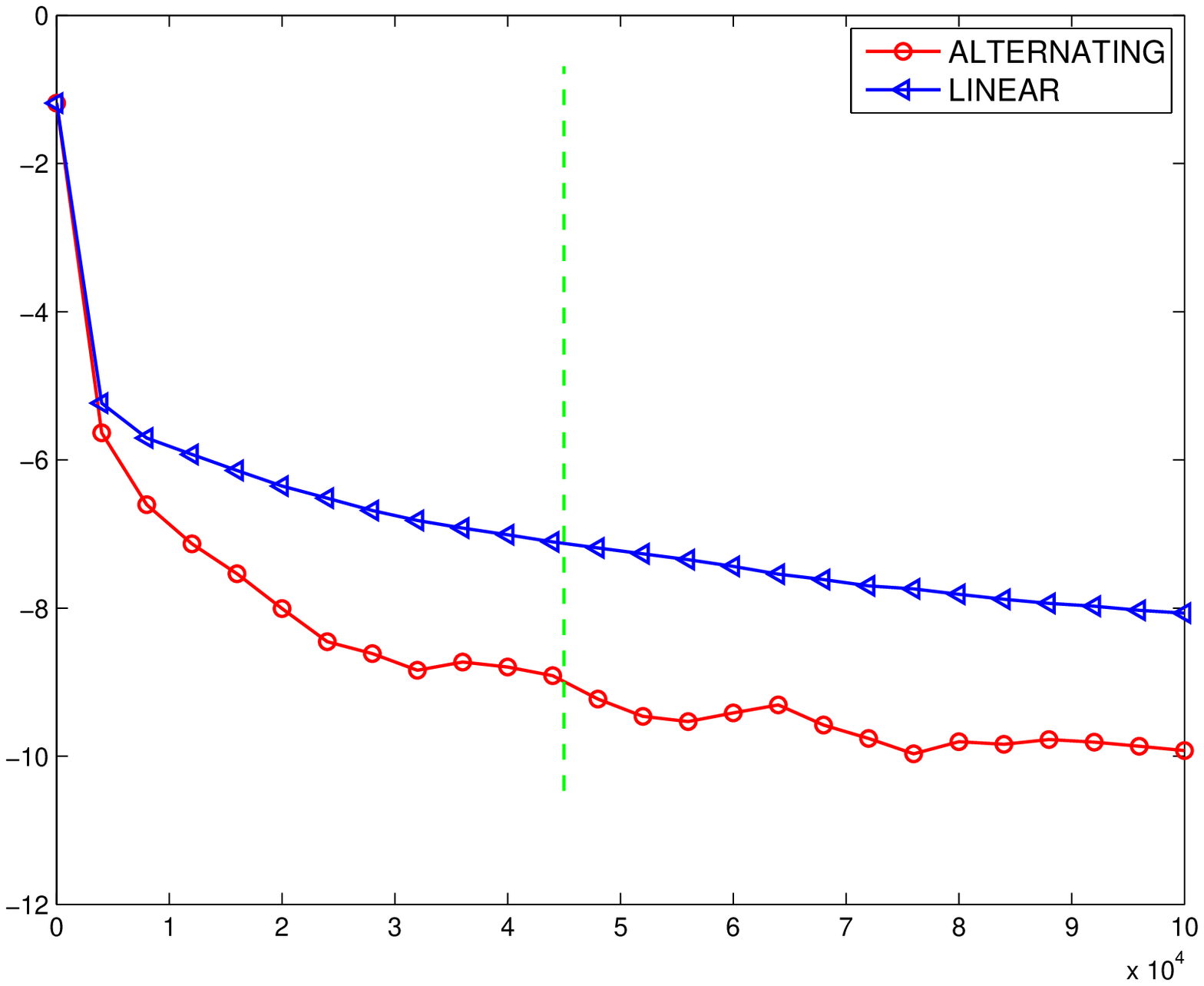} & 
             \includegraphics[width=0.3\linewidth]{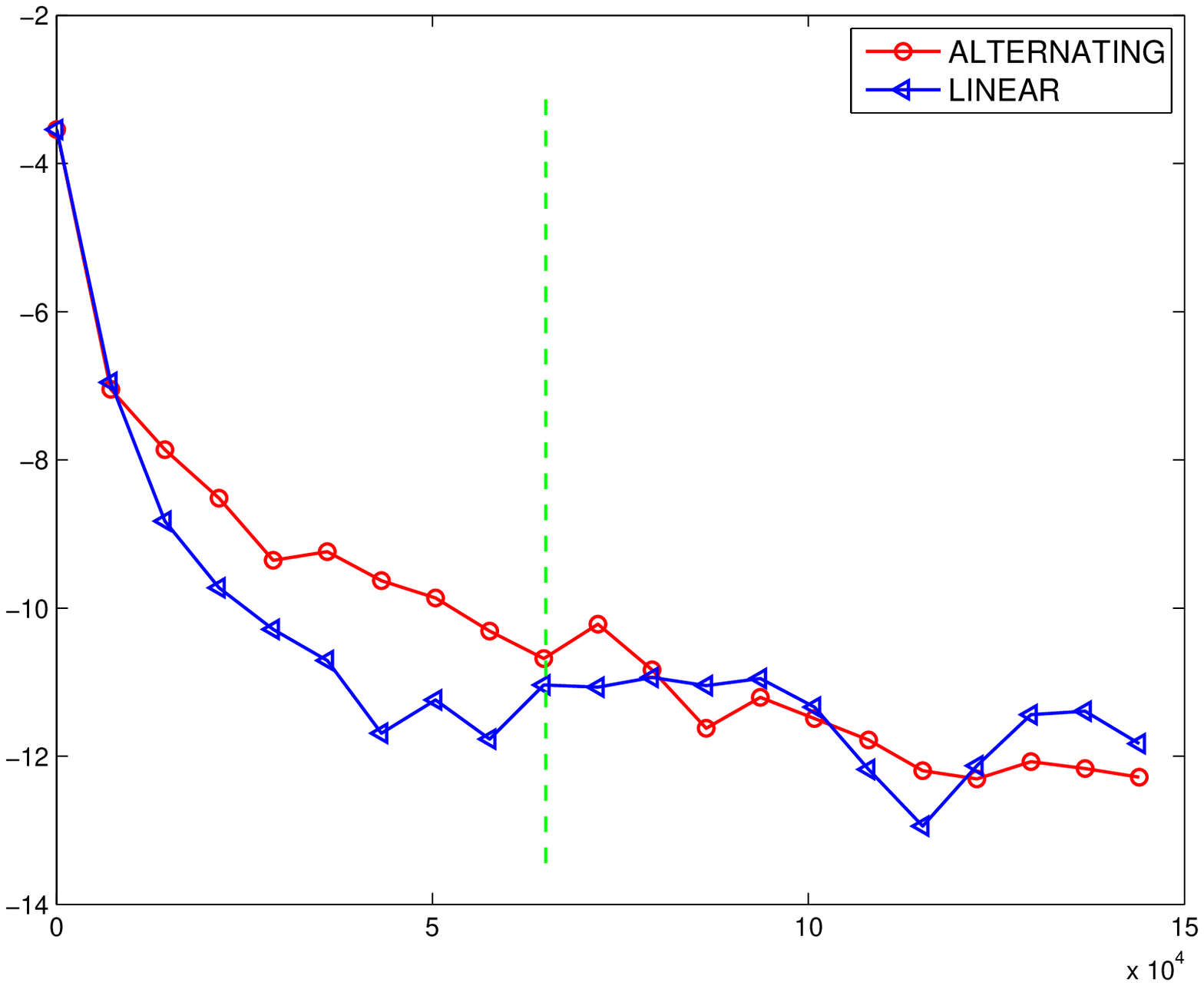}   
            \\ 
            4. {\sc w8a} &
            5. {\sc ijcnn1} & 
            6. {\sc real-sim} 
            \\
             \includegraphics[width=0.3\linewidth]{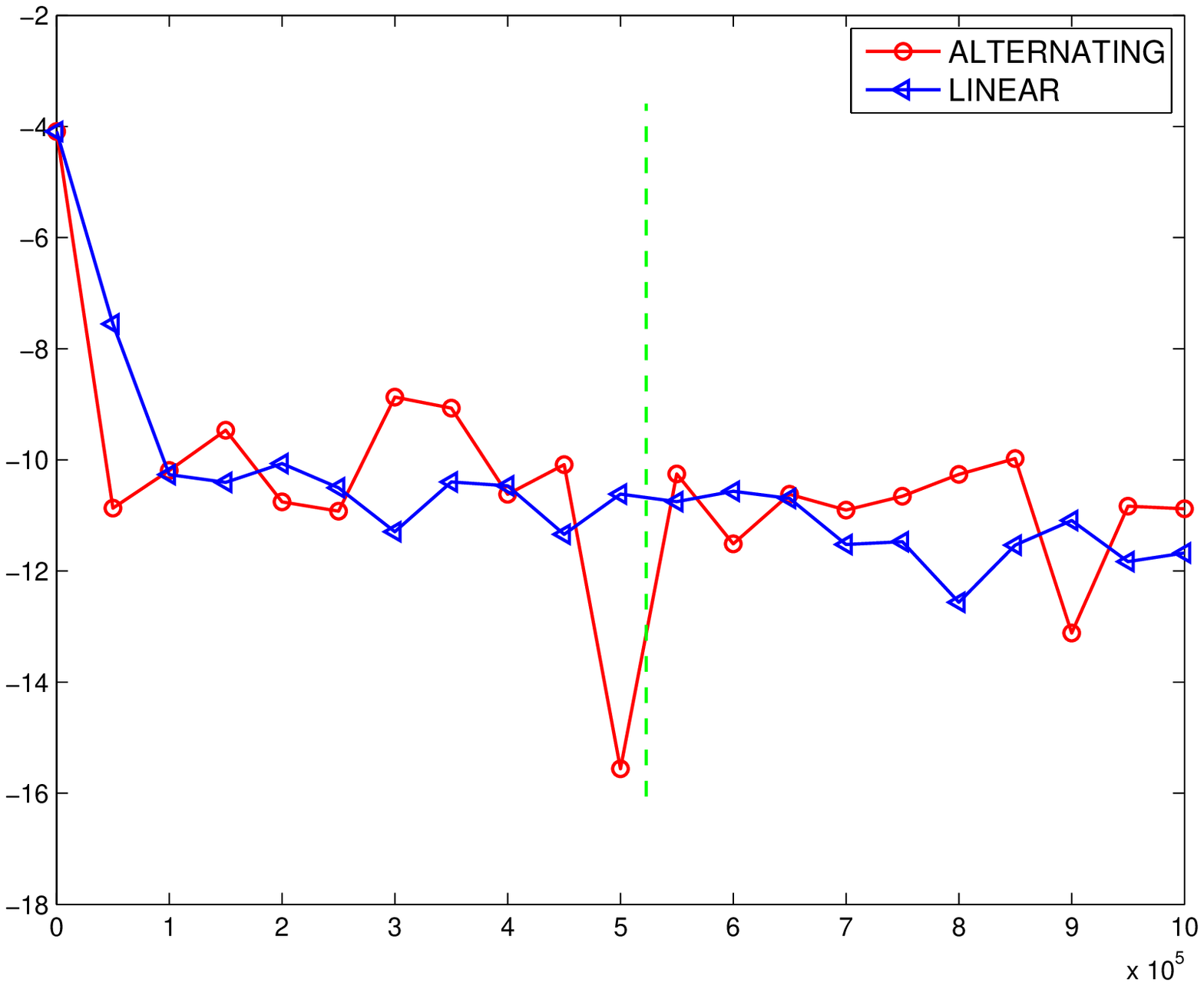} \\
            7. {\sc covtype} & 
	  \end{tabular}
          \caption{{\it Suboptimality on the expected risk.}~The
          vertical axis shows the suboptimality of the expected risk, i.e. $\log_{2}
          \E_{10} \left[ \risk_{\S}(\w^t) - \risk_{\S}(\w^*_{\T}) \right]$,
          where $\S$ is a test set which includes 10\% of the data
          and $\w^*_{\T}$ is the optimum of the empirical risk on $\T$. The vertical green dashed line is drawn after exactly one epoch over the data.
          }
          \label{fig:results_test_iid}
	\end{center}
	
\end{figure}
%----------------------------------------------------------------------------

\subsubsection{Effect of the regularizer}

We here present additional results for various regularizers of the form $\lambda = \frac{1}{n^p}, p < 1$. In the interest of clarity we only show results on four datasets. We can see a similar trend to the main results presented in the paper for $\lambda = \frac{1}{\sqrt{n}}$ where \methodname shows very fast convergence in terms of both empirical and expected risk. SGD is also very competitive and typically achieves faster convergence than the other baselines, however, its behaviour is not stable throughout all the datasets.

\begin{figure}[H]
	\begin{center}
          \begin{tabular}{@{}c@{\hspace{5mm}}c@{\hspace{5mm}}c@{}}
            \includegraphics[width=0.27\linewidth]{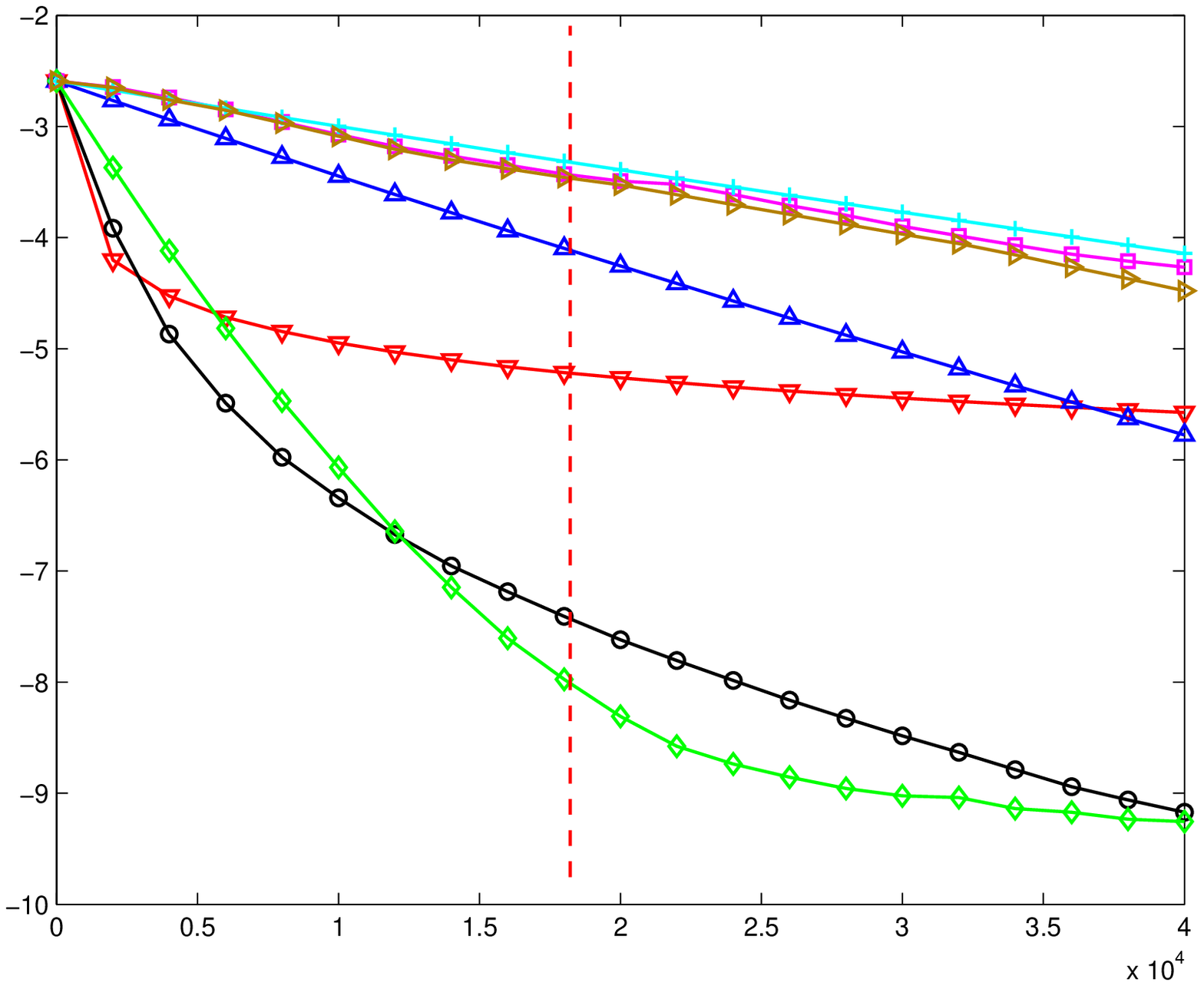} &
            \includegraphics[width=0.27\linewidth]{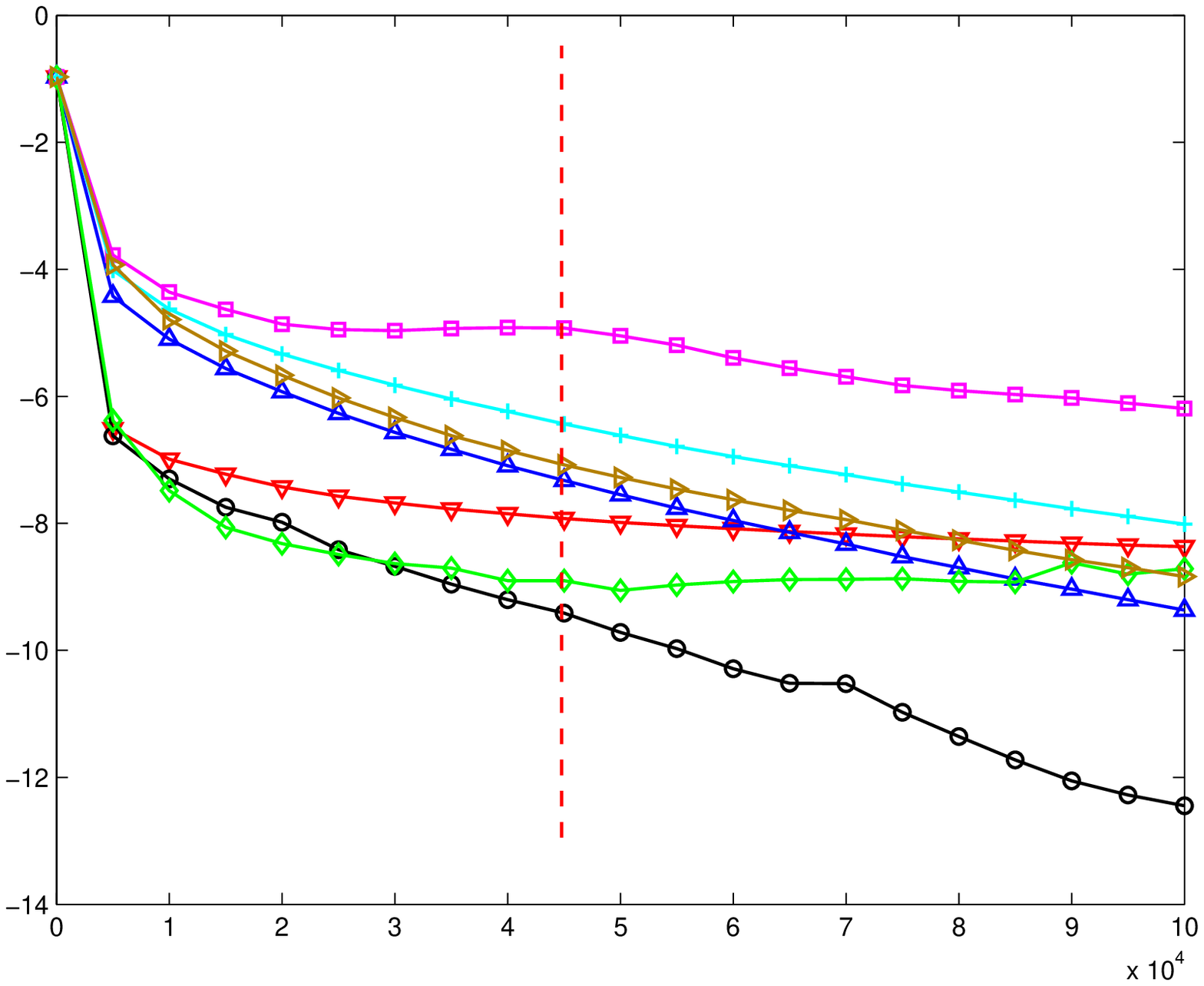} &
            \includegraphics[width=0.2\linewidth]{legend} \\
            1. {\sc rcv} &
            2. {\sc w8a} &
            \\
             \includegraphics[width=0.27\linewidth]{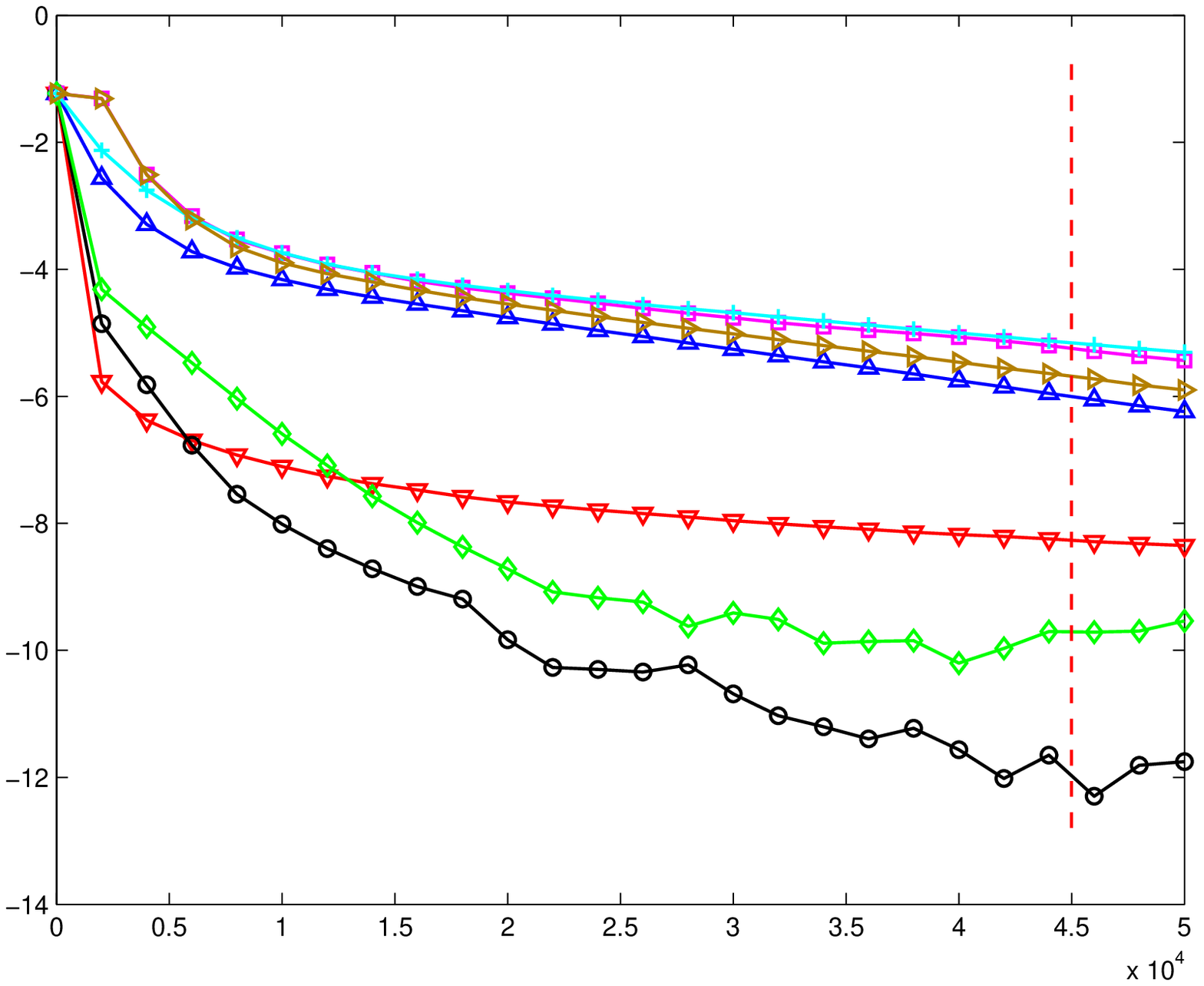}&
            \includegraphics[width=0.27\linewidth]{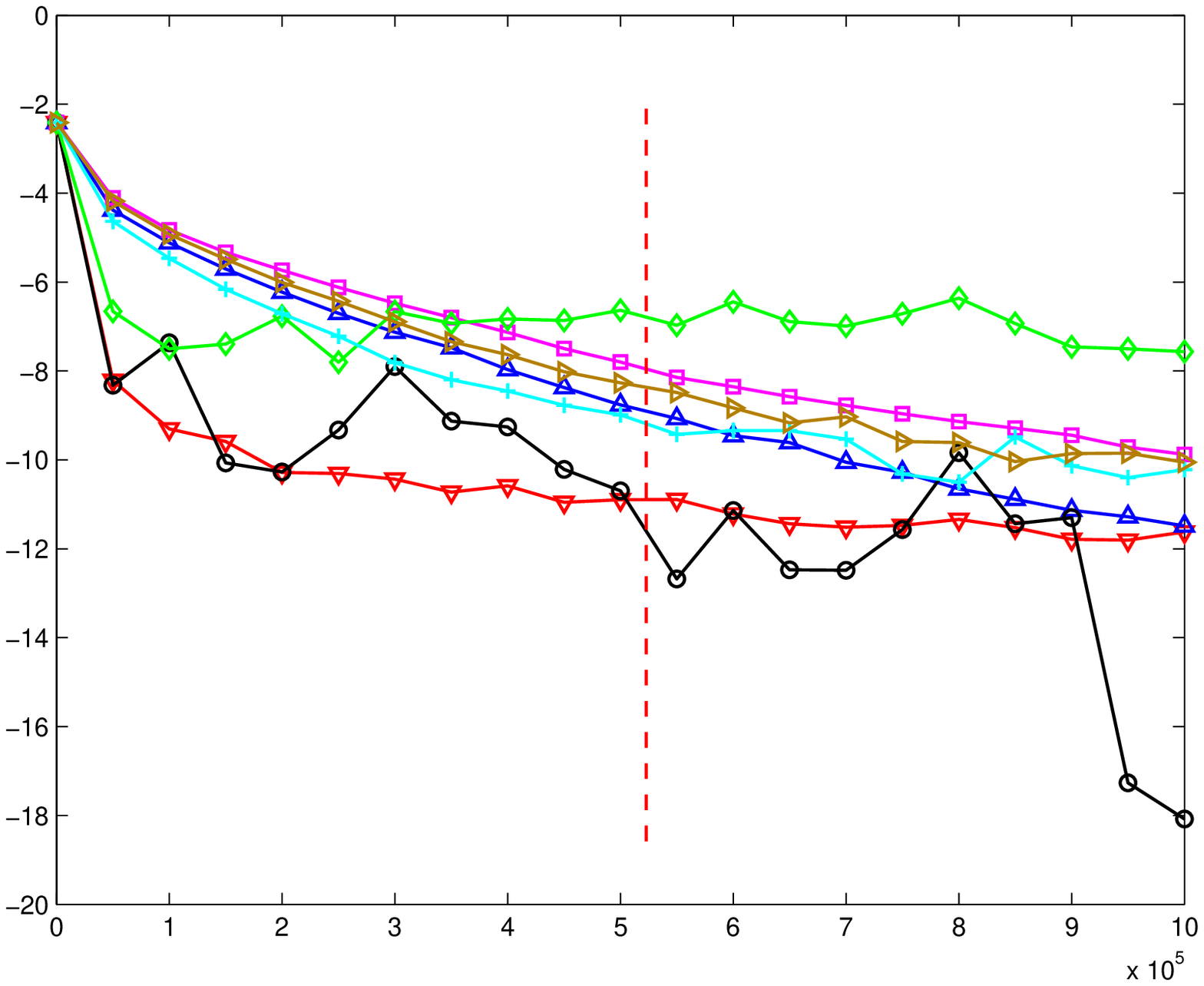} & 
               
            \\ 
            3. {\sc ijcnn1} &
            4. {\sc covtype} & 
              \\
	  \end{tabular}
          \caption{Suboptimality on the empirical risk with regularizer
          $\lambda = n^{-\frac{2}{3}}$}
          \vspace{-5mm}
          \label{fig:results_medium_regularizer}
	\end{center}
\end{figure}

\begin{figure}[H]
	\begin{center}
          \begin{tabular}{@{}c@{\hspace{5mm}}c@{\hspace{5mm}}c@{}}
            \includegraphics[width=0.27\linewidth]{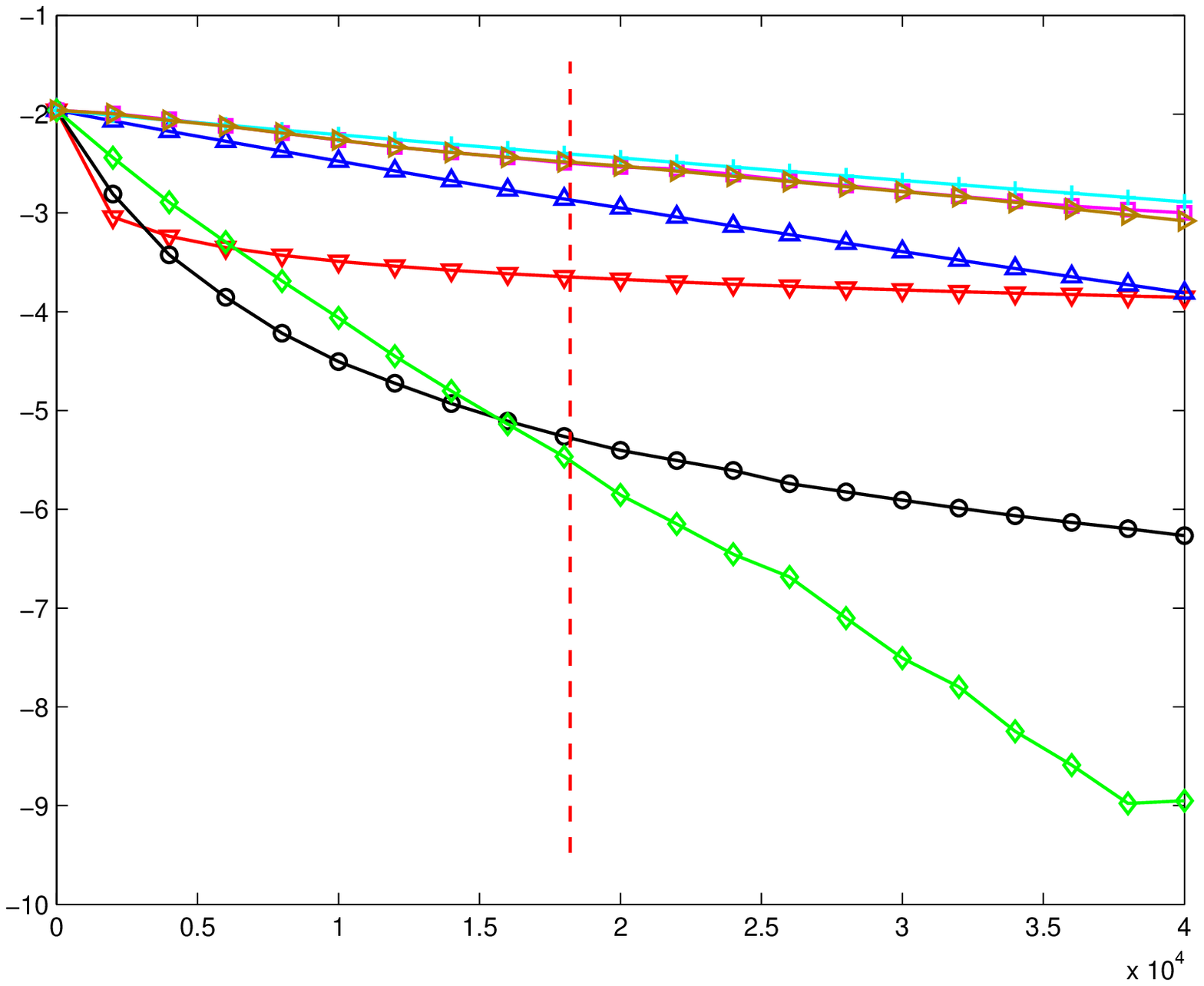} &
            \includegraphics[width=0.27\linewidth]{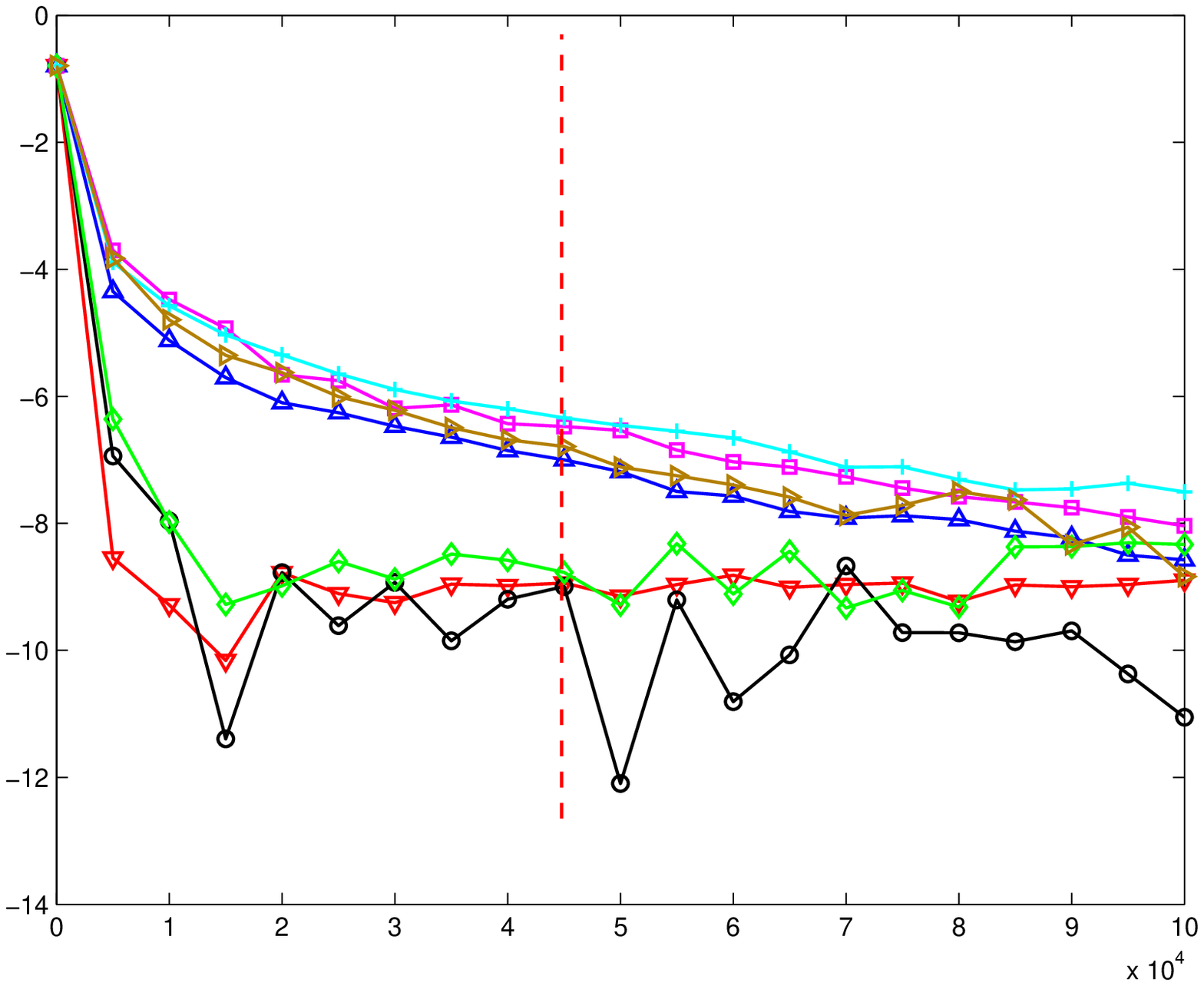} &
            \includegraphics[width=0.2\linewidth]{legend} \\
            1. {\sc rcv} &
            2. {\sc w8a} &
            \\
             \includegraphics[width=0.27\linewidth]{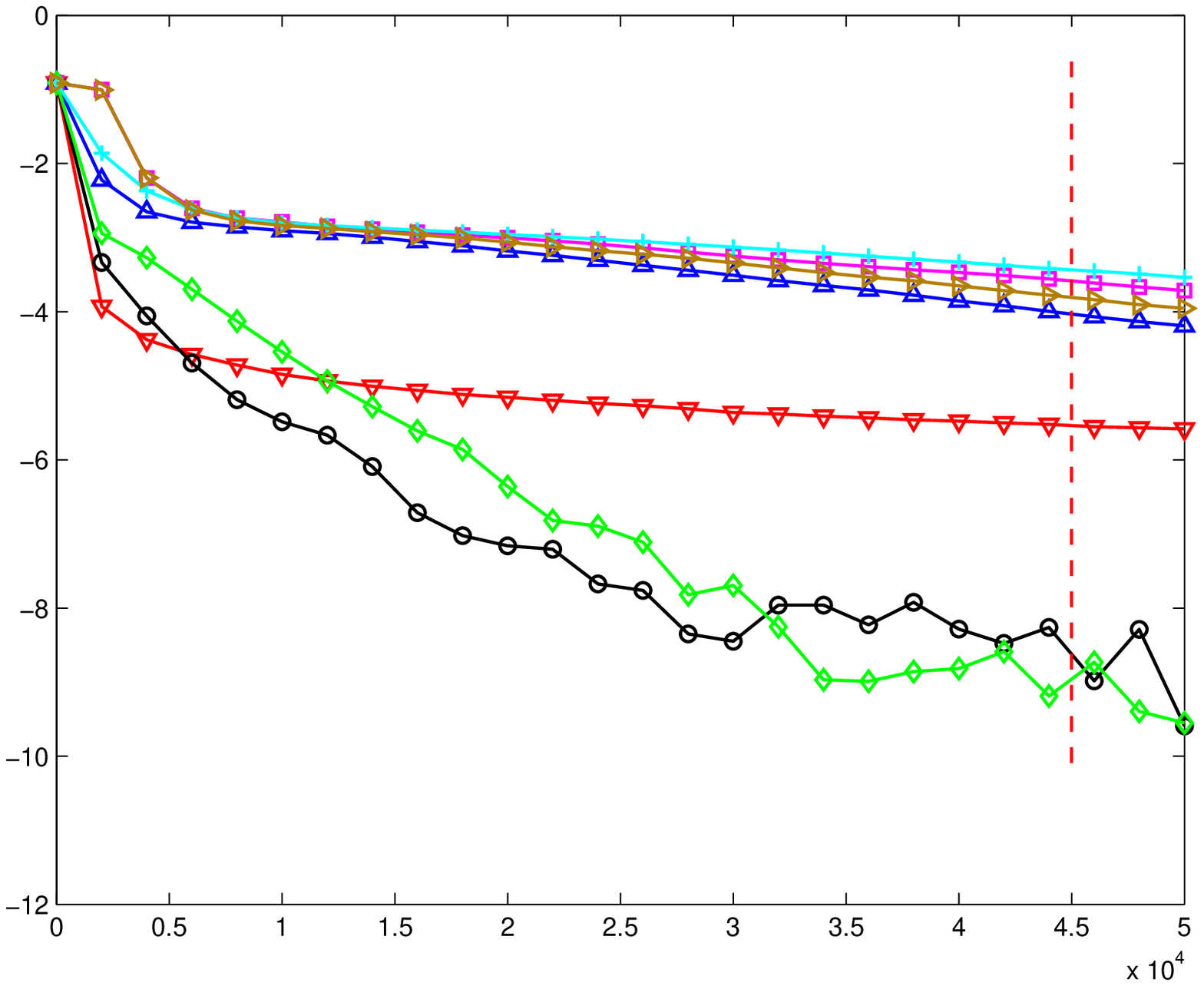}&
            \includegraphics[width=0.27\linewidth]{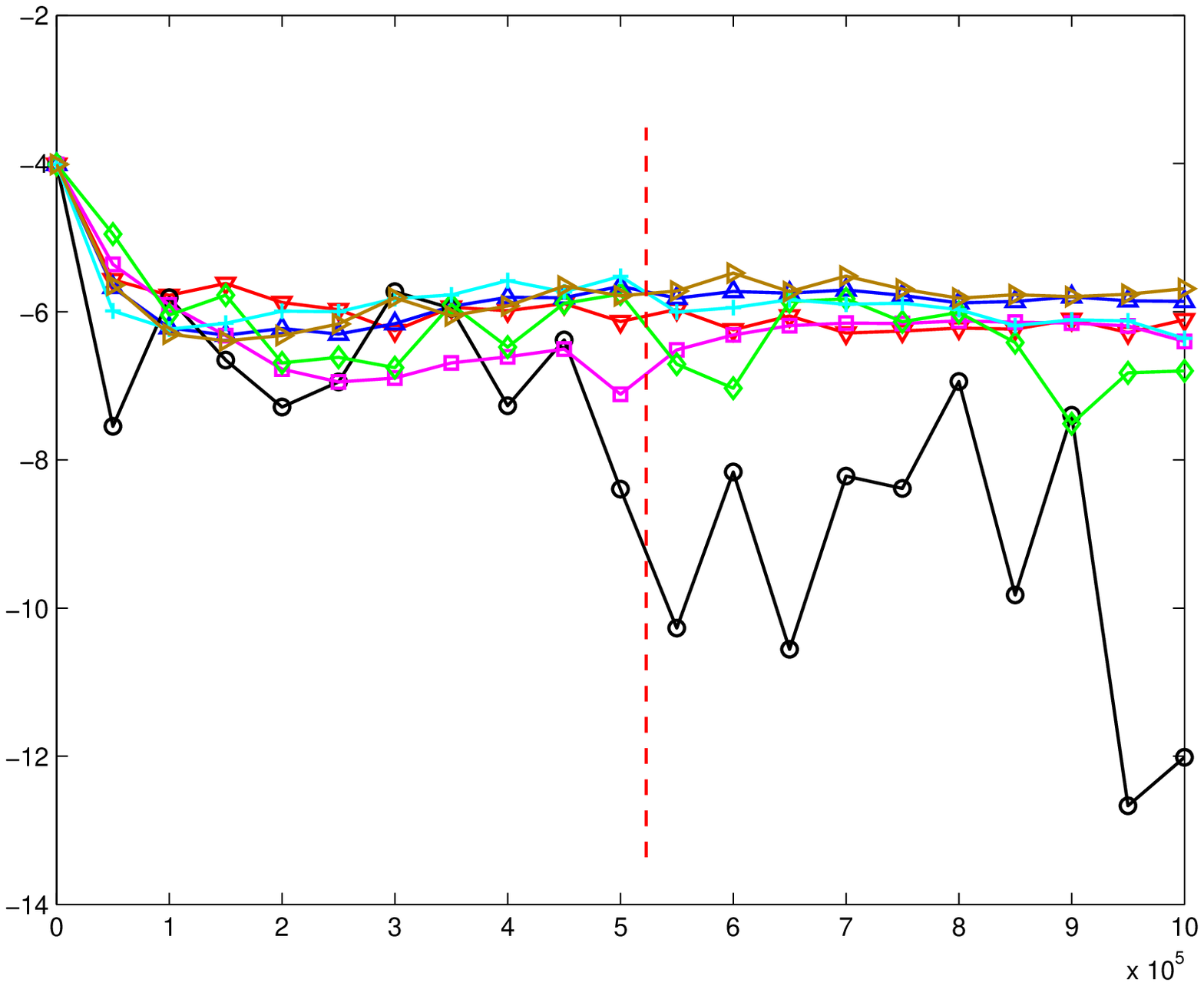} & 
            \\ 
            3. {\sc ijcnn1} &
            4. {\sc covtype} & 
              \\
	  \end{tabular}
          \caption{Suboptimality on the expected risk with regularizer
          $\lambda = n^{-\frac{2}{3}}$}
          \label{fig:results_medium_regualrizer_test}
	\end{center}
	
\end{figure}

\begin{figure}
	\begin{center}
          \begin{tabular}{@{}c@{\hspace{5mm}}c@{\hspace{5mm}}c@{}}
            \includegraphics[width=0.28\linewidth]{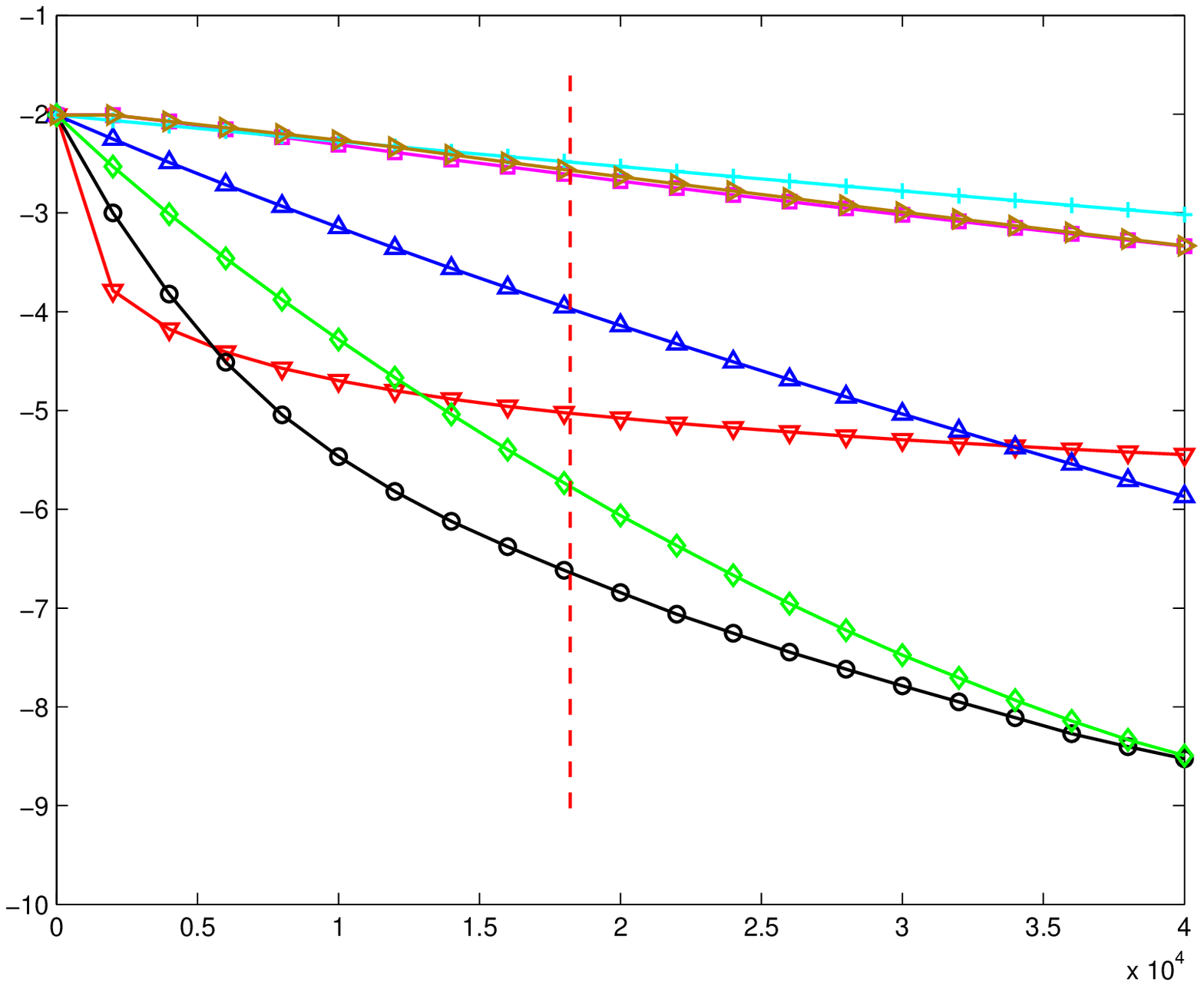} &
            \includegraphics[width=0.28\linewidth]{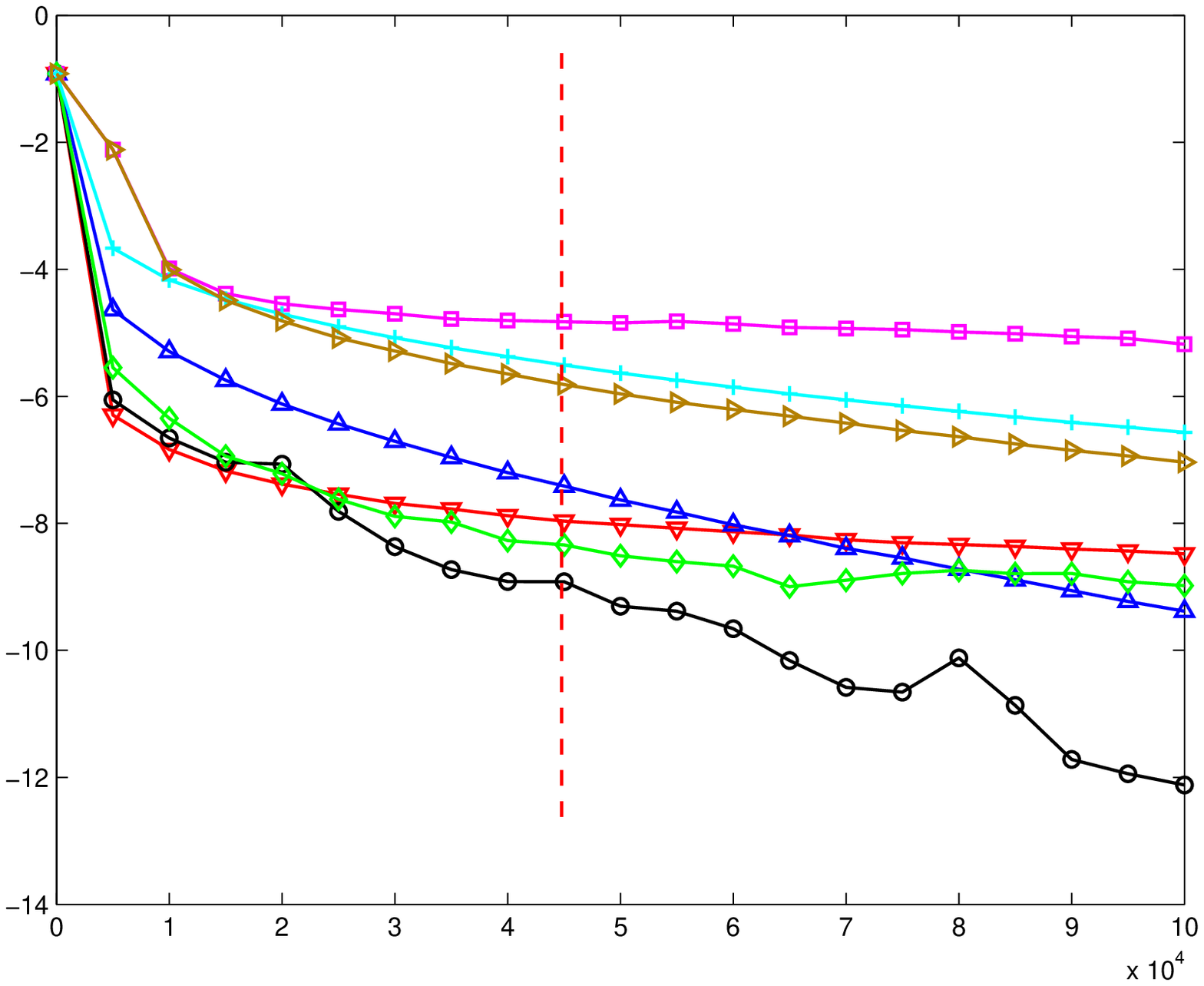} &
            \includegraphics[width=0.2\linewidth]{legend} \\
            1. {\sc rcv} &
            2. {\sc w8a} &
            \\
            \includegraphics[width=0.28\linewidth]{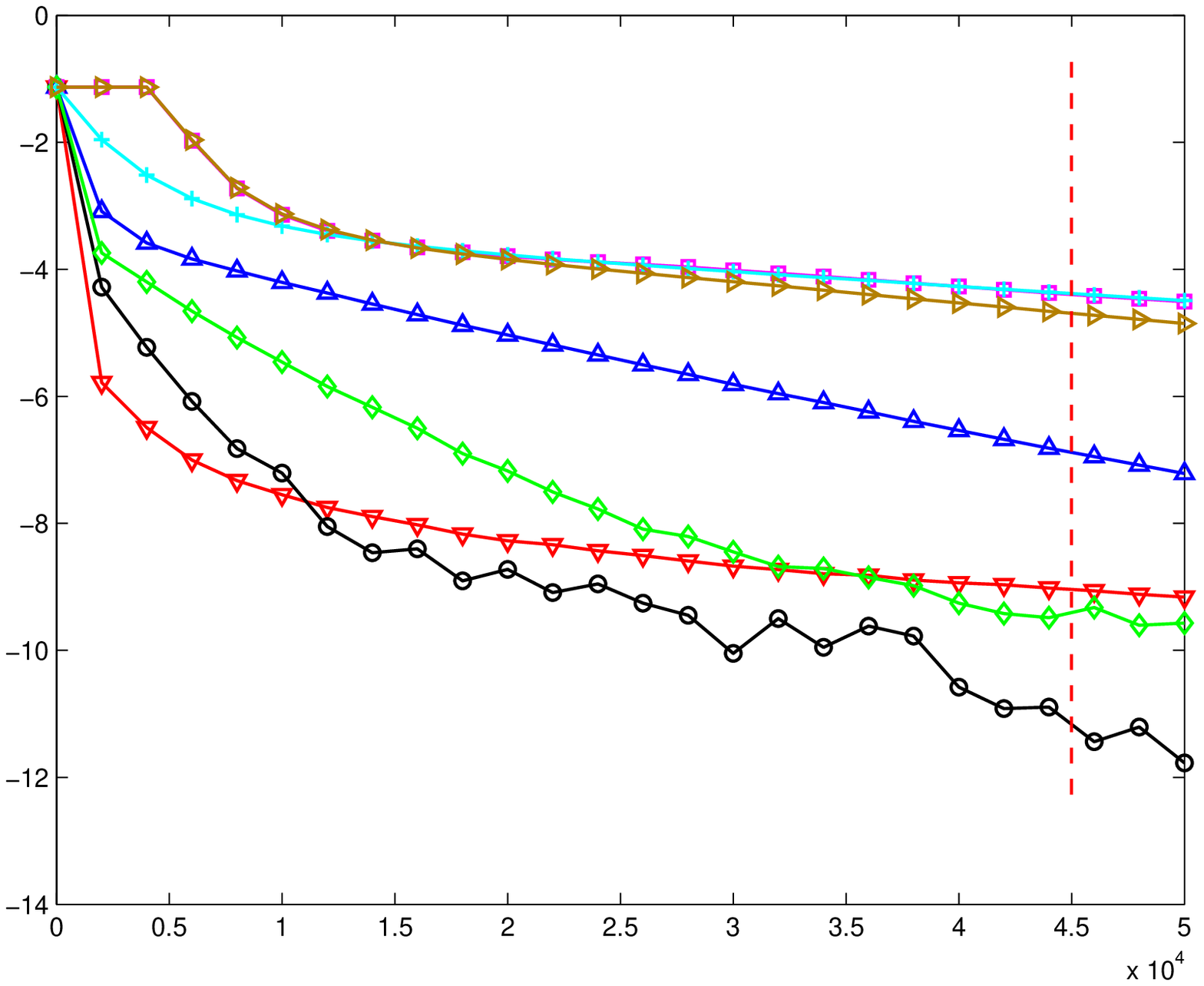}&
            \includegraphics[width=0.28\linewidth]{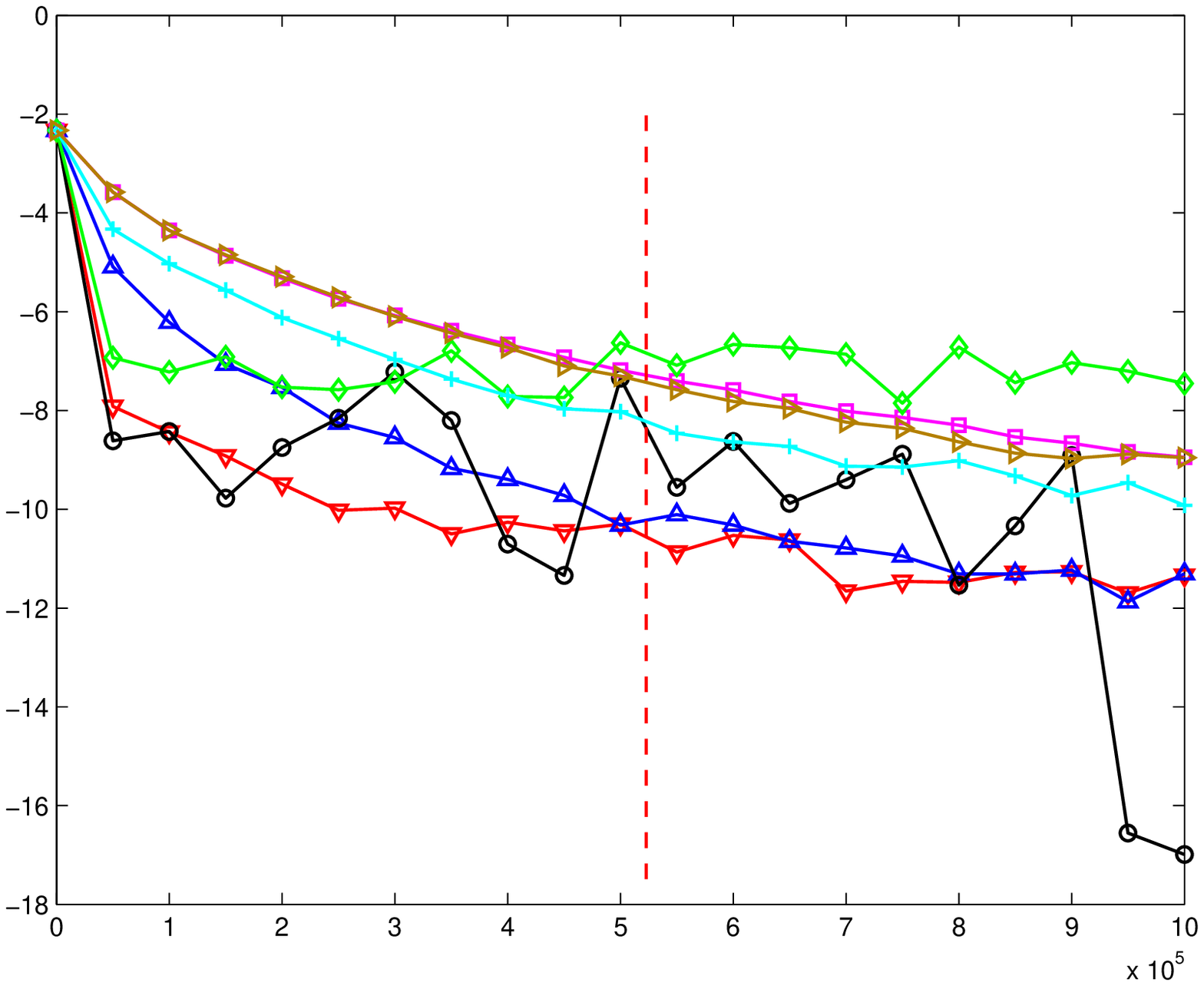} &
            \\ 
            3. {\sc ijcnn1} &
            4. {\sc covtype} & 
            \\
	  \end{tabular}
       \caption{Suboptimality on the empirical risk with regularizer
       $\lambda = n^{-\frac{3}{4}}$}
       \label{fig:results_small_regularizer}
	\end{center}
\end{figure}

\begin{figure}
	\begin{center}
          \begin{tabular}{@{}c@{\hspace{5mm}}c@{\hspace{5mm}}c@{}}
            \includegraphics[width=0.28\linewidth]{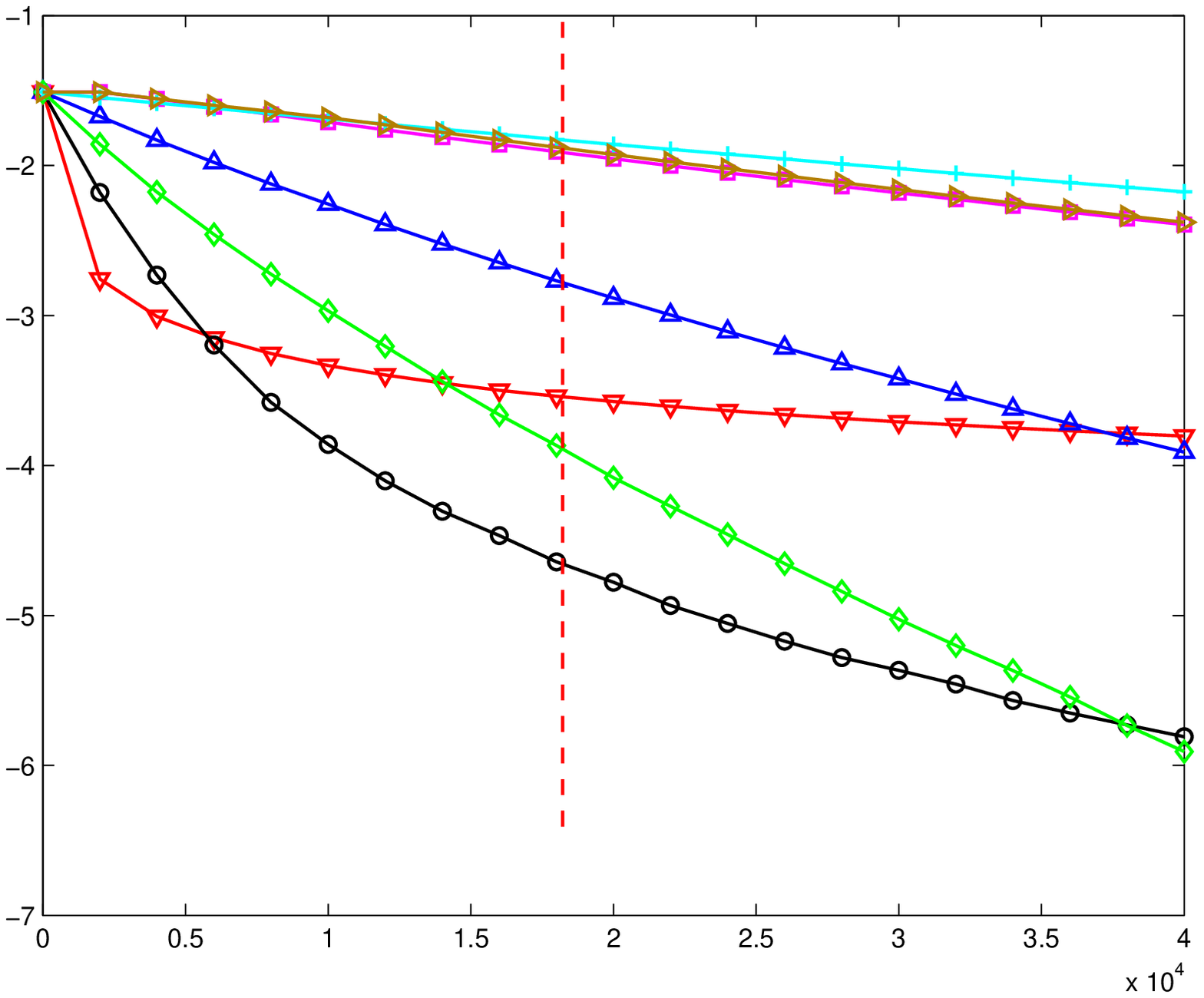} &
            \includegraphics[width=0.28\linewidth]{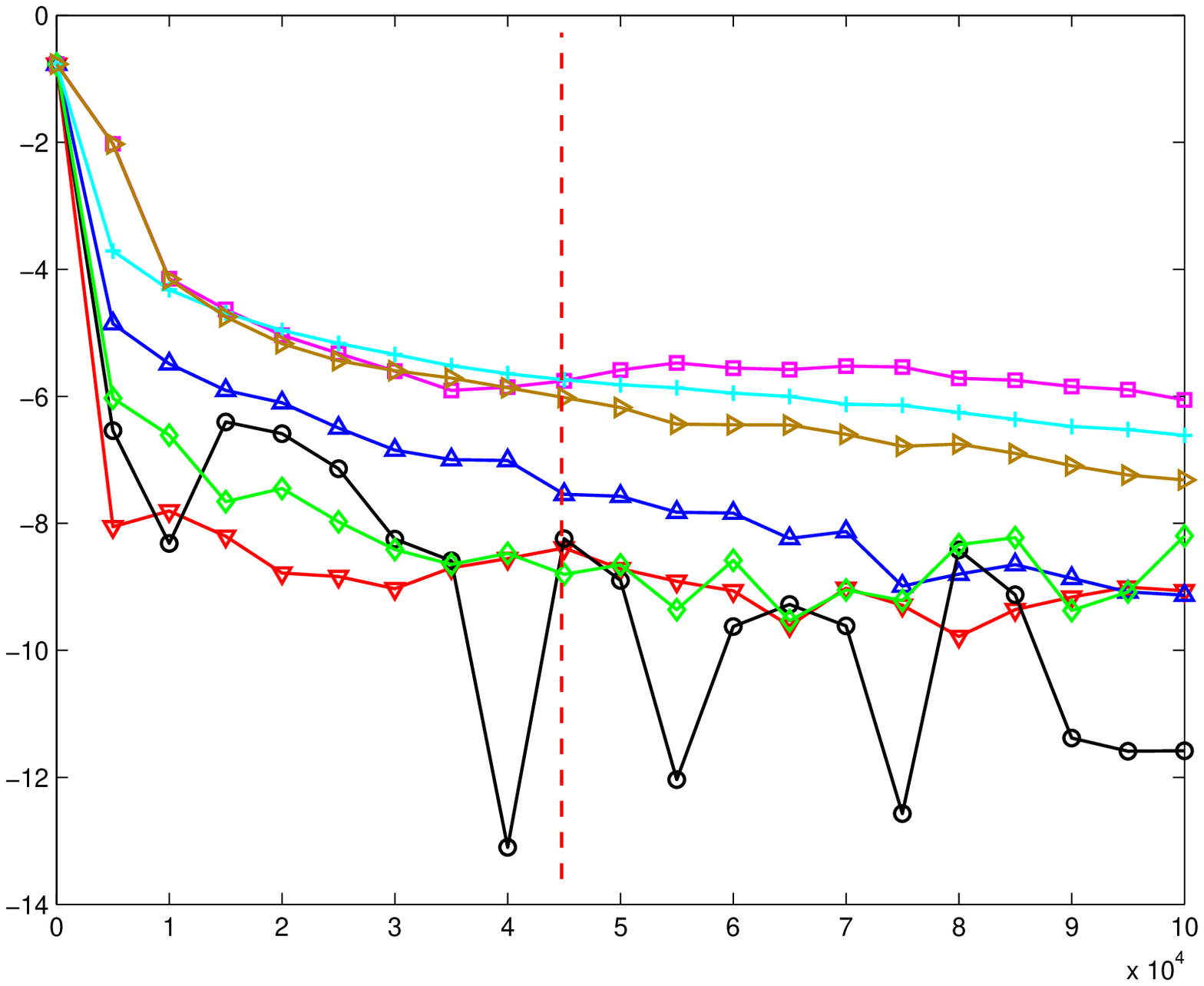} &
            \includegraphics[width=0.2\linewidth]{legend} \\
            1. {\sc rcv} &
            2. {\sc w8a} &
            \\
            \includegraphics[width=0.28\linewidth]{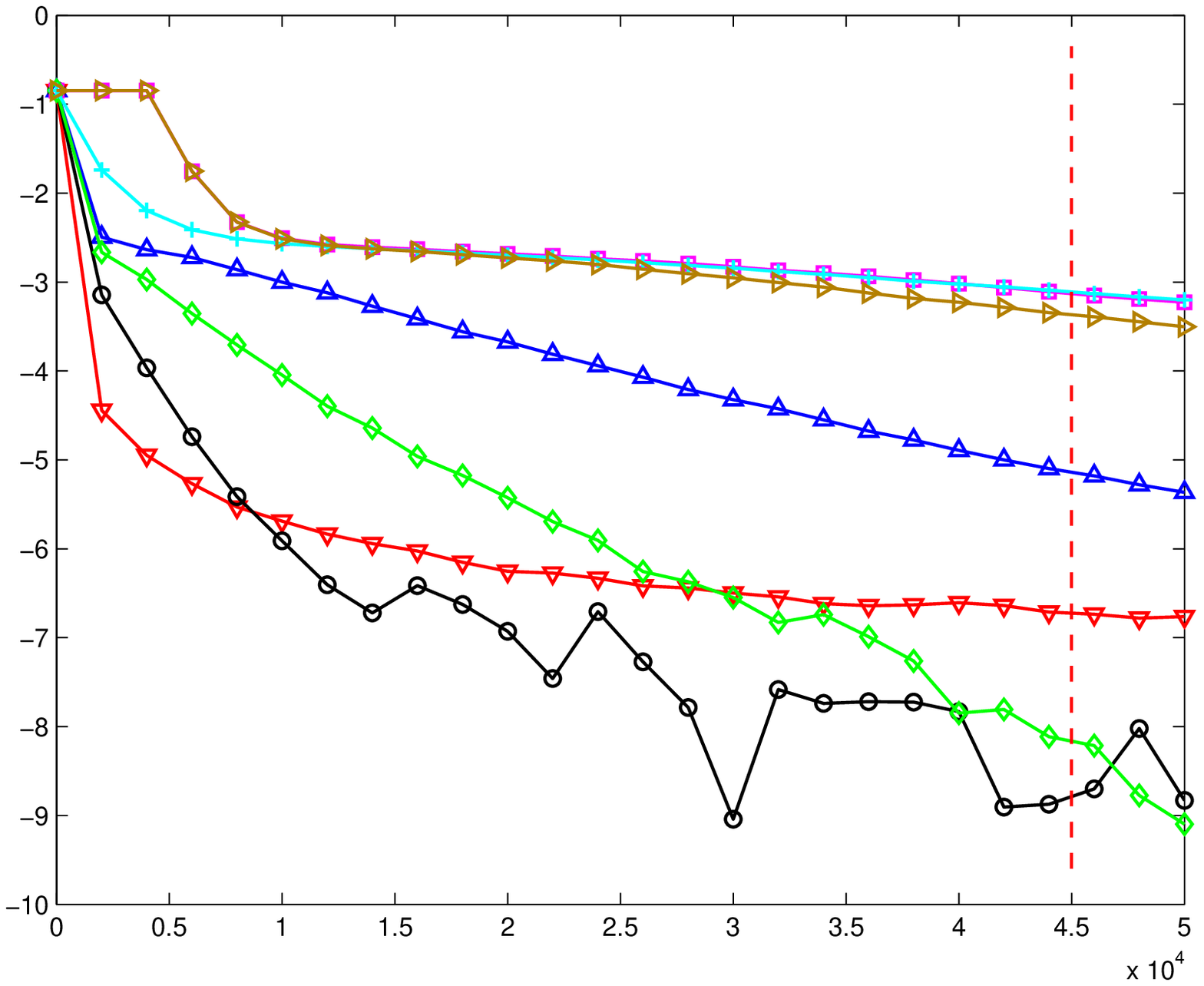}&
            \includegraphics[width=0.28\linewidth]{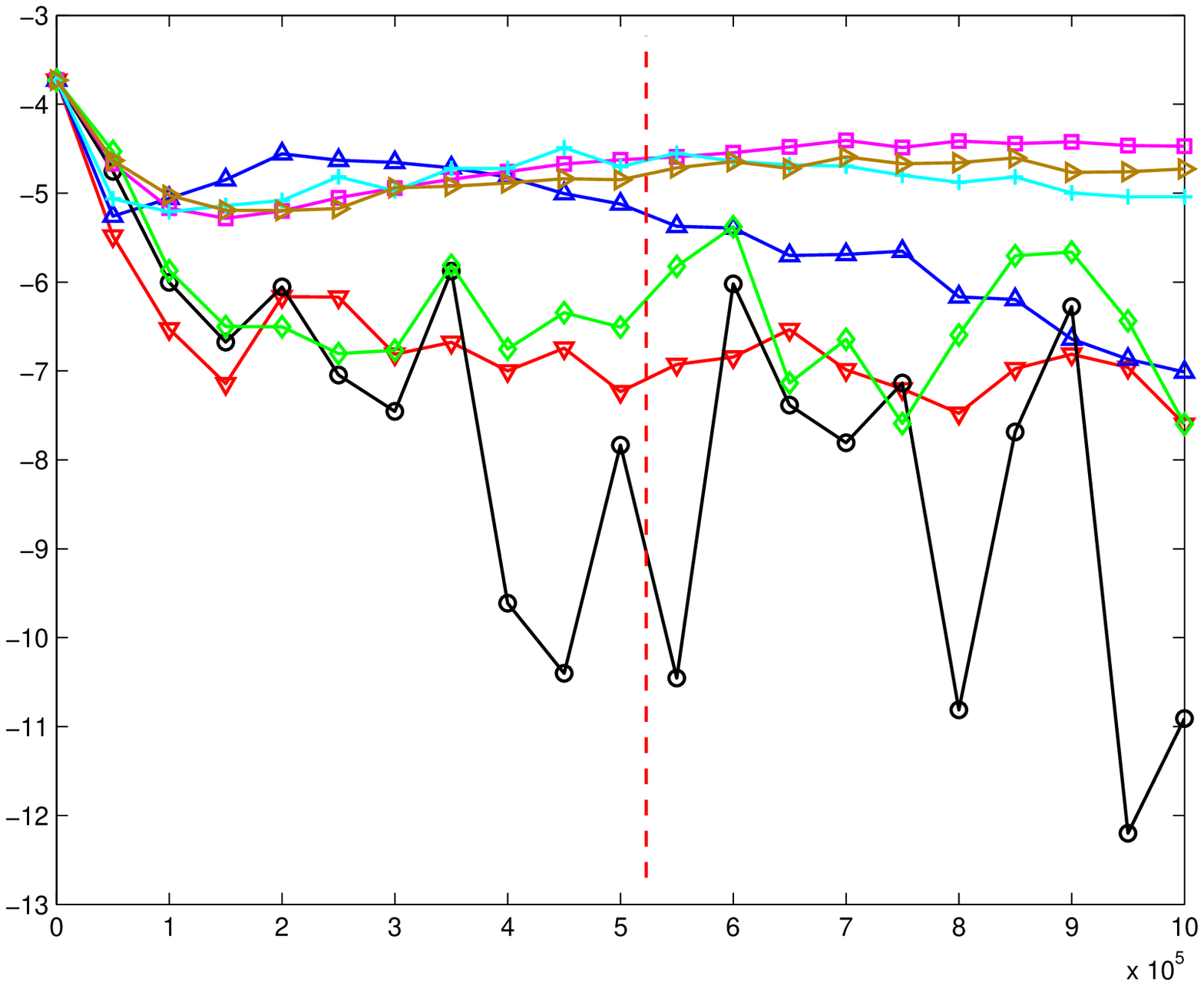} & 
            \\ 
            3. {\sc ijcnn1} &
            4. {\sc covtype} & 
              \\
	  \end{tabular}
          \caption{Suboptimality on the expected risk with regularizer
          $\lambda = n^{-\frac{3}{4}}$}
          \label{fig:results_small_regularizer_test}
	\end{center}
	
\end{figure}

\newpage
\comment{
% Commented out as we are not sure about the validity of the results, Hadi is checking...
\begin{figure*}
	\begin{center}
          \begin{tabular}{@{}c@{\hspace{5mm}}c@{\hspace{5mm}}c@{}}
            \includegraphics[width=0.3\linewidth]{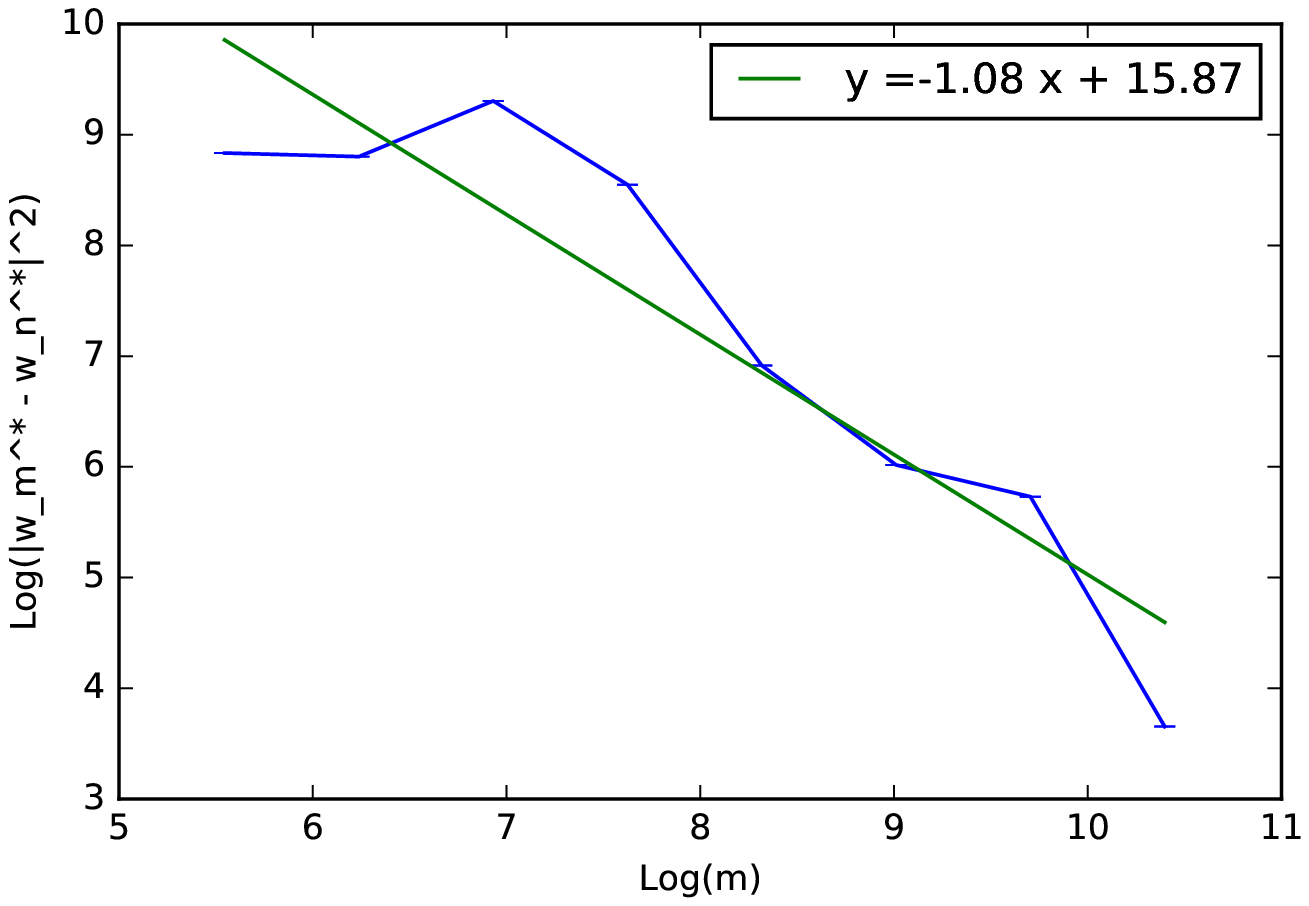} & 
            \includegraphics[width=0.3\linewidth]{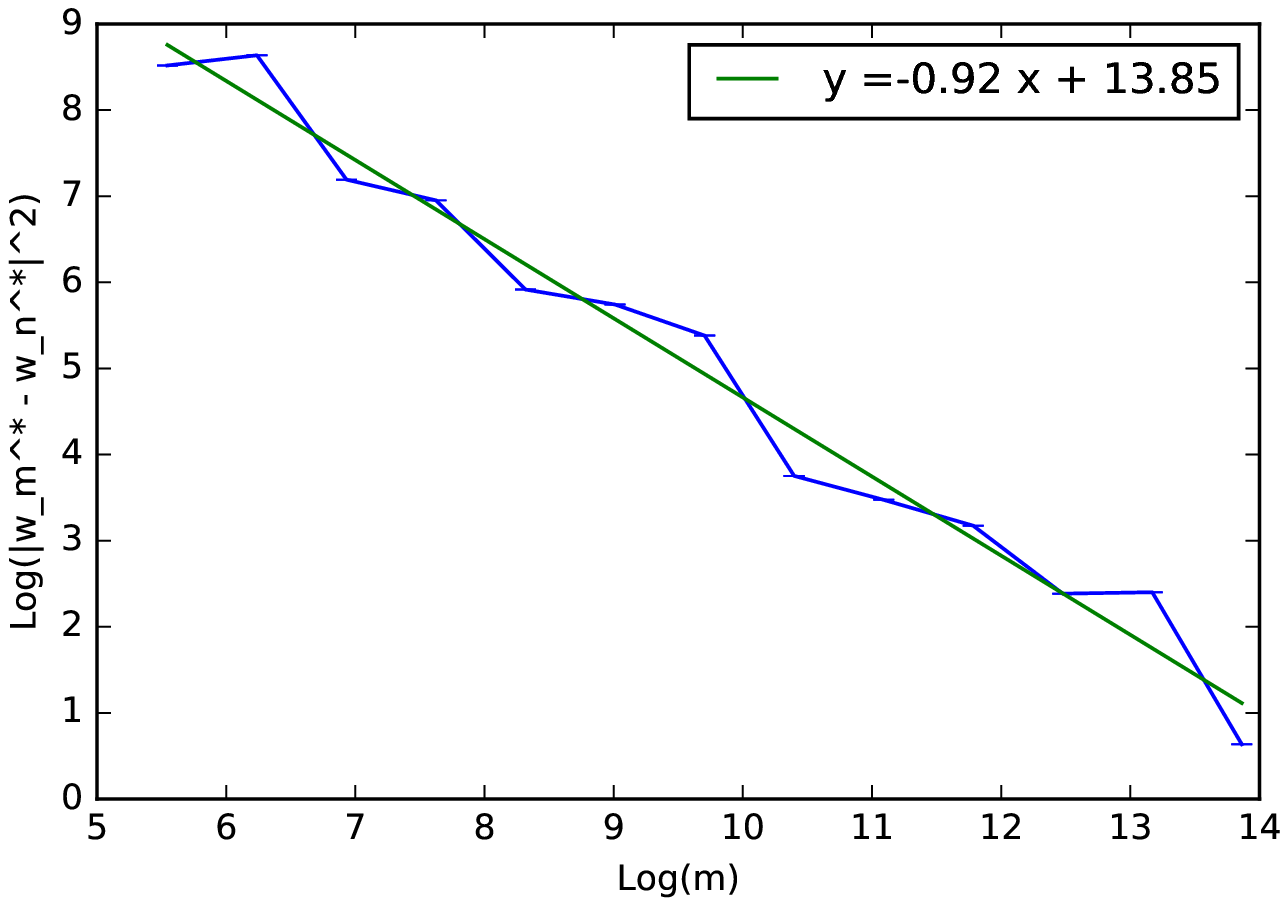} &
            \includegraphics[width=0.3\linewidth]{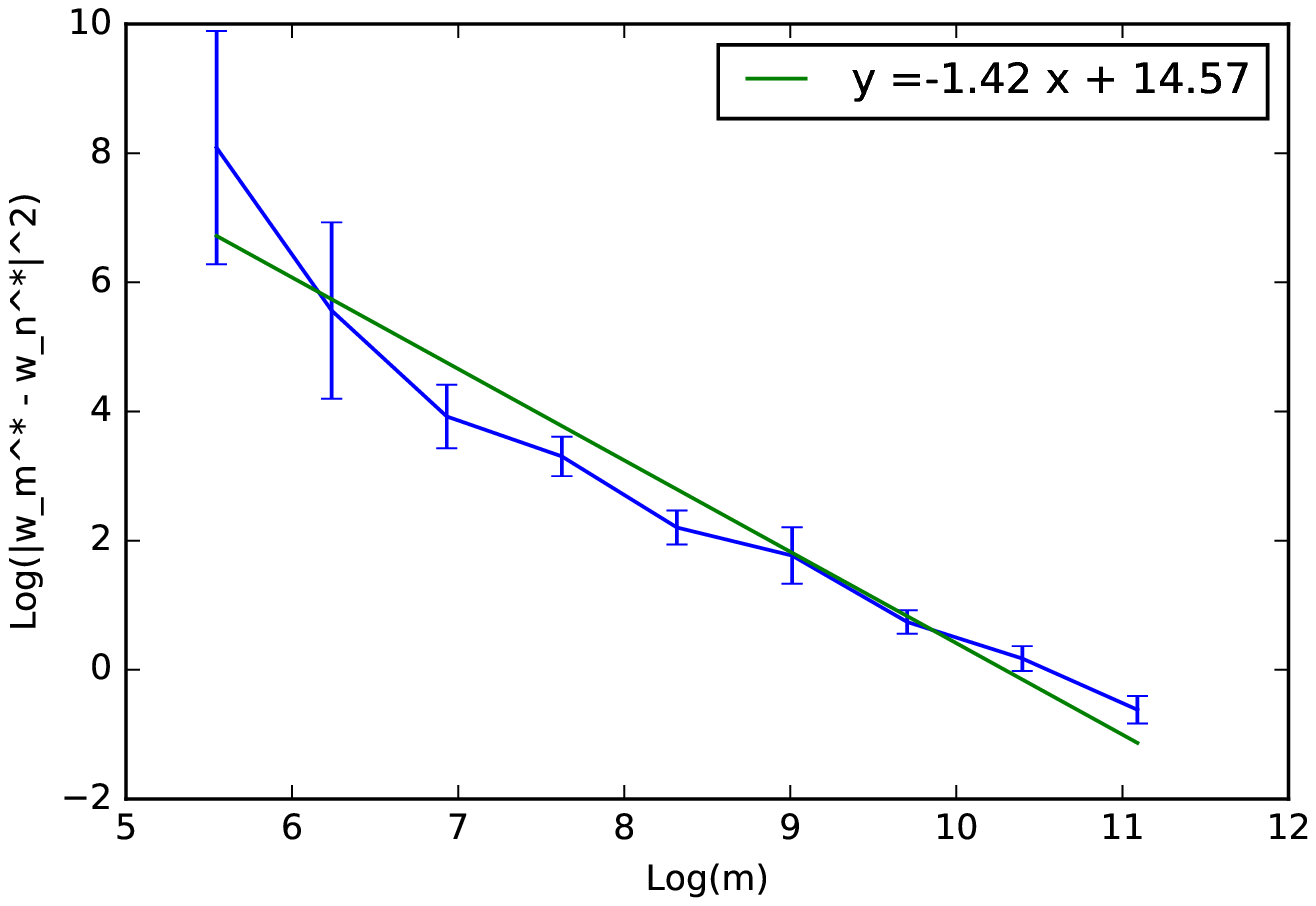} \\
            1. {\sc a9a}  &
            2. {\sc covtype} &
            3. {\sc ijcnn1} 
	  \end{tabular}
          \caption{{\it Sampling test}. We show the suboptimality as we increase the sample size. The results averaged over 10 runs confirm that the rate is close to $1/m$.}
          \label{fig:sampling}
	\end{center}
\end{figure*}
}

\newpage
\subsection{Details of Experiments}

The various parameters of all baselines and \methodname~are represented in
Table~\ref{table:exp_set}.
 \begin{table}[h]
\caption{Experimental setting}
\label{table:exp_set}
\vskip 0.15in
\begin{center}
\begin{small}
\begin{sc}
\begin{tabular}{llcc} 
\hline
\abovespace\belowspace
Method & Parameter & Notation & Value 
\\
\hline
\abovespace
SGD & step size &  $\eta_t$ & $\frac{0.1}{0.1 + \mu t }$\\ 
SAGA  & step size &  $\eta$ & $\frac{0.3}{L + \mu n}$\\
SSVRG and SGD/SVRG& factor for increasing sample size & $b$  & $3$ \\ 
				 & a constant parameter & $p$ & $2$ \\ 
				& step size & $\eta$ & $\frac{1}{10 b^{p}}$ \\ 
				 & initial batch size & $k_0$ & $\kappa$ \\ 
				& number of steps on each batch size & $m$ & $\frac{\kappa}{\eta}$  \\ 

SGD:$0.05$  & step size & $\eta$ & $0.05$ \\ 
SGD:$0.005$  & step size & $\eta$ & $0.005$ \\ 
\methodname & step size for sample size $m$ & $\eta(m)$ & $\frac{0.3}{L + \mu
m}$
\\ 
  & initial batch size & $k_0$& $\kappa$ \\
  & number of iterations on sample size $m$ & $t(m)$ & 2 \\
\belowspace
\\
\hline
\end{tabular}
\end{sc}
\end{small}
\end{center}
\vskip -0.1in
\end{table}

\end{document}